\theoremstyle{plain}
\newtheorem{theorem}{Theorem}[section]
\newtheorem{lemma}[theorem]{Lemma}
\theoremstyle{definition}
\theoremstyle{remark}
\DeclareMathOperator{\diag}{diag}
\title{Curvature Tuning: Provable Training-free Model Steering From a Single Parameter}
\author{%
  Leyang Hu\thanks{These authors contributed equally to this work.} \\
  Brown University\\
  \texttt{leyang\_hu@brown.edu} \\
  \And
  Matteo Gamba\footnotemark[1] \\
  KTH Royal Institute of Technology\\
  \texttt{mgamba@kth.se} \\
  \And
  Randall Balestriero \\
  Brown University \\
  \texttt{randall\_balestriero@brown.edu} \\
}
\begin{document}

\maketitle

\begin{abstract}
The scaling of model and data sizes has reshaped the AI landscape, establishing finetuning pretrained models as the standard paradigm for solving downstream tasks. However, dominant finetuning methods typically rely on weight adaptation, often lack interpretability, and depend on heuristically chosen hyperparameters.  In this paper, we take a different perspective and shift the focus from weights to activation functions, viewing them through the lens of spline operators. We propose Curvature Tuning (CT), an interpretable and principled steering method that modulates a model's decision boundary by injecting a single hyperparameter into its activation functions. We show that CT provably adjusts model decision boundary curvature and, more fundamentally, projects a model onto a space of smooth functions---thereby complementing current finetuning methods, whose effect lies primarily in feature adaptation. Making this hyperparameter trainable gives rise to a novel and highly parameter-efficient finetuning method. Empirically, CT improves both generalization and robustness. For example, it boosts downstream accuracy of ResNet-50/152 by 8.59\%/8.34\% over linear probing and 4.64\%/1.70\% over LoRA across 12 datasets, and improves robust accuracy on the $\ell_{\infty}$ benchmark from RobustBench by 1032.64\%/1494.46\%. Our code is available at \href{https://github.com/Leon-Leyang/curvature-tuning}{https://github.com/Leon-Leyang/curvature-tuning}.
\end{abstract}

\section{Introduction}
\label{sec:introduction}
\setcounter{footnote}{0}

The scaling of model and data sizes has given rise to foundation models, such as Llama3 \citep{dubey2024llama} for natural language processing (NLP), DINOv2 \citep{oquab2023dinov2} for computer vision (CV), CLIP \citep{radford2021clip} and SigLIP \citep{zhai2023siglip} for multimodal tasks, and OpenVLA \citep{kim2024openvla} for embodied agents. These models have shown remarkable capabilities, accelerating a paradigm shift in artificial intelligence: transitioning from training task-specific models from scratch to leveraging models pretrained on large datasets and finetuning them for downstream applications.

\begin{figure}[t]
\begin{center}
\centerline{\includegraphics[width=.92\linewidth]{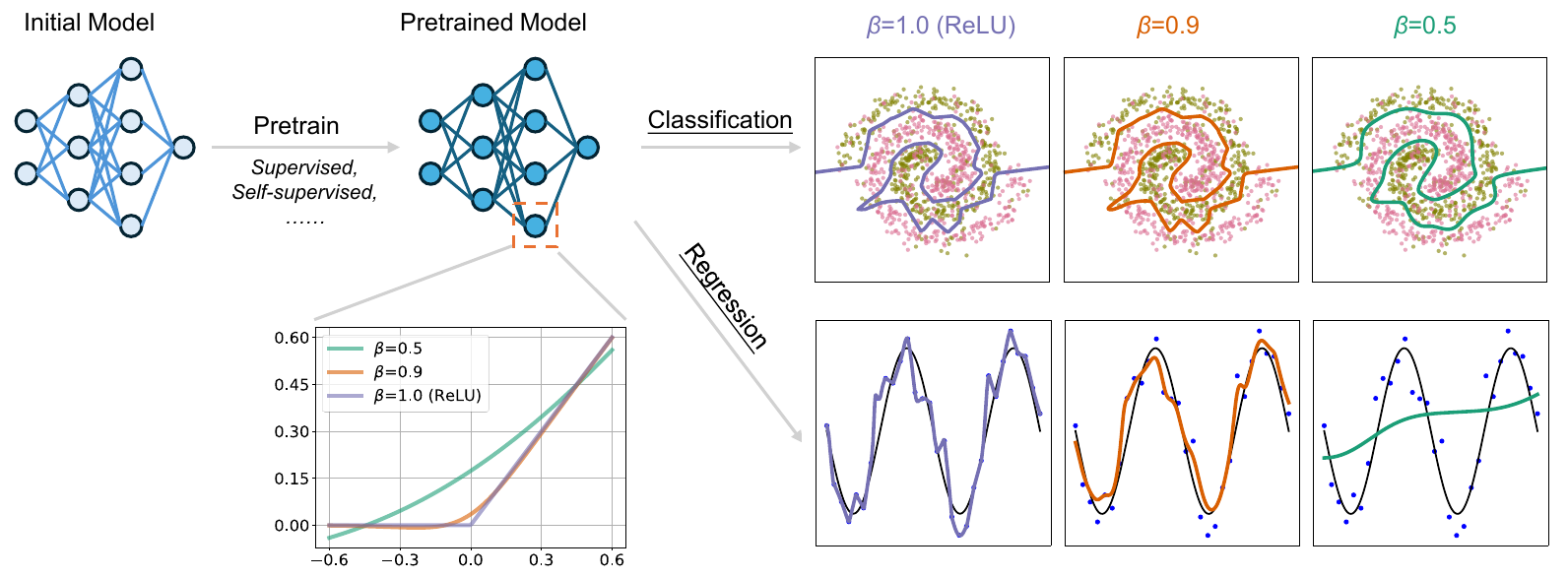}}
\caption{Illustration of Curvature Tuning (CT) on classification (top) and regression (bottom) tasks. The pretrained model for classification is a 3-layer MLP with hidden width 20 trained for 2000 steps; for regression, it is a 9-layer MLP with hidden width 64 trained for 20000 steps. \textbf{CT steers a pretrained model by replacing ReLUs with a $\beta$-parameterized activation function and tuning $\beta$ from 1 to 0, effectively modulating the model’s decision boundary curvature.}}
\label{fig:CT}
\end{center}
\end{figure}
\vspace{-10pt}

Full finetuning, the process of steering\footnote{In this paper, we use \textit{steering} as a general term for tuning a model, including both training-based and non-training-based methods. We use \textit{finetuning} to refer specifically to steering methods that adapt the model’s parameters via training.} a pretrained model by adapting all its parameters to downstream datasets, was
once the primary approach for transferring knowledge. While it effectively enhances generalization~\citep{radford2018gpt}
and robustness~\citep{jeddi2020finetune4robustness}, it is computationally expensive at large model scales. To mitigate this, parameter-efficient finetuning (PEFT) methods such as Serial Adapter~\citep{houlsby2019serialadapter} and LoRA~\citep{hu2021lora} have been introduced, which finetune only a small subset of parameters. However, these approaches usually lack interpretability and principled design. For instance, they treat the model as a black box, making it unclear how the model is steered for downstream tasks. Consequently, they usually rely on heuristic choices---such as LoRA’s rank, placement, and initialization---with minimal theoretical guidance. This leads to a natural question: {\em how can we construct principled steering solutions addressing both efficiency and interpretability?}

In this work, we answer the question by introducing a novel model steering perspective. We observe that despite differences in specific forms, existing finetuning methods all share a focus on adapting model weights---either by introducing new ones or updating existing ones. However, one critical component of the model has been largely overlooked: the activation functions (e.g., ReLU), which are responsible for the model’s nonlinearity and, ultimately, its expressivity~\citep{hornik1989multilayer,cybenko1989approximation}.

Grounded in the spline interpretation of deep networks \citep{montufar2014numoflinreg, balestriero2018spline}, we propose \textbf{Curvature Tuning (CT)}, a steering method (denoted as S-CT) that provably modulates a model's decision boundary curvature by injecting a single hyperparameter $\beta$ into the activation function, as shown in \cref{fig:CT}. Additionally, allowing $\beta$ to be trained leads to a novel finetuning method (denoted as T-CT). We highlight four key advantages of CT below:

\textbf{CT is more interpretable and principled.} We show that CT provably modulates the curvature of the model’s decision boundary with as few as only one hyperparameter.

\textbf{CT complements current finetuning methods.} More essentially, while current finetuning methods adapt features, CT projects the model to a space of smooth functions.

\textbf{CT is highly parameter-efficient.} As a steering method, S-CT introduces only one (hyper)parameter per network. As a finetuning method, T-CT still uses significantly fewer parameters than LoRA with rank one, requiring only 0.58\% to 59.09\% of the parameters used by LoRA in our experiments.

\textbf{CT improves both generalization and robustness.} For example, T-CT boosts transfer accuracy of ResNet-50/152 by 8.59\%/8.34\% over linear probing and 4.64\%/1.70\% over LoRA with rank one across 12 downstream datasets. S-CT improves robust accuracy of ResNet-50/152 by 1032.64\%/1494.46\% on the $\ell_{\infty}$ benchmark from RobustBench.

In summary, our key contributions are both theoretical and empirical. \textbf{Theoretically}, we propose CT and show that it provably modulates the model’s decision boundary curvature, by projecting the model onto a space of smooth functions. \textbf{Empirically}, we introduce S-CT as a steering method and T-CT as a finetuning method, demonstrating improved generalization across six models and 12 downstream datasets, as well as robustness gains on the RobustBench benchmark~\citep{croce2020robustbench}.

The remainder of this paper is organized as follows: \cref{sec:background} reviews current finetuning techniques and introduces relevant spline concepts, the foundation for our method. \cref{sec:method} details our proposed method and its theoretical guarantees. \cref{sec:exp} presents experimental results, and \cref{sec:conclusion} summarizes our findings and potential future directions.

\section{Background}\label{sec:background}
This section first reviews current finetuning techniques and their limitations (\cref{sec:finetune}), then introduces spline theory and its connections to deep networks as the foundation for CT (\cref{sec:spline}).

\subsection{Current finetuning techniques and limitations}\label{sec:finetune}
Finetuning refers to steering a pretrained model to improve its performance on downstream tasks through training. Initially, the common practice was to continue training all model parameters---a process known as \textit{full finetuning}. Notable examples include GPT \citep{radford2018gpt} and DINO \citep{caron2021dino}. However, as model sizes have grown, full finetuning has become increasingly costly and often impractical, particularly when downstream datasets are small. Given these challenges, \textit{parameter-efficient finetuning (PEFT)} methods were developed to mitigate the cost while maintaining effectiveness.

We follow the categorization of Han et al.~\citep{han2024peft}, which groups PEFT methods into four main categories. \textbf{Additive PEFT} introduces additional trainable parameters to the pretrained model, training only these new parameters during finetuning. Examples include Serial Adapter \citep{houlsby2019serialadapter}, Prefix-tuning \citep{li2021prefixtuning}, (IA)$^3$ \citep{liu2022ia3}, and RoAd \citep{Liao2024RoAd}. \textbf{Selective PEFT} identifies a subset of existing parameters for finetuning, with examples such as U-Diff pruning and S-Diff pruning \citep{guo2020diffpruning}. \textbf{Reparameterized PEFT} decomposes pretrained weights into low-rank matrices, finetuning only the low-rank components, which are converted back during inference; examples include LoRA \citep{hu2021lora} and DyLoRA \citep{valipour2022dylora}. \textbf{Hybrid PEFT} combines multiple PEFT strategies, such as UniPELT \citep{mao2021unipelt} and S4 \citep{chen2023s4}.

While PEFT methods differ in the parameters they update, they all adapt model weights and operate on learned features---an approach that often relies on heuristic tuning. For example, LoRA requires decisions about adapter placement~\citep{gao2024lorawhere}, rank~\citep{valipour2022dylora, chen2024autorank}, scaling~\citep{kalajdzievski2023lorascaling}, and initialization~\citep{hayou2024lorainit}. In contrast, as described in \cref{sec:method}, CT introduces only a single hyperparameter into the activation functions that provably modulates the decision boundary curvature, offering a more interpretable alternative that instead operates on the model’s underlying function space without changing model weights.

The problem of learning low-curvature activation functions has been explored in the context of pretraining, where \citet{srinivas2022efficient} exploit a parametric version of Softplus. Our paper differs in two substantial aspects. First, we introduce a more general parametric activation function (CTU) encompassing a wider family of non-linearities, which provides direct applicability to current network architectures and particularly transformers. Second, we are the first to propose modulating activation functions as a novel PEFT direction, allowing to control downstream generalization and robustness without changing the pre-trained model's learned features.

\subsection{The spline formulation of deep networks}\label{sec:spline}
In this subsection, we review relevant concepts in splines, which provide a mathematical framework for understanding the relationship between piecewise-affine functions and deep networks (DN).

A {\em spline function} is a continuous function $s: \mathbb{R}^D \rightarrow \mathbb{R}$ defined piecewise by polynomials. An {\em affine spline function} is a special case where each piece is defined by an affine mapping. Such a function can be parameterized by three components: a matrix $\mathbf{A} \in \mathbb{R}^{R \times D}$ representing the slopes of the affine mappings, a vector $\mathbf{b} \in \mathbb{R}^R$ representing the offsets, and a partition $\Omega \triangleq \{\omega_1, \dots, \omega_R\}$ of the input space $\mathbb{R}^D$ into $R$ regions. For an input $\mathbf{x} \in \mathbb{R}^D$, the affine spline function is defined as:
\begin{equation}
\label{eq:spline}
s[\mathbf{A}, \mathbf{b}, \Omega](\mathbf{x}) = \sum_{r=1}^R \big( \langle \mathbf{A}_{r,\cdot}, \mathbf{x} \rangle + \mathbf{b}_r \big) \mathbf{1}_{\{\mathbf{x}\in\omega_r\}},
\end{equation}
where $\mathbf{1}_{\{\mathbf{x}\in\omega_r\}}$ is an indicator function that equals 1 if $\mathbf{x}$ belongs to region $\omega_r$ and 0 otherwise.

A {\em max-affine spline function} is a special case of an affine spline function that does not need explicit knowledge of $\Omega$. Instead, its output is computed as the maximum value over the affine mappings:

\begin{equation}
s[\mathbf{A},\mathbf{b}](\mathbf{x}) = \max_{r=1 \dots R} \big( \langle \mathbf{A}_{r,\cdot}, \mathbf{x} \rangle + \mathbf{b}_r \big).\label{eq:maso}
\end{equation}

The key result underpinning our study is that many DN layers---such as fully connected and convolutional layers, and convex piecewise-affine activations (e.g., ReLU, max pooling, and maxout)---can be exactly represented as max-affine spline operators\footnote{Here, \textit{operator} simply refers to a mapping between vector spaces (i.e., a function $f: \mathbb{R}^D \to \mathbb{R}^K$) obtained by concatenating $K$ functions like $s$.}, and that DNs composed of such layers can be represented as affine spline operators \citep{balestriero2018spline} (further details in \cref{app:spline-theory}). 

Now that we have reviewed existing finetuning methods and their limitations, and introduced the necessary spline-based foundations, we proceed to present our proposed method in \cref{sec:method}.

\section{Curvature Tuning (CT): a provable method for model steering}\label{sec:method}
In this section, we introduce our proposed method, Curvature Tuning (CT). We begin by motivating the benefits of modulating decision boundaries as a model steering technique. Then, we dive into CT's construction in \cref{sec:beta-vq} with implementation details in \cref{sec:ct-implement}. Additional theoretical intuition is provided in \cref{sec:proof}. Extensive experimental validation is presented in \cref{sec:exp}.

\begin{figure}[t]
\centering
\begin{subfigure}{0.23\linewidth}
    \centering
    \includegraphics[width=\linewidth]{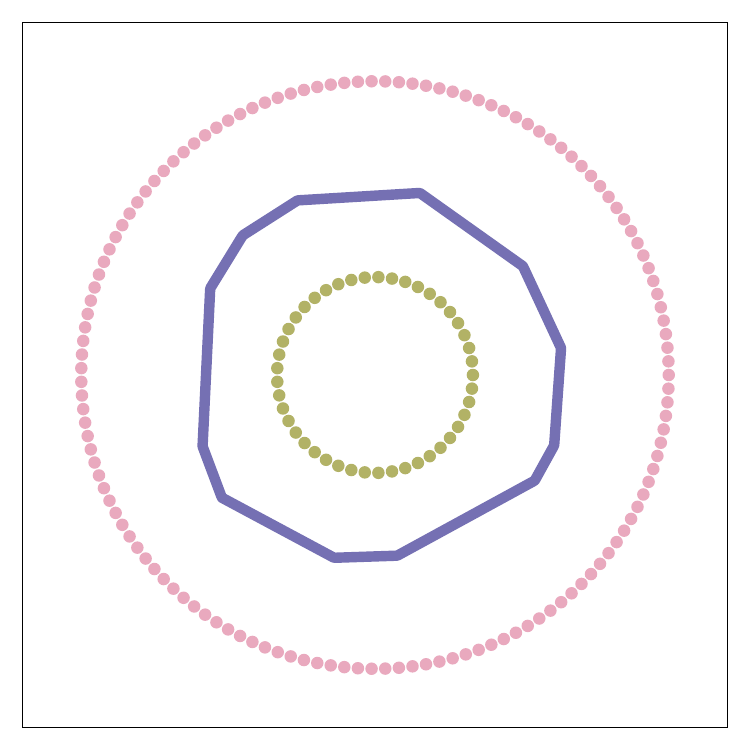}
    \caption{Pretrained}
    \label{fig:toy_baseline}
\end{subfigure}
\begin{subfigure}{0.23\linewidth}
    \centering
    \includegraphics[width=\linewidth]{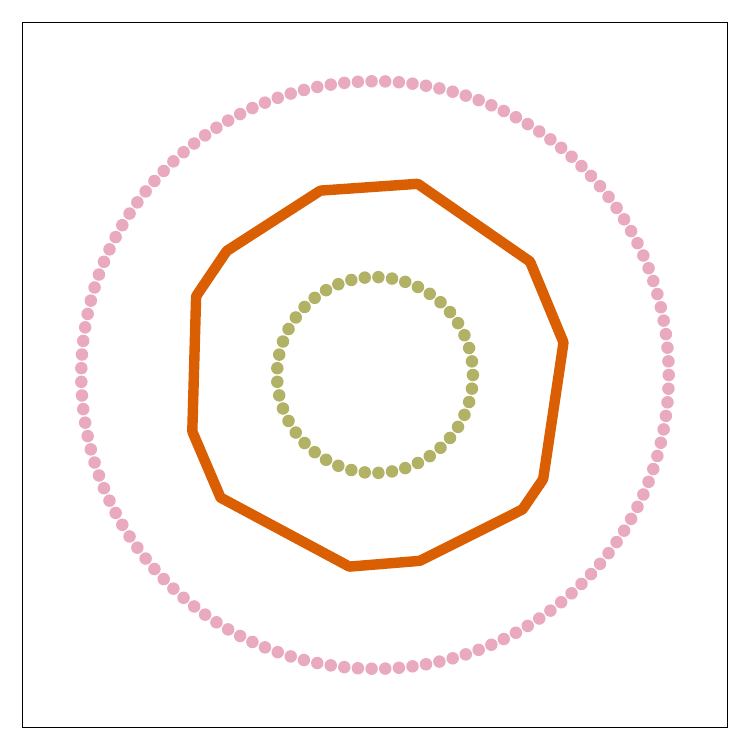}
    \caption{LoRA}
    \label{fig:toy_lora}
\end{subfigure}
\begin{subfigure}{0.23\linewidth}
    \centering
    \includegraphics[width=\linewidth]{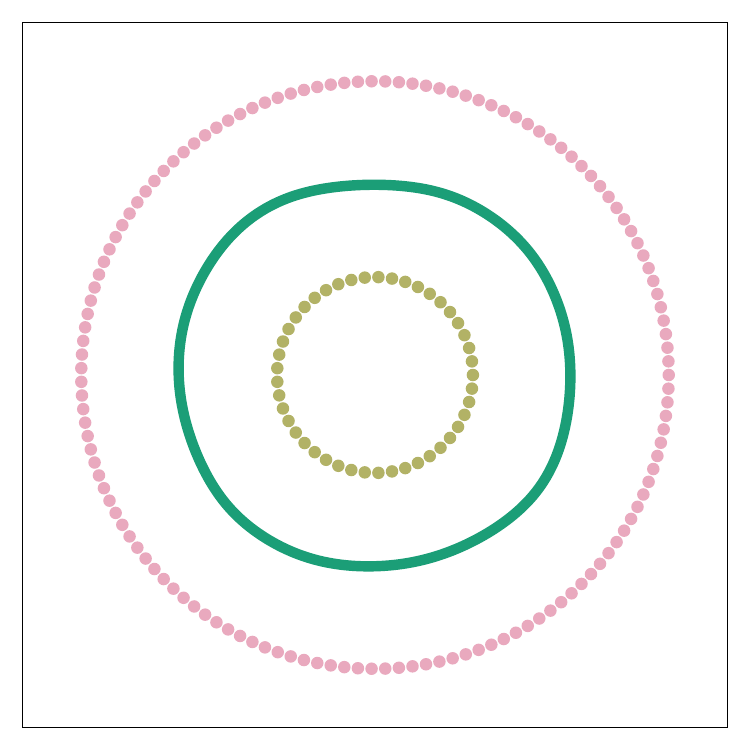}
    \caption{CT}
    \label{fig:toy_ct}
\end{subfigure}
\caption{Toy example illustrating how modulating model's activation functions steers decision boundaries curvature. The model is a 2-layer MLP with hidden width 7; (a) baseline pretrained for 4000 steps, then finetuned for another 4000 steps using (b) LoRA ($r=1$, $\alpha=1$) and (c) \textit{Trainable CT}. \textbf{CT achieves near-optimal approximation by smoothing the decision boundary of the pretrained model, whereas LoRA only operates on the model parameters, without changing the model's underlying geometry.}}
\label{fig:toy-binary}
\end{figure}

\textbf{Motivating example.}~~~~Consider a toy binary classification problem in $\mathbb{R}^2$, whereby the optimal decision boundary separating two classes is given by the unit circle $S^1 = \{ \mathbf{x} \in \mathbb{R}^2: \|\mathbf{x}\|_2 = 1\}$, parameterized by the curve $\bm{\gamma}: t \mapsto (\cos 2\pi t, \sin 2\pi t)$, for $t \in {[}0, 1{]}$. Let $\sigma(z) = \frac{\exp \left( z \right)}{1 + \exp\left( z \right)}$ be the sigmoid. By definition of decision boundary, an optimal network $f: \mathbb{R}^2 \to \mathbb{R}$ should predict both classes with equal probability $\sigma(f(\bm{\gamma}(t))) = 0.5, \forall t \in {[}0, 1{]} \iff f(\bm{\gamma}(t)) = 0, \forall t$. Focusing on the decision boundary, the approximation error $e$ is given by the line integral $e = \int_{\bm{\gamma}} |f(\mathbf{x})| d\mathbf{x} = \int_0^1 |f(\bm{\gamma}(t))|\|\bm{\gamma}'(t)\|dt$. For a ReLU network (\cref{eq:spline}), computing the error over the regions $\Omega_{\bm{\gamma}} = \Omega \cap S^1$ yields $e = 2\pi \sum\limits_{k = 0}^{r'} \left((-1)^{z_k(t_k)} \left[g_{r_k}(t)\right]_{t_k}^{s_k} + (-1)^{z_k(t_{k+1})} \left[g_{r_k}(t)\right]_{s_k}^{t_{k+1}} \right)$ for $r' = |\Omega_{\bm{\gamma}}|$, where $t_k$ are the spline breakpoints pulled-back from $\mathbb{R}^2$ to ${[}0,1{]}$, $r_k$ denotes which spline region the $k$-th segment $[t_k, t_{k+1}]$ falls into, $z_k(t) := \mathbf{1}_{\{\mathbf{A}_{r_k,\cdot}\bm{\gamma}(t) + \mathbf{b}_{r_k} < 0 \}}$, $s_k \in [t_k, t_{k+1}]$ is the value flip point of $z_k(t)$ and $g_{r_k}(t) = \mathbf{A}_{r_k,1}\frac{\sin 2\pi t}{2\pi} - \mathbf{A}_{r_k,2}\frac{\cos 2\pi t}{2\pi} + \mathbf{b}_{r_k}t$ (full derivations in \cref{sec:appendix:circle}). Assuming the network considered attains optimal approximation error $e > 0$, then it is clear that $t_{k+1} \to t_k \implies e \to 0$. The converse implication holds when the task is changed from classification to regressing the unit circle. In this case, reducing the approximation error by adapting the model's weights (either through PEFT or training a larger model from scratch) will still result in an affine spline operator, for which $e > 0$ and further reducing $e$ to 0 would require infinitely many neurons. This paper explores an orthogonal approach: by modulating the model's activation functions, one can efficiently control its curvature and, in turn, that of its decision boundaries, thereby steering them toward optimality---without modifying the model's weights. \cref{fig:toy-binary} illustrates this phenomenon: methods such as LoRA implicitly tune the spline slopes and breakpoints, whereas modulating model nonlinearities changes the model's underlying geometry.

\subsection{The $\beta$-Vector-Quantization (VQ) inference framework}\label{sec:beta-vq}
This section builds upon the max-affine spline formulation from \cref{eq:maso} to construct a model steering method operating on the model's activation functions. By inspecting \cref{eq:maso}, we observe that the mapping remains affine within each (implicitly defined) region where the pointwise maximum does not change. Specifically, for any input $\mathbf{x}$ where $\arg\max_{r=1 \dots R} \big( \langle \mathbf{A}_{r,\cdot}, \mathbf{x} \rangle + \mathbf{b}_r \big)$ remains constant, all such inputs belong to the same region, as they share the same affine mapping. The nonlinearity of the function arises when transitioning between these regions. 


{\bf Smoothing the nonlinearity by smoothing the spline region assignment process.}~Instead of going from one affine mapping to another in an abrupt fashion (whenever crossing that hyperplane), one may consider a smoother transition. There are two common practices to achieve that goal. 

We know that each unit of a layer is a max-affine spline. The inference process of each unit can thus be decomposed into two steps:\\
\textbf{1. VQ Inference Step (region selection)}: Determine the affine transformation that maximizes the output, which can be viewed as a VQ process. The decision is encoded in a one-hot selection variable $\mathbf{t} \in \mathbb{R}^R$, where $R$ is the number of input region partitions of the max-affine spline function, and the $r^*$-th entry is set to 1, where:
    \begin{equation}
        r^* = \arg\max_{r \in \{1, \dots, R\}} \big( \langle \mathbf{A}_{r,\cdot}, \mathbf{x} \rangle + \mathbf{b}_r \big).
    \end{equation}
\textbf{2. Computation Step (affine transformation)}: Compute the output of the neuron based on the selection variable \( \mathbf{t} \):
    \begin{equation}\label{eq:computation}
        f(\mathbf{x}) = \sum_{r=1}^R \mathbf{t}_r \cdot \big( \langle \mathbf{A}_{r,\cdot}, \mathbf{x} \rangle + \mathbf{b}_r \big).
    \end{equation}
As discussed, the affine transformation is selected in a \textit{hard} manner, where only the transformation that maximizes the output is chosen. Alternatively, a \textit{soft} selection can be employed, in which the selection variable $\mathbf{t}$ is no longer a one-hot vector but is inferred probabilistically. To formalize this, we follow the probabilistic formulation from \citep{balestriero2018hard} and introduce the following regularized region selection problem, where the new selection variable $\mathbf{t}^{\beta}$ is computed as below:
\begin{equation}\label{eq:svq}
    \mathbf{t}^{\beta} = \arg\max_{\mathbf{t} \in \Delta_R} \left[ \beta \sum_{r=1}^R \mathbf{t}_r \cdot \left( \langle \mathbf{A}_{r,\cdot}, \mathbf{x} \rangle + \mathbf{b}_r \right) + (1-\beta) H(\mathbf{t}) \right],
\end{equation}
where $H(\mathbf{t})$ denotes the Shannon entropy of the selection variable, and $\Delta_R$ is the probability simplex in $\mathbb{R}^R$. The optimal solution $\mathbf{t}^\beta$ has the closed-form:
\begin{equation}\label{eq:softmax}
    \mathbf{t}^\beta_r = \frac{\exp\left( \frac{\beta \left( \langle \mathbf{A}_{r,\cdot}, \mathbf{x} \rangle + \mathbf{b}_r \right)}{1 - \beta} \right)}{\sum_{i=1}^R \exp\left( \frac{\beta \left( \langle \mathbf{A}_{i,\cdot}, \mathbf{x} \rangle + \mathbf{b}_i \right)}{1 - \beta} \right)} \quad \text{for } r = 1, \ldots, R.
\end{equation}

Using the computation step in \cref{eq:computation} and a ReLU activation function, switching from $\beta = 1$ to $\beta = 0.5$ is provably equivalent to replacing ReLU with the Sigmoid Linear Unit (SiLU). In the limit as $\beta \to 0$, the activation function becomes linear---thus making the entire input-output mapping of the network linear as well. \citep{balestriero2018hard}



{\bf Smoothing the nonlinearity by smoothing the max.}~As previously mentioned, there is an alternative way to smooth the max-affine spline mapping from \cref{eq:maso}. Instead of relying on a soft region assignment, we can instead directly smooth the maximum function. It is already well known that smoothing the maximum operator leads to the log-sum-exp operator (i.e. Softplus). Hence, the mapping from \cref{eq:maso} now becomes
\begin{equation}
     (1 - \beta) \ln\left[ \sum_{r=1}^{R} \exp\left( \frac{\langle \mathbf{A}_{r,\cdot}, \mathbf{x} \rangle + \mathbf{b}_r}{1 - \beta} \right) \right]
,\label{eq:lse}
\end{equation}
where we parameterized the mapping so that its behavior is akin to \cref{eq:svq}, a value of $\beta \to 1$ recovers the original affine spline activation, e.g., ReLU.

The crucial observation we make is that both parameterizations tend to shift the mean of the output of the unit either by a negative factor (for \cref{eq:svq}) or a positive factor (for \cref{eq:lse}). This means that in very deep models, varying $\beta$ with either parameterization produces a shift in the decision boundary or regression that cannot be recovered unless the parameters are trained once again, which we are trying to avoid. As a result, as detailed in \cref{sec:ct-implement}, our implementation combines the two parameterizations in a weighted manner to mitigate this bias, as \cref{fig:examples} illustrates.

\subsection{Implementation of CT}\label{sec:ct-implement}
We begin by presenting the core activation that gives CT its expressive power---referred to as \textbf{CT Unit (CTU)}. The activation is obtained by combining the two parameterizations discussed in \cref{sec:beta-vq}, in order to mitigate the mean shift introduced by each parameterization individually:
\begin{equation}\label{eq:CT}
    \varphi_{\beta,c}(\mathbf{x}) = c \cdot \sigma\left(\frac{\beta \mathbf{x}}{1 - \beta}\right) \cdot \mathbf{x} + (1 - c) \cdot \ln\left[1 + \exp\left(\frac{\mathbf{x}}{1 - \beta}\right)\right] \cdot (1 - \beta),
\end{equation}

where $\beta \in \left[0, 1\right]$\footnote{In practice, for numerical stability we use $\eta = \frac{\beta}{1 - \beta + \varepsilon}$ and $\gamma = \frac{1}{1 - \beta + \varepsilon}$, where $\varepsilon = 10^{-6}$ allows the method to remain well-defined at $\beta=1$.} modulates the curvature, $c \in \left[0, 1\right]$ is the mixing coefficient, and $\sigma(\cdot)$ denotes the sigmoid function. This is essentially a convex combination of reparameterized SiLU and Softplus:
\begin{equation}\label{eq:reparam-activations}
\text{SiLU}(\mathbf{x}) = \sigma(\eta \mathbf{x}) \cdot \mathbf{x}, \ \eta = \frac{\beta}{1 - \beta};\quad 
\text{Softplus}(\mathbf{x}) = \frac{1}{\gamma} \cdot \ln\left[1 + \exp\left(\gamma \mathbf{x}\right)\right],\ \gamma = \frac{1}{1 - \beta}.
\end{equation}
It is worth noting that CTU naturally encompasses a broad family of functions. Particularly, it recovers SiLU ($c=1$), Softplus ($c=0$), as well as closely approximates GELU ($c=1$ and $\beta=0.64$). 

\textbf{Steering vs Trainable CT}. We provide two implementations of CT differing in how CTU is applied. The first, denoted \textit{Steering CT} (S-CT), replaces all ReLUs in the network with CTUs using a fixed $c = 0.5$ and a shared $\beta \in [0, 1]$. This version is highly parameter-efficient---introducing only a single hyperparameter---and does not require backpropagation, making it suitable as a steering method.

The second, referred to as \textit{Trainable CT} (T-CT), also replaces all ReLUs with CTUs but assigns each output neuron its own trainable pair $(\beta, c)$, optimized via backpropagation. This version serves as a finetuning method: while it introduces additional parameters, the increase is modest compared to methods like LoRA and it yields competitive performance, as shown in \cref{sec:exp-train-ct}. Code for both implementations is provided in \cref{app:ct-code}.

\subsection{Curvature Tuning operates as a projection}\label{sec:proof}
This section provides a characterization of CT, by casting it as a projection of a ReLU network to a space of smooth functions. All proofs are deferred to \cref{sec:appendix:sobolev}.

\begin{theorem}[Informal]\label{thm:informal}
For a ReLU network $f: \mathbb{R}^D \to \mathbb{R}$ with parameter $\mathbf{W}$ (collecting all weights and biases), for fixed $c \in {[}0, 1{]}$ and $\beta \in {[}0,1)$, replacing every instance of ReLU with a CTU (\cref{eq:CT} with hyperparameters $\beta,c$ is equivalent to projecting $f$ to a smooth function $f_{\beta,c}$ 
with bounded gradients and curvature, while keeping $\mathbf{W}$ fixed. Importantly, for $0 < \beta < 1$, $f_{\beta,c}$ enjoys higher local expressivity than $f$ for the same parameter $\mathbf{W}$, due to non-vanishing local curvature.
\end{theorem}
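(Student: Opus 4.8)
The plan is to verify the three analytic properties (smoothness, bounded gradient, bounded curvature) at the level of a single CTU and then lift them to the full network by composition, and finally to argue the projection and expressivity claims. First I would observe that the CTU $\varphi_{\beta,c}$ in \cref{eq:CT} is a convex combination of a reparameterized SiLU and a reparameterized Softplus, each built from $\sigma$, $\exp$, $\ln(1+\exp(\cdot))$ and polynomials; for every fixed $\beta\in[0,1)$ all of these are $C^\infty$ on $\mathbb{R}$ (the denominators $1-\beta$ are strictly positive), so $\varphi_{\beta,c}\in C^\infty(\mathbb{R})$. Since $f_{\beta,c}$ is obtained from $f$ by replacing each ReLU layer with the componentwise map $\varphi_{\beta,c}$ while keeping the affine layers fixed, $f_{\beta,c}$ is a finite composition of affine maps and $C^\infty$ maps, hence $f_{\beta,c}\in C^\infty(\mathbb{R}^D)$. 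This already separates $f_{\beta,c}$ from the piecewise-affine $f$, which is only Lipschitz and whose distributional second derivative is a measure supported on the region boundaries.

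Next I would bound the first two derivatives of the scalar CTU for fixed $\beta$. The Softplus branch contributes $\sigma(x/(1-\beta))\in[0,1]$ with second derivative bounded by $1/(4(1-\beta))$, while the SiLU branch $\sigma(\eta x)\,x$ has derivative $\sigma(\eta x)+\eta x\,\sigma(\eta x)(1-\sigma(\eta x))$, which stays bounded because $\sigma(1-\sigma)$ decays exponentially and dominates the linear factor $x$; the same argument bounds its second derivative. Hence there are finite constants $L_1(\beta),L_2(\beta)$ with $\sup_x|\varphi_{\beta,c}'(x)|\le L_1$ and $\sup_x|\varphi_{\beta,c}''(x)|\le L_2$. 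Propagating through the network, $\nabla f_{\beta,c}$ is, by the chain rule, a product of the fixed weight matrices and diagonal activation Jacobians with entries $\varphi'_{\beta,c}$, so $\|\nabla f_{\beta,c}\|\le L_1^{L}\prod_l\|\mathbf{W}^{(l)}\|$; differentiating once more and collecting terms via Fa\`a di Bruno, the Hessian is a finite sum of products of fixed weights and bounded factors $\varphi'_{\beta,c},\varphi''_{\beta,c}$, yielding a global bound on $\|\nabla^2 f_{\beta,c}\|$. This establishes the bounded-gradient and bounded-curvature claims.

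The subtler claim is that activation replacement realizes a \emph{projection} onto a space of smooth functions. I would make the target space precise as a Sobolev-type space $\mathcal{S}$ of functions with uniformly bounded gradient and Hessian (e.g.\ a $W^{2,\infty}$ ball with radii from the previous step), and note $f_{\beta,c}\in\mathcal{S}$ whereas a generic ReLU network $f\notin\mathcal{S}$. To justify the word projection I would lift the variational characterization already available at the unit level: by \cref{eq:svq} the soft region-assignment $\mathbf{t}^\beta$ is the maximizer of an entropy-regularized linear objective over $\Delta_R$, i.e.\ a Bregman (KL) projection of the hard one-hot selection onto $\mathrm{relint}\,\Delta_R$, and the log-sum-exp smoothing in \cref{eq:lse} is its convex-duality counterpart. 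I would then argue that $\varphi_{\beta,c}$, being a fixed convex combination of the outputs of these two regularized problems, is the pointwise image of the ReLU max-affine spline \cref{eq:maso} under a single Bregman-type smoothing operator, and that composing these unit-level operators yields the network-level map $f\mapsto f_{\beta,c}$. The main obstacle is precisely here: turning this layerwise Bregman/best-approximation picture into a genuine projection statement in function space (idempotence, or nearest-point optimality under the $\mathcal{S}$-metric), since activation replacement is not manifestly a metric projection. I expect to handle it either by defining the projection through the regularized objective itself (so that $f_{\beta,c}$ is its unique optimizer) or by verifying idempotence on $\mathcal{S}$ in the appropriate limiting sense, and I would treat this as the technical heart of the appendix argument.

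Finally, for the expressivity claim I would contrast local curvature. On the interior of any linear region the ReLU network $f$ is exactly affine, so $\nabla^2 f=0$ there and its local Taylor expansion carries no quadratic term. In contrast, for $0<\beta<1$ the scalar CTU satisfies $\varphi_{\beta,c}''(x)>0$ on a set of positive measure (both branches are strictly convex away from saturation), so the componentwise activation has a nonzero second derivative and the chain-rule expression for $\nabla^2 f_{\beta,c}$ contains terms proportional to $\varphi_{\beta,c}''$ that do not cancel generically. Hence $f_{\beta,c}$ has non-vanishing local curvature for the same weights $\mathbf{W}$ for which $f$ is locally affine, so its second-order Taylor model can fit curved targets that no affine piece represents; this is the sense in which $f_{\beta,c}$ enjoys strictly higher local expressivity, while letting $\beta\to1$ recovers $f$ and collapses the curvature back to zero, consistent with \cref{eq:CT}.
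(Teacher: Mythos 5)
Your proposal is correct and its skeleton matches the paper's appendix proof (Theorem~\ref{thm:formal} and Lemma~\ref{thm:lemma}): smoothness of $\varphi_{\beta,c}$ for fixed $\beta\in[0,1)$ lifts to the whole network by composition with the fixed affine layers; the scalar derivative is bounded by combining the Softplus bound ($\varphi'_{\beta,0}=\sigma(x/(1-\beta))\in[0,1]$) with the SiLU bound (the paper makes your constant explicit as $-c\overline{h}_b\le\varphi'_{\beta,c}\le 1+c\overline{h}_b$ with $\overline{h}_b=\max_{bx\ge 0}bx\,\sigma(bx)(1-\sigma(bx))$); the gradient bound then follows from the same chain-rule product of weight matrices and diagonal activation Jacobians; and the expressivity claim is exactly your comparison with the a.e.-vanishing Hessian of the piecewise-affine $f$. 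There are two real differences worth noting. First, the paper formalizes everything in $W^{2,2}(\Omega)$ with $\Omega=\mathcal{D}$ a \emph{finite} dataset, so its second-order bound is immediate---a maximum over finitely many Hessian norms---and it never needs a bound on $\varphi''_{\beta,c}$; you instead prove a uniform bound on $\varphi''_{\beta,c}$ and propagate it through Fa\`a di Bruno, which buys a stronger, domain-independent ($W^{2,\infty}$-type) curvature bound at the cost of extra work. Second, on the point you call the technical heart: the paper's entire justification of the word ``projection'' is one line---replacing every ReLU with $\varphi_{\beta,c}$ is \emph{idempotent} (applying the replacement twice equals applying it once), together with the observation that the image lands in the strict-sense Sobolev subspace $W^{2,2}_{\text{str}}(\Omega)\subset W^{2,2}(\Omega)$. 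No nearest-point or metric-projection optimality is claimed, so your Bregman/KL variational construction, while a reasonable reading of \cref{eq:svq}, is not needed and is not what the paper proves; your fallback option (idempotence) is precisely the paper's argument, and you should simply commit to it. One minor imprecision: your statement that both branches are ``strictly convex away from saturation'' is not accurate for SiLU, whose second derivative changes sign (it is convex near the origin and concave for large $|x|$); but the expressivity claim only needs $\varphi''_{\beta,c}\not\equiv 0$ against the a.e.-zero Hessian of $f$, which is also how the paper states it, namely $\|\nabla^2_{\mathbf{x}}f\|\le\|\nabla^2_{\mathbf{x}}f_{\beta,c}\|$ holding trivially because the left-hand side vanishes almost everywhere.
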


To conclude, we observe how varying $\beta$ modulates the curvature of the whole model function $f$ and, in turn, of the model's decision boundaries. We begin by noting that for a deep network $f: \mathbb{R}^D \to \mathbb{R}^K$, the decision boundary between any class $i$ and $j$ is given by $\{ \mathbf{x} \in \mathbb{R}^D : g(\mathbf{x}) := f_i(\mathbf{x}) - f_j(\mathbf{x}) = 0\}$, for any $i, j = 1, \ldots, K$ with $i \ne j$. Particularly, $g$ is itself a deep network, sharing the same parameters as $f$ up until the penultimate layer, after which the parameters are the vector $W_i^L - W_j^L$ and the bias $\mathbf{b}^L_i - \mathbf{b}^L_j$. Importantly, \textit{when varying $\beta$ while keeping all model parameters fixed}, the Jacobian $\nabla_\mathbf{x} g(\mathbf{x})$ and the Hessian $\nabla^2_\mathbf{x}g(\mathbf{x})$ are respectively given by the gradients and Hessian of $\mathbf{z}^{L-1}(\mathbf{x})$ -- corresponding to the post-activation output of the $L-1$-th layer---weighted by $W^L_i - W^L_j$. Hence, modulating the nonlinearity of activation functions via $\beta$ directly controls the curvature of both model function and its decision boundaries.\footnote{In the following, unless specified, we will thus refer interchangeably to the curvature of a DN mapping and that of its decision boundaries whenever modulating nonlinearities via CT.}


Particularly, for $c = 1$ (\cref{eq:CT}), as $\beta \to 0$, the activation becomes linear. Since modern DNs (e.g. MLP, CNN, RNN) are composed of activation functions interleaved with affine layers, it follows directly that the entire input-output mapping becomes affine when $\beta \to 0$. In this setting, the curvature of the mapping---defined as the norm of its Hessian---becomes zero. As a result, transitioning from the original DN mapping ($\beta=1$) to the linear setting effectively modulates the network decision boundary curvature, reducing it continuously to zero in the limit. For $c < 1$, as $\beta \to 0$, the model retains non-vanishing local curvature, while the mapping becomes smooth.


\section{Enhancing model generalization and robustness with CT}\label{sec:exp}

In this section, we empirically validate the effectiveness of CT across multiple settings. We first demonstrate that S-CT improves generalization (\cref{sec:exp-gen}), while T-CT achieves improvement comparable to LoRA with substantially fewer parameters (\cref{sec:exp-train-ct}). Then we show that CT can improve the robustness of models through its implicit bias (\cref{sec:exp-rob}). Finally, we demonstrate CT's effectiveness on transformers despite only partial guarantees (\cref{sec:exp-transformer}). GPU and seed details are provided in \cref{app:exp}.

\subsection{Improving generalization on downstream datasets with S-CT}\label{sec:exp-gen}
In this subsection, we evaluate the effectiveness of S-CT in improving model generalization on a variety of downstream datasets. Specifically, we transfer ImageNet-pretrained ResNet-18, ResNet-50, ResNet-152 and VGG-11 models to 12 downstream tasks, including Arabic Characters \citep{el2017arabchar}, Arabic Digits \citep{el2016arabdigit}, Beans \citep{makerere2020beans}, CUB-200-2011 \citep{wah2011CUB}, DTD \citep{cimpoi14dtd}, FashionMNIST \citep{xiao2017fashionmnist}, FGVC-Aircraft \citep{maji2013fgvc}, Flowers102 \citep{nilsback2008flowers102}, Food101 \citep{bossard14food101}, and three subsets from MedMNIST---PathMNIST, OCTMNIST, and DermaMNIST \citep{yang2023medmnistv2}. Each dataset is split into training/validation/test sets (details in \cref{app:exp-gen}).

To apply S-CT, we replace all ReLUs in the backbone with CTUs, freeze the model, and train a linear classifier on the penultimate layer. The optimal $\beta$ is selected via grid search over $\beta \in [0.7, 1]$ (step size 0.01) using validation accuracy, and the corresponding test accuracy is reported. For the baseline, we train a linear classifier on the frozen original model and report the test accuracy of the checkpoint that performs best on the validation set. All linear classifiers use the same training setup detailed in \cref{app:exp-gen}.

\begin{table}[htbp]
  \caption{Mean accuracy (\%) over three runs of ImageNet-pretrained ResNet-18/50 when transferred to 12 downstream datasets. The second row under each method indicates the number of trainable parameters (excluding the linear classifier). \textbf{S-CT outperforms linear probing on the frozen backbone, and T-CT surpasses LoRA (rank 1).} Full results (± std) in \cref{table:gen-all-complete}.}
  \label{table:gen-resnet-18/50}
  \centering
  \resizebox{.95\textwidth}{!}{%
  \begin{tabular}{lcccccccc}
    \toprule
    & \multicolumn{4}{c}{ResNet-18} & \multicolumn{4}{c}{ResNet-50} \\
    \cmidrule(r){2-5} \cmidrule(r){6-9}
    Dataset & Frozen & S-CT & LoRA & T-CT & Frozen & S-CT & LoRA & T-CT \\
    & 
    (0) & (1) & (35923) & (3968) &
    (0) & (1) & (79443) & (45440) \\
    \midrule
    Arabic Characters   & 81.91 & 87.65 & 93.37 & \textbf{93.76} & 80.65 & 83.66 & 94.21 & \textbf{95.67} \\
    Arabic Digits       & 97.93 & 98.77 & \textbf{99.08} & 99.03 & 98.33 & 98.37 & 99.08 & \textbf{99.16} \\
    Beans               & 87.76 & 90.36 & 93.23 & \textbf{94.01} & 89.58 & 91.93 & 94.79 & \textbf{95.57} \\
    CUB-200             & 62.84 & 63.18 & 54.83 & \textbf{64.30} & 65.23 & 64.62 & 66.17 & \textbf{71.03} \\
    DTD                 & 62.80 & 62.66 & 54.36 & \textbf{63.62} & \textbf{67.34} & 66.91 & 64.70 & 65.07 \\
    FashionMNIST       & 88.63 & 88.70 & \textbf{91.65} & 91.07 & 90.05 & 90.34 & 92.19 & \textbf{92.78} \\
    FGVC-Aircraft       & 36.80 & 38.68 & 29.19 & \textbf{46.44} & 38.03 & 41.16 & 41.99 & \textbf{55.70} \\
    Flowers102          & 80.86 & 81.97 & 67.53 & \textbf{86.55} & 84.00 & 83.84 & 82.58 & \textbf{87.62} \\
    Food101             & 61.41 & 62.27 & 64.40 & \textbf{66.04} & 68.06 & 68.02 & 71.42 & \textbf{73.60} \\
    DermaMNIST          & 74.83 & 75.05 & 74.21 & \textbf{77.66} & 75.94 & 75.89 & 75.73 & \textbf{78.02} \\
    OCTMNIST            & 65.03 & 67.27 & \textbf{74.27} & 69.53 & 67.53 & 68.00 & \textbf{75.90} & 74.13 \\
    PathMNIST           & 86.77 & 87.51 & \textbf{87.62} & 87.17 & 90.08 & \textbf{90.26} & 85.43 & 87.33 \\
    \midrule
    \textit{Average}        & 73.96 & 75.34 & 73.64 & \textbf{78.26} & 76.24 & 76.92 & 78.68 & \textbf{81.31} \\
    \bottomrule
  \end{tabular}
  } 
\end{table}

\Cref{table:gen-resnet-18/50,table:gen-all-complete} show mean accuracy over three runs for linear probing with and without S-CT on 12 downstream datasets using ResNet-18/50/152 and VGG-11 backbones. S-CT consistently improves generalization, with average relative gains of 1.97\%, 1.16\%, 0.02\%, and 0.71\% respectively. We also report the average optimal $\beta$ across datasets: 0.84 for ResNet-18, 0.94 for ResNet-50, 0.96 for ResNet-152 and 0.90 for VGG-11 (full results in \cref{table:gen-ct-beta}). These values are consistently close to 1, suggesting the search range can be narrowed for efficiency. Example accuracy curves in \cref{fig:beta-acc} show that accuracy varies smoothly with $\beta$ and typically peaks in the middle of the search range.

Additionally, we conduct ablation experiments on the choice of $c=0.5$ for S-CT under the same settings. As shown in \cref{table:gen-abl-all-complete}, setting $c=0.5$ yields better performance than $c=1$ (SiLU) or $c=0$ (Softplus) by 1.99\% and 0.56\% respectively on ResNet-18, and by 0.94\% and 0.57\% on ResNet-50, while performing slightly worse by 0.05\% and 0.73\% on ResNet-152.

\subsection{T-CT is comparable to LoRA with fewer parameters}\label{sec:exp-train-ct}
In this subsection, we show that T-CT further improves generalization, achieving performance comparable to LoRA with fewer parameters. We conduct experiments using the same setup as in \cref{sec:exp-gen}, evaluating both T-CT and LoRA. For both methods, we replace the original linear classifier in each model with an appropriate one, and apply the respective method to the pretrained backbone. In T-CT, we initialize all $\beta$ parameters to 0.8 and all $c$ parameters to 0.5. For LoRA, we apply it to all convolutional and linear layers in the backbone (implementation details in \cref{app:lora-code}). And we set the rank $r=1$ and scale $\alpha=1$, so that it has a comparable number of parameters to T-CT. Refer to \cref{app:exp-train-ct} for training configurations applied to the two methods. Here we report the test accuracy of the best checkpoint on the validation set.

The results, summarized in \cref{table:gen-resnet-18/50,table:gen-all-complete}, show that T-CT achieves the best performance across all methods, with average relative improvements on ResNet-18/50/152 and VGG-11 of 6.75\%, 8.59\%, 8.34\% and 5.53\% over the baseline; 4.62\%, 7.13\%, 8.35\% and 4.73\% over S-CT; and 10.20\%, 4.64\%, 1.70\% and 4.05\% over LoRA. Importantly, T-CT achieves better performance than LoRA with fewer parameters. As reported in \cref{table:gen-resnet-18/50,table:gen-all-complete}, the number of trainable parameters (excluding the classifier) used by T-CT amounts to only 11.05\%, 57.20\%, 59.09\% and 36.39\% of that used by LoRA on the four models respectively---even with LoRA operating at its lowest-rank setting ($r=1$). This highlights the parameter efficiency of our approach. We further compare T-CT and LoRA with varying ranks ($r \in \{1, 2, 4\}$) and scaling factors ($\alpha \in \{r, 2r, 4r\}$) in \cref{table:gen-all-tuned-lora}. For each dataset and LoRA rank, we report the best test accuracy achieved among the candidate scaling factors. As shown, T-CT can still outperform LoRA under this more challenging setting, achieving relative improvements of 9.65\%, 6.56\%, and 3.34\% over the three ranks on ResNet-18; 4.18\%, 2.27\%, and 0.19\% on ResNet-50; and 1.22\%, –0.91\%, and –1.84\% on ResNet-152, further demonstrating the effectiveness and parameter efficiency of T-CT.

To better understand how T-CT behaves during training, we analyze the distributions of learned $\beta$ and $c$ values (full statistics provided in \cref{table:gen-train-ct-beta,table:gen-train-ct-coeff}). We observe a high degree of within-model variation, with standard deviations ranging from 0.31 to 0.38, while the means remain remarkably stable across architectures: 0.69 to 0.79 for $\beta$ and 0.57 to 0.61 for $c$. These mean values are close to those used in S-CT, though the learned $\beta$ values tend to be smaller than the optimal shared $\beta$ found in S-CT (0.84 to 0.96), while the learned $c$ values are larger than the fixed $c = 0.5$. We further visualize the distributions in \cref{fig:beta-c-dist-common,fig:beta-c-dist-uncommon}. In most datasets, as shown in \cref{fig:beta-c-dist-common} (OCTMNIST), both $\beta$ and $c$ exhibit a sharp U-shaped distribution---concentrating near 0 and 1 with a flat middle. This suggests that T-CT leverages its parameter flexibility to assign values at the extremes, producing an effective average close to the manually chosen settings in S-CT, rather than concentrating around the mean values themselves.\footnote{This behavior may in part be influenced by the sigmoid-based parameterization used in our implementation of T-CT to constrain $\beta$ and $c$ during training.} In a few datasets, we observe deviations from this trend, as exemplified in \cref{fig:beta-c-dist-uncommon} (DTD). Nonetheless, a consistent pattern is that for any given dataset, the distributions remain visually similar across models.

\subsection{Improving model robustness through CT's implicit bias}\label{sec:exp-rob}
In this subsection, we demonstrate that CT exhibits an implicit bias toward enhancing model robustness, using benchmarks from RobustBench \citep{croce2020robustbench}. We evaluate the robustness of ResNet-18/50/152 on CIFAR-10/100 and ImageNet using the official $\ell_2$ and $\ell_\infty$ adversarial benchmarks from RobustBench \citep{croce2020robustbench} and the common corruption benchmark \citep{hendrycks2019corruption}. See \cref{app:exp-rob} for more details.

More specifically, we first show that S-CT can improve robustness without any robustness-oriented objective, by applying it to pretrained models, performing a grid search over $\beta \in [0.7, 1]$ with a step size of 0.01, and reporting the best robust accuracy achieved. We then show that T-CT, when used for finetuning (from ImageNet to CIFAR-10/100 following the same training configurations as in \cref{sec:exp-gen,sec:exp-train-ct}), also enhances robustness despite the absence of a robustness objective---the finetuned models achieve stronger robustness than those trained with linear probing or LoRA ($r=1$).

\begin{table}[htbp]
  \caption{Mean robust accuracy (\%) over three runs of ImageNet-pretrained ResNet-18/50/152 under $\ell_2$/$\ell_\infty$ attacks and corruptions on CIFAR-10/100 and ImageNet. \textbf{S-CT yields substantial improvements under $\ell_\infty$ attacks, with the selected $\beta$ values close to 1.} Full results (± std) in \cref{table:rob-ct-complete}.}
  \label{table:rob-ct}
  \centering
  \resizebox{\textwidth}{!}{%
  \begin{tabular}{lcccccccccc}
    \toprule
    & & \multicolumn{3}{c}{$\ell_2$} & \multicolumn{3}{c}{$\ell_{\infty}$} & \multicolumn{3}{c}{Corruption} \\
    \cmidrule(r){3-5} \cmidrule(r){6-8} \cmidrule(r){9-11}
    Model & Dataset & Frozen & S-CT & \textcolor{gray}{$\beta$} & Frozen & S-CT & \textcolor{gray}{$\beta$} & Frozen & S-CT & \textcolor{gray}{$\beta$} \\
    \midrule
    \multirow{3}{*}{ResNet18}
    & CIFAR10   & 53.67 & 53.67 & \textcolor{gray}{1.00} & 11.17 & \textbf{14.93} & \textcolor{gray}{0.90} & 77.73 & 77.73 & \textcolor{gray}{1.00} \\
    & CIFAR100  & 24.30 & \textbf{25.50} & \textcolor{gray}{0.92} & 4.47 & \textbf{6.90} & \textcolor{gray}{0.92} & 51.81 & \textbf{51.95} & \textcolor{gray}{0.94} \\
    & ImageNet   & 23.37 & 23.37 & \textcolor{gray}{1.00} & 0.00 & \textbf{7.00} & \textcolor{gray}{0.89} & 33.11 & \textbf{33.32} & \textcolor{gray}{0.92} \\
    \cmidrule(r){2-11}
    & \textit{Average}   & 33.78 & \textbf{34.18} & \textcolor{gray}{0.97} & 5.21 & \textbf{9.61} & \textcolor{gray}{0.90} & 54.22 & \textbf{54.33} & \textcolor{gray}{0.95} \\    
    \midrule
    \multirow{3}{*}{ResNet50}
    & CIFAR10   & 55.10 & \textbf{56.53} & \textcolor{gray}{0.97} & 10.10 & \textbf{12.08} & \textcolor{gray}{0.90} & 77.26 & 77.26 & \textcolor{gray}{1.00} \\
    & CIFAR100  & 23.83 & \textbf{25.80} & \textcolor{gray}{0.96} & 4.43 & \textbf{7.90} & \textcolor{gray}{0.93} & 53.91 & \textbf{53.93} & \textcolor{gray}{0.98} \\
    & ImageNet   & 31.90 & 31.90 & \textcolor{gray}{1.00} & 0.30 & \textbf{9.30} & \textcolor{gray}{0.93} & 39.64 & 39.64 & \textcolor{gray}{1.00} \\
    \cmidrule(r){2-11}
    & \textit{Average}   & 36.94 & \textbf{38.08} & \textcolor{gray}{0.98} & 4.94 & \textbf{9.76} & \textcolor{gray}{0.94} & 56.94 & 56.94 & \textcolor{gray}{0.99} \\ 
    \midrule
    \multirow{3}{*}{ResNet152}
    & CIFAR10   & 56.27 & 56.27 & \textcolor{gray}{1.00} & 11.47 & \textbf{15.00} & \textcolor{gray}{0.99} & 78.82 & \textbf{78.83} & \textcolor{gray}{0.99} \\
    & CIFAR100  & 27.90 & \textbf{28.23} & \textcolor{gray}{0.98} & 5.40 & \textbf{7.70} & \textcolor{gray}{0.99} & 56.12 & 56.12 & \textcolor{gray}{1.00} \\
    & ImageNet   & 42.50 & 42.50 & \textcolor{gray}{1.00} & 0.30 & \textbf{13.53} & \textcolor{gray}{0.97} & 45.47 & 45.47 & \textcolor{gray}{0.99} \\
    \cmidrule(r){2-11}
    & \textit{Average}   & 42.22 & \textbf{42.33} & \textcolor{gray}{0.99} & 5.72 & \textbf{12.08} & \textcolor{gray}{0.98} & 60.14 & 60.14 & \textcolor{gray}{0.99} \\ 
    \bottomrule
  \end{tabular}
  } 
\end{table}

\begin{table}[htbp]
  \caption{Mean robust accuracy (\%) over three runs of ImageNet-pretrained ResNet-18/50/152 transferred to CIFAR-10/100 under $\ell_2$, $\ell_\infty$ attacks, and corruptions. \textbf{T-CT improves $\ell_\infty$ robustness significantly compared to linear probing and LoRA.} Full results (± std) in \cref{table:rob-train-ct-complete}.}
  \label{table:rob-train-ct}
  \centering
  \resizebox{\textwidth}{!}{%
  \begin{tabular}{lcccccccccc}
    \toprule
    & & \multicolumn{3}{c}{$\ell_2$} & \multicolumn{3}{c}{$\ell_{\infty}$} & \multicolumn{3}{c}{Corruption} \\
    \cmidrule(r){3-5} \cmidrule(r){6-8} \cmidrule(r){9-11}
    Model & Dataset & Frozen & LoRA & T-CT & Frozen & LoRA & T-CT & Frozen & LoRA & T-CT \\
    \midrule
    \multirow{3}{*}{ResNet18}
    & CIFAR10   & 8.47 & 5.93 & \textbf{8.93} & 0.30 & 0.70 & \textbf{1.57} & \textbf{21.34} & 13.59 & 16.83 \\
    & CIFAR100  & \textbf{1.57} & 0.77 & 1.10 & 0.03 & 0.07 & \textbf{0.17} & \textbf{5.10} & 2.96 & 4.62 \\
    \cmidrule(r){2-11}
    & \textit{Average}   & \textbf{5.02} & 3.35 & 5.01 & 0.16 & 0.38 & \textbf{0.87} & \textbf{13.22} & 8.28 & 10.72 \\    
    \midrule
    \multirow{3}{*}{ResNet50}
    & CIFAR10   & 6.23 & 4.57 & \textbf{6.83} & 0.20 & 0.33 & \textbf{2.43} & \textbf{16.23} & 11.69 & 12.68 \\
    & CIFAR100  & \textbf{0.70} & 0.37 & 0.47 & 0.00 & 0.03 & \textbf{0.07} & \textbf{3.47} & 2.04 & 1.61 \\
    \cmidrule(r){2-11}
    & \textit{Average}   & 3.47 & 2.47 & \textbf{3.65} & 0.10 & 0.18 & \textbf{1.25} & \textbf{9.85} & 6.86 & 7.14 \\ 
    \midrule
    \multirow{3}{*}{ResNet152}
    & CIFAR10   & \textbf{8.03} & 4.63 & 8.00 & 0.43 & 0.20 & \textbf{5.10} & \textbf{13.82} & 11.33 & 9.83 \\
    & CIFAR100  & \textbf{0.90} & 0.47 & 0.50 & \textbf{0.17} & 0.00 & 0.00 & 2.07 & \textbf{2.13} & 1.72 \\
    \cmidrule(r){2-11}
    & \textit{Average}   & \textbf{4.46} & 2.55 & 4.25 & 0.30 & 0.10 & \textbf{2.55} & \textbf{7.94} & 6.73 & 5.78 \\ 
    \bottomrule
  \end{tabular}
  } 
\end{table}

For S-CT, as summarized in \cref{table:rob-ct}, it is particularly effective against $\ell_\infty$ attacks, achieving substantial relative improvements of 44.01\%, 1032.64\%, and 1494.46\% for ResNet-18, ResNet-50, and ResNet-152.\footnote{We exclude from relative-improvement computation any cases where the baseline robust accuracy is 0.} Improvements under $\ell_2$ attacks and common corruptions are comparatively moderate. The corresponding optimal $\beta$ values, also shown in \cref{table:rob-ct}, are consistently close to 1, suggesting that modest curvature modulation suffices to improve robustness---further highlighting the practical efficiency of S-CT. For T-CT, as shown in \cref{table:rob-train-ct}, it likewise enhances $\ell_\infty$ robustness significantly, achieving average relative improvements over linear probing of 445.00\%, 1115.00\%, and 493.02\%, and over LoRA of 133.57\%, 384.85\%, and 2450.00\%. These results collectively demonstrate CT’s implicit bias toward robustness enhancement, further validating its effectiveness beyond standard generalization improvements.

\subsection{CT shows promise on transformers}\label{sec:exp-transformer}
In this subsection, we investigate the effectiveness of T-CT in improving the generalization of transformer architectures. Unlike ResNets, transformers include attention layers that fall outside the max-affine spline framework and typically employ non-ReLU activation functions (e.g., GELU), which weakens our theoretical guarantees.

Concretely, we apply T-CT to ImageNet-pretrained Swin-T and Swin-S models, both of which use GELU activations in their feed-forward blocks. As before, we transfer these models to the same 12 downstream datasets and compare T-CT against linear probing on the frozen backbone and LoRA ($r=1$, $\alpha=1$). Additional training details are provided in \cref{app:exp-transformer}.

\begin{table}[htbp]
  \caption{Mean accuracy (\%) over three runs of ImageNet-pretrained Swin-T/S when transferred to 12 downstream datasets. The second row under each method indicates the number of trainable parameters (excluding the linear classifier). \textbf{T-CT improves over linear probing but underperforms LoRA.} Full results (± std) in \cref{table:gen-swin-complete}.}
  \label{table:gen-swin-t/s}
  \centering
  \resizebox{.75\textwidth}{!}{%
  \begin{tabular}{lcccccc}
    \toprule
    & \multicolumn{3}{c}{Swin-T} & \multicolumn{3}{c}{Swin-S} \\
    \cmidrule(r){2-4} \cmidrule(r){5-7}
    Dataset & Frozen & LoRA & T-CT & Frozen & LoRA & T-CT \\
    & 
    (0) & (74832) & (532) &
    (0) & (148560) & (868) \\
    \midrule
    Arabic Characters & 83.48 & \textbf{93.24} & 85.02 & 83.83 & \textbf{94.38} & 86.65 \\
    Arabic Digits     & 98.14 & \textbf{99.19} & 98.47 & 98.28 & \textbf{99.19} & 98.39 \\
    Beans             & 88.28 & \textbf{94.01} & 89.06 & 90.89 & \textbf{95.05} & 91.41 \\
    CUB-200           & 73.42 & \textbf{78.73} & 74.33 & 72.66 & \textbf{79.45} & 73.40 \\
    DTD               & 70.66 & 70.99 & \textbf{71.45} & 69.77 & 71.56 & \textbf{72.43} \\
    FashionMNIST      & 89.89 & \textbf{93.15} & 90.23 & 89.75 & \textbf{93.52} & 89.85 \\
    FGVC-Aircraft     & 48.06 & \textbf{48.29} & 47.58 & 44.36 & \textbf{51.94} & 45.72 \\
    Flowers102        & 86.66 & \textbf{90.22} & 85.35 & 83.24 & \textbf{87.67} & 85.08 \\
    Food101           & 77.05 & \textbf{83.69} & 78.90 & 77.59 & \textbf{85.17} & 79.45 \\
    DermaMNIST        & 75.83 & \textbf{76.71} & 75.86 & 76.64 & \textbf{78.15} & 77.14 \\
    OCTMNIST          & 69.97 & \textbf{76.30} & 67.97 & 66.90 & \textbf{76.97} & 69.07 \\
    PathMNIST         & 89.14 & \textbf{92.26} & 91.73 & 89.74 & \textbf{92.79} & 92.13 \\
    \midrule
    \textit{Average}  & 79.22 & \textbf{83.06} & 79.66 & 78.64 & \textbf{83.82} & 80.06 \\
    \bottomrule
  \end{tabular}
  } 
\end{table}

The results, presented in \cref{table:gen-swin-t/s}, show that T-CT yields average relative improvements over linear probing of 0.48\% and 1.94\% on Swin-T and Swin-S, respectively. However, it underperforms LoRA in this setting, trailing by 4.01\% and 4.76\% on the two models. Nonetheless, as shown in \cref{table:gen-swin-t/s}, T-CT requires only 0.71\% and 0.58\% as many trainable parameters as LoRA on Swin-T and Swin-S---a much lower ratio than in the ResNet experiments. Thus, despite its lower performance, the results still highlight its potential for transformer architectures. It is also worth noting that in the current implementation, CT is applied only to the feed-forward blocks, which constitute a relatively small portion of the transformer, while the attention layers make up the majority. Extending CT to modulate the curvature within attention mechanisms---such as by tuning the temperature in the softmax of the attention block---is an interesting future work.

\section{Conclusion}\label{sec:conclusion}
In this paper, we propose Curvature Tuning (CT), an interpretable and principled model steering method that provably modulates a network’s decision boundary through a single parameter injected into its activation functions---without altering the model weights. Theoretically, we show that CT adjusts a model’s nonlinearities and effectively projects it onto a space of smoother functions, offering a complementary perspective to existing PEFT methods. Practically, we introduce two variants: a steering form with fixed parameters (S-CT) and a finetuning form with learnable ones (T-CT). Both improve model generalization and exhibit an implicit bias toward enhancing model robustness.

While promising, CT also presents open questions for future work. In particular, exploring how to integrate it more effectively with popular components such as attention mechanisms remains an exciting direction.

\section*{Broader impacts}
\label{sec:impacts}
This paper presents work whose goal is to advance the field of 
deep learning. There are many potential societal consequences 
of our work, none of which we feel must be specifically highlighted here.

\section*{Acknowledgments and disclosure of funding}
%
Computations were in part enabled by the Berzelius resource provided by the Knut and Alice Wallenberg Foundation at the National Supercomputer Centre.

\bibliographystyle{unsrtnat}
\bibliography{ref}

@InProceedings{balestriero2018spline,
  title = 	 {A Spline Theory of Deep Learning},
  author =       {Balestriero, Randall and Richard Baraniuk},
  booktitle = 	 {Proceedings of the 35th International Conference on Machine Learning},
  pages = 	 {374--383},
  year = 	 {2018},
  editor = 	 {Dy, Jennifer and Krause, Andreas},
  volume = 	 {80},
  series = 	 {Proceedings of Machine Learning Research},
  month = 	 {10--15 Jul},
  publisher =    {PMLR},
  pdf = 	 {http://proceedings.mlr.press/v80/balestriero18b/balestriero18b.pdf},
  url = 	 {https://proceedings.mlr.press/v80/balestriero18b.html},
  abstract = 	 {We build a rigorous bridge between deep networks (DNs) and approximation theory via spline functions and operators. Our key result is that a large class of DNs can be written as a composition of <em>max-affine spline operators</em> (MASOs), which provide a powerful portal through which to view and analyze their inner workings. For instance, conditioned on the input signal, the output of a MASO DN can be written as a simple affine transformation of the input. This implies that a DN constructs a set of signal-dependent, class-specific templates against which the signal is compared via a simple inner product; we explore the links to the classical theory of optimal classification via matched filters and the effects of data memorization. Going further, we propose a simple penalty term that can be added to the cost function of any DN learning algorithm to force the templates to be orthogonal with each other; this leads to significantly improved classification performance and reduced overfitting with no change to the DN architecture. The spline partition of the input signal space opens up a new geometric avenue to study how DNs organize signals in a hierarchical fashion. As an application, we develop and validate a new distance metric for signals that quantifies the difference between their partition encodings.}
}

@misc{balestriero2018hard,
      title={From Hard to Soft: Understanding Deep Network Nonlinearities via Vector Quantization and Statistical Inference}, 
      author={Randall Balestriero and Richard G. Baraniuk},
      year={2018},
      eprint={1810.09274},
      archivePrefix={arXiv},
      primaryClass={cs.LG},
      url={https://arxiv.org/abs/1810.09274}, 
}

@article{radford2018gpt,
  title={Improving language understanding by generative pre-training},
  author={Radford, Alec},
  year={2018}
}

@misc{radford2021clip,
      title={Learning Transferable Visual Models From Natural Language Supervision}, 
      author={Alec Radford and Jong Wook Kim and Chris Hallacy and Aditya Ramesh and Gabriel Goh and Sandhini Agarwal and Girish Sastry and Amanda Askell and Pamela Mishkin and Jack Clark and Gretchen Krueger and Ilya Sutskever},
      year={2021},
      eprint={2103.00020},
      archivePrefix={arXiv},
      primaryClass={cs.CV},
      url={https://arxiv.org/abs/2103.00020}, 
}

@article{croce2020robustbench,
    title={RobustBench: a standardized adversarial robustness benchmark},
    author={Croce, Francesco and Andriushchenko, Maksym and Sehwag, Vikash and Debenedetti, Edoardo and Flammarion, Nicolas
    and Chiang, Mung and Mittal, Prateek and Matthias Hein},
    journal={arXiv preprint arXiv:2010.09670},
    year={2020}
}

@misc{hu2021lora,
      title={LoRA: Low-Rank Adaptation of Large Language Models}, 
      author={Edward J. Hu and Yelong Shen and Phillip Wallis and Zeyuan Allen-Zhu and Yuanzhi Li and Shean Wang and Lu Wang and Weizhu Chen},
      year={2021},
      eprint={2106.09685},
      archivePrefix={arXiv},
      primaryClass={cs.CL},
      url={https://arxiv.org/abs/2106.09685}, 
}

@article{dubey2024llama,
  title={The llama 3 herd of models},
  author={Dubey, Abhimanyu and Jauhri, Abhinav and Pandey, Abhinav and Kadian, Abhishek and Al-Dahle, Ahmad and Letman, Aiesha and Mathur, Akhil and Schelten, Alan and Yang, Amy and Fan, Angela and others},
  journal={arXiv preprint arXiv:2407.21783},
  year={2024}
}

@article{oquab2023dinov2,
  title={Dinov2: Learning robust visual features without supervision},
  author={Oquab, Maxime and Darcet, Timoth{\'e}e and Moutakanni, Th{\'e}o and Vo, Huy and Szafraniec, Marc and Khalidov, Vasil and Fernandez, Pierre and Haziza, Daniel and Massa, Francisco and El-Nouby, Alaaeldin and others},
  journal={arXiv preprint arXiv:2304.07193},
  year={2023}
}

@article{jeddi2020finetune4robustness,
  title={A simple fine-tuning is all you need: Towards robust deep learning via adversarial fine-tuning},
  author={Jeddi, Ahmadreza and Shafiee, Mohammad Javad and Wong, Alexander},
  journal={arXiv preprint arXiv:2012.13628},
  year={2020}
}

@misc{houlsby2019serialadapter,
      title={Parameter-Efficient Transfer Learning for NLP}, 
      author={Neil Houlsby and Andrei Giurgiu and Stanislaw Jastrzebski and Bruna Morrone and Quentin de Laroussilhe and Andrea Gesmundo and Mona Attariyan and Sylvain Gelly},
      year={2019},
      eprint={1902.00751},
      archivePrefix={arXiv},
      primaryClass={cs.LG},
      url={https://arxiv.org/abs/1902.00751}, 
}

@article{han2024peft,
  title={Parameter-efficient fine-tuning for large models: A comprehensive survey},
  author={Han, Zeyu and Gao, Chao and Liu, Jinyang and Zhang, Jeff and Zhang, Sai Qian},
  journal={arXiv preprint arXiv:2403.14608},
  year={2024}
}

@inproceedings{caron2021dino,
  title={Emerging properties in self-supervised vision transformers},
  author={Caron, Mathilde and Touvron, Hugo and Misra, Ishan and J{\'e}gou, Herv{\'e} and Mairal, Julien and Bojanowski, Piotr and Joulin, Armand},
  booktitle={Proceedings of the IEEE/CVF international conference on computer vision},
  pages={9650--9660},
  year={2021}
}

@article{li2021prefixtuning,
  title={Prefix-tuning: Optimizing continuous prompts for generation},
  author={Li, Xiang Lisa and Liang, Percy},
  journal={arXiv preprint arXiv:2101.00190},
  year={2021}
}

@article{liu2022ia3,
  title={Few-shot parameter-efficient fine-tuning is better and cheaper than in-context learning},
  author={Liu, Haokun and Tam, Derek and Muqeeth, Mohammed and Mohta, Jay and Huang, Tenghao and Bansal, Mohit and Raffel, Colin A},
  journal={Advances in Neural Information Processing Systems},
  volume={35},
  pages={1950--1965},
  year={2022}
}

@article{guo2020diffpruning,
  title={Parameter-efficient transfer learning with diff pruning},
  author={Guo, Demi and Rush, Alexander M and Kim, Yoon},
  journal={arXiv preprint arXiv:2012.07463},
  year={2020}
}

@article{valipour2022dylora,
  title={Dylora: Parameter efficient tuning of pre-trained models using dynamic search-free low-rank adaptation},
  author={Valipour, Mojtaba and Rezagholizadeh, Mehdi and Kobyzev, Ivan and Ghodsi, Ali},
  journal={arXiv preprint arXiv:2210.07558},
  year={2022}
}

@article{mao2021unipelt,
  title={Unipelt: A unified framework for parameter-efficient language model tuning},
  author={Mao, Yuning and Mathias, Lambert and Hou, Rui and Almahairi, Amjad and Ma, Hao and Han, Jiawei and Yih, Wen-tau and Khabsa, Madian},
  journal={arXiv preprint arXiv:2110.07577},
  year={2021}
}

@article{chen2023s4,
  title={Parameter-efficient fine-tuning design spaces},
  author={Chen, Jiaao and Zhang, Aston and Shi, Xingjian and Li, Mu and Smola, Alex and Yang, Diyi},
  journal={arXiv preprint arXiv:2301.01821},
  year={2023}
}

@article{el2017arabchar,
    title={Arabic handwritten characters recognition using convolutional neural network},
    author={El-Sawy, Ahmed and Loey, Mohamed and El-Bakry, Hazem},
    journal={WSEAS Transactions on Computer Research},
    volume={5},
    pages={11--19},
    year={2017}
}

@inproceedings{el2016arabdigit,
    title={CNN for handwritten arabic digits recognition based on LeNet-5},
    author={El-Sawy, Ahmed and Hazem, EL-Bakry and Loey, Mohamed},
    booktitle={International conference on advanced intelligent systems and informatics},
    pages={566--575},
    year={2016},
    organization={Springer}
}

@misc{makerere2020beans,
  author = "{Makerere AI Lab}",
  title = "{Bean Disease Dataset}",
  year = "2020",
  month = "January",
  url = "https://github.com/AI-Lab-Makerere/ibean/"
}

@techreport{wah2011CUB,
     Title = {The Caltech-UCSD Birds-200-2011 Dataset},
     Author = {Wah, C. and Branson, S. and Welinder, P. and Perona, P. and Belongie, S.},
     Year = {2011},
     Institution = {California Institute of Technology},
     Number = {CNS-TR-2011-001}
}

@InProceedings{cimpoi14dtd,
    Author    = {M. Cimpoi and S. Maji and I. Kokkinos and S. Mohamed and and A. Vedaldi},
    Title     = {Describing Textures in the Wild},
    Booktitle = {Proceedings of the {IEEE} Conf. on Computer Vision and Pattern Recognition ({CVPR})},
    Year      = {2014}
}

@article{xiao2017fashionmnist,
     title={Fashion-MNIST: a Novel Image Dataset for Benchmarking Machine Learning Algorithms},
     author={Xiao, Han and Rasul, Kashif and Vollgraf, Roland},
     journal={arXiv preprint arXiv:1708.07747},
     year={2017}
}

@article{maji2013fgvc,
     title={Fine-Grained Visual Classification of Aircraft},
     author={Maji, Subhransu and Rahtu, Esa and Kannala, Juho and Blaschko, Matthew and Vedaldi, Andrea},
     journal={arXiv preprint arXiv:1306.5151},
     year={2013}
}

@inproceedings{nilsback2008flowers102,
  title={Automated flower classification over a large number of classes},
  author={Nilsback, Maria-Elena and Zisserman, Andrew},
  booktitle={2008 Sixth Indian conference on computer vision, graphics \& image processing},
  pages={722--729},
  year={2008},
  organization={IEEE}
}

@inproceedings{bossard14food101,
  title = {Food-101 -- Mining Discriminative Components with Random Forests},
  author = {Bossard, Lukas and Guillaumin, Matthieu and Van Gool, Luc},
  booktitle = {European Conference on Computer Vision},
  year = {2014}
}

@article{yang2023medmnistv2,
    title={MedMNIST v2-A large-scale lightweight benchmark for 2D and 3D biomedical image classification},
    author={Yang, Jiancheng and Shi, Rui and Wei, Donglai and Liu, Zequan and Zhao, Lin and Ke, Bilian and Pfister, Hanspeter and Ni, Bingbing},
    journal={Scientific Data},
    volume={10},
    number={1},
    pages={41},
    year={2023},
    publisher={Nature Publishing Group UK London}
}

@article{hendrycks2019corruption,
  title={Benchmarking neural network robustness to common corruptions and perturbations},
  author={Hendrycks, Dan and Dietterich, Thomas},
  journal={arXiv preprint arXiv:1903.12261},
  year={2019}
}

@inproceedings{zhai2023siglip,
  title={Sigmoid loss for language image pre-training},
  author={Zhai, Xiaohua and Mustafa, Basil and Kolesnikov, Alexander and Beyer, Lucas},
  booktitle={Proceedings of the IEEE/CVF International Conference on Computer Vision},
  pages={11975--11986},
  year={2023}
}

@article{kim2024openvla,
  title={OpenVLA: An Open-Source Vision-Language-Action Model},
  author={Kim, Moo Jin and Pertsch, Karl and Karamcheti, Siddharth and Xiao, Ted and Balakrishna, Ashwin and Nair, Suraj and Rafailov, Rafael and Foster, Ethan and Lam, Grace and Sanketi, Pannag and others},
  journal={arXiv preprint arXiv:2406.09246},
  year={2024}
}

@article{chen2024autorank,
  title={AutoRank: MCDA Based Rank Personalization for LoRA-Enabled Distributed Learning},
  author={Chen, Shuaijun and Tavallaie, Omid and Nazemi, Niousha and Chen, Xin and Zomaya, Albert Y},
  journal={arXiv preprint arXiv:2412.15553},
  year={2024}
}

@article{kalajdzievski2023lorascaling,
  title={A rank stabilization scaling factor for fine-tuning with lora},
  author={Kalajdzievski, Damjan},
  journal={arXiv preprint arXiv:2312.03732},
  year={2023}
}

@article{gao2024lorawhere,
  title={Higher layers need more lora experts},
  author={Gao, Chongyang and Chen, Kezhen and Rao, Jinmeng and Sun, Baochen and Liu, Ruibo and Peng, Daiyi and Zhang, Yawen and Guo, Xiaoyuan and Yang, Jie and Subrahmanian, VS},
  journal={arXiv preprint arXiv:2402.08562},
  year={2024}
}

@article{hayou2024lorainit,
  title={The Impact of Initialization on LoRA Finetuning Dynamics},
  author={Hayou, Soufiane and Ghosh, Nikhil and Yu, Bin},
  journal={arXiv preprint arXiv:2406.08447},
  year={2024}
}

@article{montufar2014numoflinreg,
  title={On the number of linear regions of deep neural networks},
  author={Montufar, Guido F and Pascanu, Razvan and Cho, Kyunghyun and Bengio, Yoshua},
  journal={Advances in neural information processing systems},
  volume={27},
  year={2014}
}

@article{cybenko1989approximation,
  title={Approximation by superpositions of a sigmoidal function},
  author={Cybenko, George},
  journal={Mathematics of control, signals and systems},
  volume={2},
  number={4},
  pages={303--314},
  year={1989},
  publisher={Springer}
}

@article{hornik1989multilayer,
  title={Multilayer feedforward networks are universal approximators},
  author={Hornik, Kurt and Stinchcombe, Maxwell and White, Halbert},
  journal={Neural networks},
  volume={2},
  number={5},
  pages={359--366},
  year={1989},
  publisher={Elsevier}
}

@article{Liao2024RoAd,
  title={3-in-1: 2D Rotary Adaptation for Efficient Finetuning, Efficient Batching and Composability},
  author={Liao, Baohao and Monz, Christof},
  journal={arXiv preprint arXiv:2409.00119},
  year={2024}
}

@article{srinivas2022efficient,
  title={Efficient training of low-curvature neural networks},
  author={Srinivas, Suraj and Matoba, Kyle and Lakkaraju, Himabindu and Fleuret, Fran{\c{c}}ois},
  journal={Advances in Neural Information Processing Systems},
  volume={35},
  pages={25951--25964},
  year={2022}
}

\newpage
\section*{NeurIPS Paper Checklist}

\begin{enumerate}

\item {\bf Claims}
    \item[] Question: Do the main claims made in the abstract and introduction accurately reflect the paper's contributions and scope?
    \item[] Answer: \answerYes{}
    \item[] Justification: The main claims made in the abstract and introduction accurately reflect the paper's contributions and scope.
    \item[] Guidelines:
    \begin{itemize}
        \item The answer NA means that the abstract and introduction do not include the claims made in the paper.
        \item The abstract and/or introduction should clearly state the claims made, including the contributions made in the paper and important assumptions and limitations. A No or NA answer to this question will not be perceived well by the reviewers. 
        \item The claims made should match theoretical and experimental results, and reflect how much the results can be expected to generalize to other settings. 
        \item It is fine to include aspirational goals as motivation as long as it is clear that these goals are not attained by the paper. 
    \end{itemize}

\item {\bf Limitations}
    \item[] Question: Does the paper discuss the limitations of the work performed by the authors?
    \item[] Answer: \answerYes{}
    \item[] Justification: The last paragraph of the conclusion section discusses the limitations of the work.
    \item[] Guidelines:
    \begin{itemize}
        \item The answer NA means that the paper has no limitation while the answer No means that the paper has limitations, but those are not discussed in the paper. 
        \item The authors are encouraged to create a separate "Limitations" section in their paper.
        \item The paper should point out any strong assumptions and how robust the results are to violations of these assumptions (e.g., independence assumptions, noiseless settings, model well-specification, asymptotic approximations only holding locally). The authors should reflect on how these assumptions might be violated in practice and what the implications would be.
        \item The authors should reflect on the scope of the claims made, e.g., if the approach was only tested on a few datasets or with a few runs. In general, empirical results often depend on implicit assumptions, which should be articulated.
        \item The authors should reflect on the factors that influence the performance of the approach. For example, a facial recognition algorithm may perform poorly when image resolution is low or images are taken in low lighting. Or a speech-to-text system might not be used reliably to provide closed captions for online lectures because it fails to handle technical jargon.
        \item The authors should discuss the computational efficiency of the proposed algorithms and how they scale with dataset size.
        \item If applicable, the authors should discuss possible limitations of their approach to address problems of privacy and fairness.
        \item While the authors might fear that complete honesty about limitations might be used by reviewers as grounds for rejection, a worse outcome might be that reviewers discover limitations that aren't acknowledged in the paper. The authors should use their best judgment and recognize that individual actions in favor of transparency play an important role in developing norms that preserve the integrity of the community. Reviewers will be specifically instructed to not penalize honesty concerning limitations.
    \end{itemize}

\item {\bf Theory assumptions and proofs}
    \item[] Question: For each theoretical result, does the paper provide the full set of assumptions and a complete (and correct) proof?
    \item[] Answer: \answerYes{}
    \item[] Justification: The main text clearly states the assumptions and the complete proof is in appendix.
    \item[] Guidelines:
    \begin{itemize}
        \item The answer NA means that the paper does not include theoretical results. 
        \item All the theorems, formulas, and proofs in the paper should be numbered and cross-referenced.
        \item All assumptions should be clearly stated or referenced in the statement of any theorems.
        \item The proofs can either appear in the main paper or the supplemental material, but if they appear in the supplemental material, the authors are encouraged to provide a short proof sketch to provide intuition. 
        \item Inversely, any informal proof provided in the core of the paper should be complemented by formal proofs provided in appendix or supplemental material.
        \item Theorems and Lemmas that the proof relies upon should be properly referenced. 
    \end{itemize}

    \item {\bf Experimental result reproducibility}
    \item[] Question: Does the paper fully disclose all the information needed to reproduce the main experimental results of the paper to the extent that it affects the main claims and/or conclusions of the paper (regardless of whether the code and data are provided or not)?
    \item[] Answer: \answerYes{}
    \item[] Justification: The paper describe all experiment details needed to reproduce the results.
    \item[] Guidelines:
    \begin{itemize}
        \item The answer NA means that the paper does not include experiments.
        \item If the paper includes experiments, a No answer to this question will not be perceived well by the reviewers: Making the paper reproducible is important, regardless of whether the code and data are provided or not.
        \item If the contribution is a dataset and/or model, the authors should describe the steps taken to make their results reproducible or verifiable. 
        \item Depending on the contribution, reproducibility can be accomplished in various ways. For example, if the contribution is a novel architecture, describing the architecture fully might suffice, or if the contribution is a specific model and empirical evaluation, it may be necessary to either make it possible for others to replicate the model with the same dataset, or provide access to the model. In general. releasing code and data is often one good way to accomplish this, but reproducibility can also be provided via detailed instructions for how to replicate the results, access to a hosted model (e.g., in the case of a large language model), releasing of a model checkpoint, or other means that are appropriate to the research performed.
        \item While NeurIPS does not require releasing code, the conference does require all submissions to provide some reasonable avenue for reproducibility, which may depend on the nature of the contribution. For example
        \begin{enumerate}
            \item If the contribution is primarily a new algorithm, the paper should make it clear how to reproduce that algorithm.
            \item If the contribution is primarily a new model architecture, the paper should describe the architecture clearly and fully.
            \item If the contribution is a new model (e.g., a large language model), then there should either be a way to access this model for reproducing the results or a way to reproduce the model (e.g., with an open-source dataset or instructions for how to construct the dataset).
            \item We recognize that reproducibility may be tricky in some cases, in which case authors are welcome to describe the particular way they provide for reproducibility. In the case of closed-source models, it may be that access to the model is limited in some way (e.g., to registered users), but it should be possible for other researchers to have some path to reproducing or verifying the results.
        \end{enumerate}
    \end{itemize}

\item {\bf Open access to data and code}
    \item[] Question: Does the paper provide open access to the data and code, with sufficient instructions to faithfully reproduce the main experimental results, as described in supplemental material?
    \item[] Answer: \answerYes{}
    \item[] Justification: The paper releases the code in the supplementary.
    \item[] Guidelines:
    \begin{itemize}
        \item The answer NA means that paper does not include experiments requiring code.
        \item Please see the NeurIPS code and data submission guidelines (\url{https://nips.cc/public/guides/CodeSubmissionPolicy}) for more details.
        \item While we encourage the release of code and data, we understand that this might not be possible, so “No” is an acceptable answer. Papers cannot be rejected simply for not including code, unless this is central to the contribution (e.g., for a new open-source benchmark).
        \item The instructions should contain the exact command and environment needed to run to reproduce the results. See the NeurIPS code and data submission guidelines (\url{https://nips.cc/public/guides/CodeSubmissionPolicy}) for more details.
        \item The authors should provide instructions on data access and preparation, including how to access the raw data, preprocessed data, intermediate data, and generated data, etc.
        \item The authors should provide scripts to reproduce all experimental results for the new proposed method and baselines. If only a subset of experiments are reproducible, they should state which ones are omitted from the script and why.
        \item At submission time, to preserve anonymity, the authors should release anonymized versions (if applicable).
        \item Providing as much information as possible in supplemental material (appended to the paper) is recommended, but including URLs to data and code is permitted.
    \end{itemize}

\item {\bf Experimental setting/details}
    \item[] Question: Does the paper specify all the training and test details (e.g., data splits, hyperparameters, how they were chosen, type of optimizer, etc.) necessary to understand the results?
    \item[] Answer: \answerYes{}
    \item[] Justification: The paper clearly specifies all training and test details.
    \item[] Guidelines:
    \begin{itemize}
        \item The answer NA means that the paper does not include experiments.
        \item The experimental setting should be presented in the core of the paper to a level of detail that is necessary to appreciate the results and make sense of them.
        \item The full details can be provided either with the code, in appendix, or as supplemental material.
    \end{itemize}

\item {\bf Experiment statistical significance}
    \item[] Question: Does the paper report error bars suitably and correctly defined or other appropriate information about the statistical significance of the experiments?
    \item[] Answer: \answerYes{}
    \item[] Justification: The paper plots mean and standard deviations in the figures clearly.
    \item[] Guidelines:
    \begin{itemize}
        \item The answer NA means that the paper does not include experiments.
        \item The authors should answer "Yes" if the results are accompanied by error bars, confidence intervals, or statistical significance tests, at least for the experiments that support the main claims of the paper.
        \item The factors of variability that the error bars are capturing should be clearly stated (for example, train/test split, initialization, random drawing of some parameter, or overall run with given experimental conditions).
        \item The method for calculating the error bars should be explained (closed form formula, call to a library function, bootstrap, etc.)
        \item The assumptions made should be given (e.g., Normally distributed errors).
        \item It should be clear whether the error bar is the standard deviation or the standard error of the mean.
        \item It is OK to report 1-sigma error bars, but one should state it. The authors should preferably report a 2-sigma error bar than state that they have a 96\% CI, if the hypothesis of Normality of errors is not verified.
        \item For asymmetric distributions, the authors should be careful not to show in tables or figures symmetric error bars that would yield results that are out of range (e.g. negative error rates).
        \item If error bars are reported in tables or plots, The authors should explain in the text how they were calculated and reference the corresponding figures or tables in the text.
    \end{itemize}

\item {\bf Experiments compute resources}
    \item[] Question: For each experiment, does the paper provide sufficient information on the computer resources (type of compute workers, memory, time of execution) needed to reproduce the experiments?
    \item[] Answer: \answerNo{}
    \item[] Justification: The paper currently only reports the type of GPUs used in the experiments and will add further details after acceptance.
    \item[] Guidelines:
    \begin{itemize}
        \item The answer NA means that the paper does not include experiments.
        \item The paper should indicate the type of compute workers CPU or GPU, internal cluster, or cloud provider, including relevant memory and storage.
        \item The paper should provide the amount of compute required for each of the individual experimental runs as well as estimate the total compute. 
        \item The paper should disclose whether the full research project required more compute than the experiments reported in the paper (e.g., preliminary or failed experiments that didn't make it into the paper). 
    \end{itemize}
    
\item {\bf Code of ethics}
    \item[] Question: Does the research conducted in the paper conform, in every respect, with the NeurIPS Code of Ethics \url{https://neurips.cc/public/EthicsGuidelines}?
    \item[] Answer: \answerYes{}
    \item[] Justification: The research is consistent with the NeurIPS Code of Ethics.
    \item[] Guidelines:
    \begin{itemize}
        \item The answer NA means that the authors have not reviewed the NeurIPS Code of Ethics.
        \item If the authors answer No, they should explain the special circumstances that require a deviation from the Code of Ethics.
        \item The authors should make sure to preserve anonymity (e.g., if there is a special consideration due to laws or regulations in their jurisdiction).
    \end{itemize}

\item {\bf Broader impacts}
    \item[] Question: Does the paper discuss both potential positive societal impacts and negative societal impacts of the work performed?
    \item[] Answer: \answerNA{}
    \item[] Justification: There is no clear societal impact of the work performed as the work is about general deep learning.
    \item[] Guidelines:
    \begin{itemize}
        \item The answer NA means that there is no societal impact of the work performed.
        \item If the authors answer NA or No, they should explain why their work has no societal impact or why the paper does not address societal impact.
        \item Examples of negative societal impacts include potential malicious or unintended uses (e.g., disinformation, generating fake profiles, surveillance), fairness considerations (e.g., deployment of technologies that could make decisions that unfairly impact specific groups), privacy considerations, and security considerations.
        \item The conference expects that many papers will be foundational research and not tied to particular applications, let alone deployments. However, if there is a direct path to any negative applications, the authors should point it out. For example, it is legitimate to point out that an improvement in the quality of generative models could be used to generate deepfakes for disinformation. On the other hand, it is not needed to point out that a generic algorithm for optimizing neural networks could enable people to train models that generate Deepfakes faster.
        \item The authors should consider possible harms that could arise when the technology is being used as intended and functioning correctly, harms that could arise when the technology is being used as intended but gives incorrect results, and harms following from (intentional or unintentional) misuse of the technology.
        \item If there are negative societal impacts, the authors could also discuss possible mitigation strategies (e.g., gated release of models, providing defenses in addition to attacks, mechanisms for monitoring misuse, mechanisms to monitor how a system learns from feedback over time, improving the efficiency and accessibility of ML).
    \end{itemize}
    
\item {\bf Safeguards}
    \item[] Question: Does the paper describe safeguards that have been put in place for responsible release of data or models that have a high risk for misuse (e.g., pretrained language models, image generators, or scraped datasets)?
    \item[] Answer: \answerNA{}
    \item[] Justification: The paper poses no such risks.
    \item[] Guidelines:
    \begin{itemize}
        \item The answer NA means that the paper poses no such risks.
        \item Released models that have a high risk for misuse or dual-use should be released with necessary safeguards to allow for controlled use of the model, for example by requiring that users adhere to usage guidelines or restrictions to access the model or implementing safety filters. 
        \item Datasets that have been scraped from the Internet could pose safety risks. The authors should describe how they avoided releasing unsafe images.
        \item We recognize that providing effective safeguards is challenging, and many papers do not require this, but we encourage authors to take this into account and make a best faith effort.
    \end{itemize}

\item {\bf Licenses for existing assets}
    \item[] Question: Are the creators or original owners of assets (e.g., code, data, models), used in the paper, properly credited and are the license and terms of use explicitly mentioned and properly respected?
    \item[] Answer: \answerYes{}
    \item[] Justification: The paper cites each dataset properly.
    \item[] Guidelines:
    \begin{itemize}
        \item The answer NA means that the paper does not use existing assets.
        \item The authors should cite the original paper that produced the code package or dataset.
        \item The authors should state which version of the asset is used and, if possible, include a URL.
        \item The name of the license (e.g., CC-BY 4.0) should be included for each asset.
        \item For scraped data from a particular source (e.g., website), the copyright and terms of service of that source should be provided.
        \item If assets are released, the license, copyright information, and terms of use in the package should be provided. For popular datasets, \url{paperswithcode.com/datasets} has curated licenses for some datasets. Their licensing guide can help determine the license of a dataset.
        \item For existing datasets that are re-packaged, both the original license and the license of the derived asset (if it has changed) should be provided.
        \item If this information is not available online, the authors are encouraged to reach out to the asset's creators.
    \end{itemize}

\item {\bf New assets}
    \item[] Question: Are new assets introduced in the paper well documented and is the documentation provided alongside the assets?
    \item[] Answer: \answerNA{}
    \item[] Justification: The paper does not release new assets.
    \item[] Guidelines:
    \begin{itemize}
        \item The answer NA means that the paper does not release new assets.
        \item Researchers should communicate the details of the dataset/code/model as part of their submissions via structured templates. This includes details about training, license, limitations, etc. 
        \item The paper should discuss whether and how consent was obtained from people whose asset is used.
        \item At submission time, remember to anonymize your assets (if applicable). You can either create an anonymized URL or include an anonymized zip file.
    \end{itemize}

\item {\bf Crowdsourcing and research with human subjects}
    \item[] Question: For crowdsourcing experiments and research with human subjects, does the paper include the full text of instructions given to participants and screenshots, if applicable, as well as details about compensation (if any)? 
    \item[] Answer: \answerNA{}
    \item[] Justification: The paper does not involve crowdsourcing nor research with human subjects.
    \item[] Guidelines:
    \begin{itemize}
        \item The answer NA means that the paper does not involve crowdsourcing nor research with human subjects.
        \item Including this information in the supplemental material is fine, but if the main contribution of the paper involves human subjects, then as much detail as possible should be included in the main paper. 
        \item According to the NeurIPS Code of Ethics, workers involved in data collection, curation, or other labor should be paid at least the minimum wage in the country of the data collector. 
    \end{itemize}

\item {\bf Institutional review board (IRB) approvals or equivalent for research with human subjects}
    \item[] Question: Does the paper describe potential risks incurred by study participants, whether such risks were disclosed to the subjects, and whether Institutional Review Board (IRB) approvals (or an equivalent approval/review based on the requirements of your country or institution) were obtained?
    \item[] Answer: \answerNA{}
    \item[] Justification: The paper does not involve crowdsourcing nor research with human subjects.
    \item[] Guidelines:
    \begin{itemize}
        \item The answer NA means that the paper does not involve crowdsourcing nor research with human subjects.
        \item Depending on the country in which research is conducted, IRB approval (or equivalent) may be required for any human subjects research. If you obtained IRB approval, you should clearly state this in the paper. 
        \item We recognize that the procedures for this may vary significantly between institutions and locations, and we expect authors to adhere to the NeurIPS Code of Ethics and the guidelines for their institution. 
        \item For initial submissions, do not include any information that would break anonymity (if applicable), such as the institution conducting the review.
    \end{itemize}

\item {\bf Declaration of LLM usage}
    \item[] Question: Does the paper describe the usage of LLMs if it is an important, original, or non-standard component of the core methods in this research? Note that if the LLM is used only for writing, editing, or formatting purposes and does not impact the core methodology, scientific rigorousness, or originality of the research, declaration is not required.
    \item[] Answer: \answerNA{}
    \item[] Justification: The core method development in this research does not involve LLMs as any important, original, or non-standard components.
    \item[] Guidelines:
    \begin{itemize}
        \item The answer NA means that the core method development in this research does not involve LLMs as any important, original, or non-standard components.
        \item Please refer to our LLM policy (\url{https://neurips.cc/Conferences/2025/LLM}) for what should or should not be described.
    \end{itemize}

\end{enumerate}


\appendix
\newpage
\setcounter{figure}{0}
\renewcommand{\thefigure}{S\arabic{figure}}

\makeatletter
\renewcommand{\theHfigure}{S\arabic{figure}} 
\makeatother

\setcounter{table}{0}
\renewcommand{\thetable}{S\arabic{table}}

\makeatletter
\renewcommand{\theHtable}{S\arabic{table}}   
\makeatother

\section*{Appendix}
The remainder of the paper presents complementary background, experimental validation, and theoretical derivations that support our main results. The appendix is organized as follows.

\begin{enumerate}
    \item \cref{app:spline-theory} briefly connects several deep network architectures to affine spline operators.
    \item \cref{app:exp} details our experimental setup and results.
    \item \cref{app:theory} provides theoretical intuition behind CT.
    \item \cref{app:ct-code} provides pseudocode for S-CT and T-CT.
    \item \cref{app:lora-code} provides pseudocode for LoRA, describing how the method was applied throughout our experiments.
\end{enumerate}

\section{Spline theory}\label{app:spline-theory}
The spline theory of deep learning establishes that a large class of deep network (DN) layers can be modeled as Max Affine Spline Operators (MASO). More precisely:
\begin{theorem}{(Propositions 1-4 in \citep{balestriero2018spline})}
\label{thm:spline}
Any DN layer comprising a linear operator (e.g., fully connected or convolutional layer) followed by a convex and piecewise affine nonlinear operator (e.g., ReLU, leaky-ReLU, absolute value activation, max/average/channel pooling, maxout; with or without skip connections) is a MASO.
\end{theorem}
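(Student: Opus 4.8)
The plan is to reduce the theorem to a single structural fact implicit in \cref{eq:maso}: an operator $\mathbb{R}^D \to \mathbb{R}^K$ is a MASO precisely when each of its $K$ output coordinates is a scalar max-affine spline, i.e. a pointwise maximum of finitely many affine functions of the layer input $\mathbf{x}$. I would therefore argue coordinate-by-coordinate and isolate two closure properties of the class of finite maxima of affine functions: closure under (i) pre-composition with an affine map, and (ii) addition of an affine term. Establishing these two properties, together with the fact that each listed nonlinearity is itself a finite max of affines, immediately assembles the claim.

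First I would record the key fact from convex analysis: every continuous convex piecewise-affine function $g:\mathbb{R}^n\to\mathbb{R}$ with finitely many linear pieces can be written as $g(\mathbf{z})=\max_{r=1,\dots,R}\big(\langle \mathbf{u}_r,\mathbf{z}\rangle + v_r\big)$. The cleanest route is to observe that the epigraph of such a $g$ is a polyhedron, hence an intersection of finitely many half-spaces, each contributing one affine lower bound, and that $g$ is recovered as the pointwise supremum of these supporting affine functions. For the concrete operators in the statement this is immediate and the maxima can be written explicitly: ReLU is $\max(0,z)$, leaky-ReLU is $\max(z,\alpha z)$ with $\alpha<1$, absolute value is $\max(z,-z)$, max/channel pooling is a max over a subset of coordinates, and maxout is a max over affine maps; average pooling is linear and hence a degenerate maximum with $R=1$ that can alternatively be folded into the linear operator.

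Next I would verify the two closure properties. For (i), if $h(\mathbf{z})=\max_r\big(\langle\mathbf{u}_r,\mathbf{z}\rangle+v_r\big)$ and $\mathbf{z}=\mathbf{W}\mathbf{x}+\mathbf{c}$ is the output of the linear operator, substitution gives $h(\mathbf{W}\mathbf{x}+\mathbf{c})=\max_r\big(\langle\mathbf{W}^\top\mathbf{u}_r,\mathbf{x}\rangle+\langle\mathbf{u}_r,\mathbf{c}\rangle+v_r\big)$, again a finite maximum of affine functions of $\mathbf{x}$. For (ii), adding an affine skip term $\ell(\mathbf{x})$ distributes into the maximum, $\max_r\big(a_r(\mathbf{x})\big)+\ell(\mathbf{x})=\max_r\big(a_r(\mathbf{x})+\ell(\mathbf{x})\big)$, since $\ell$ is common to every argument. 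Combining the convex-analysis fact with (i) shows that each output coordinate of ``linear operator followed by convex piecewise-affine nonlinearity'' is a scalar max-affine spline in $\mathbf{x}$; property (ii) shows this is preserved by skip connections. Concatenating over the $K$ output coordinates then yields a MASO by definition, completing the argument.

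The main obstacle I anticipate is the convex-analysis lemma in its fully general form: arguing cleanly that a convex piecewise-affine function equals the finite maximum of its affine pieces, and that it is \emph{convexity} (not mere piecewise-affineness) that forces the max representation—a nonconvex kink cannot be expressed as a maximum of affine functions. For the explicitly max-shaped activations this step is trivial, so the only genuine care needed is confirming that the operators phrased as linear maps (average/channel pooling) and the convexity hypothesis together keep every coordinate inside the finite-max-of-affine class; once that is secured, the two closure properties make the remainder routine.
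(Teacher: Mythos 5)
Your proof is correct, but there is nothing in this paper to compare it against line by line: \cref{thm:spline} is not proved here at all---it is imported wholesale as Propositions 1--4 of \citep{balestriero2018spline} and recorded in the appendix purely as background. The substantive comparison is therefore with that cited work, which establishes the result \emph{constructively}, layer type by layer type, exhibiting explicit MASO parameters $(\mathbf{A},\mathbf{b})$ for fully connected and convolutional layers composed with each listed nonlinearity; that explicit parameterization is what the present paper actually uses downstream (e.g., the region-selection probabilities of \cref{eq:softmax} are written directly in terms of $\mathbf{A}_{r,\cdot}$ and $\mathbf{b}_r$). Your route is more abstract: reduce to scalar output coordinates (which matches the paper's definition of a MASO as a concatenation of $K$ max-affine splines), invoke the convex-analysis fact that a continuous convex piecewise-affine function with finitely many pieces equals the pointwise maximum of its affine pieces, and then close the class under affine pre-composition and under addition of a common affine (skip) term. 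This buys generality and brevity---every convex piecewise-affine nonlinearity is handled at once, with no case analysis, and your remark that it is convexity rather than piecewise-affineness that forces the max representation is exactly the right caveat---at the cost of not producing the explicit $(\mathbf{A},\mathbf{b})$ that the spline machinery manipulates. One small tightening: your epigraph argument tacitly assumes that the epigraph of a convex piecewise-affine function is polyhedral, which is itself a standard but nontrivial fact; it is cleaner to argue directly that convexity makes each piece a global minorant, since for $\mathbf{y}$ in the interior of region $\omega_r$ one has $g(\mathbf{x}) \ge g(\mathbf{y}) + \langle \nabla g(\mathbf{y}), \mathbf{x}-\mathbf{y}\rangle = a_r(\mathbf{x})$ for all $\mathbf{x}$, whence $g = \max_r a_r$ with equality attained on each region.
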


Consequently, a deep network (e.g., MLP, CNN, RNN, ResNet) composed of such linear operators and convex, piecewise affine nonlinear operators is a composition of MASOs. However, it is important to note that the network as a whole is not a MASO but an Affine Spline Operator (ASO). In other words, conditioned on the input, such deep networks are equivalent to an affine operator, but globally, the induced mapping is not convex. 

\begin{figure}[h]
\begin{center}
\centerline{\includegraphics[width=0.9\linewidth]{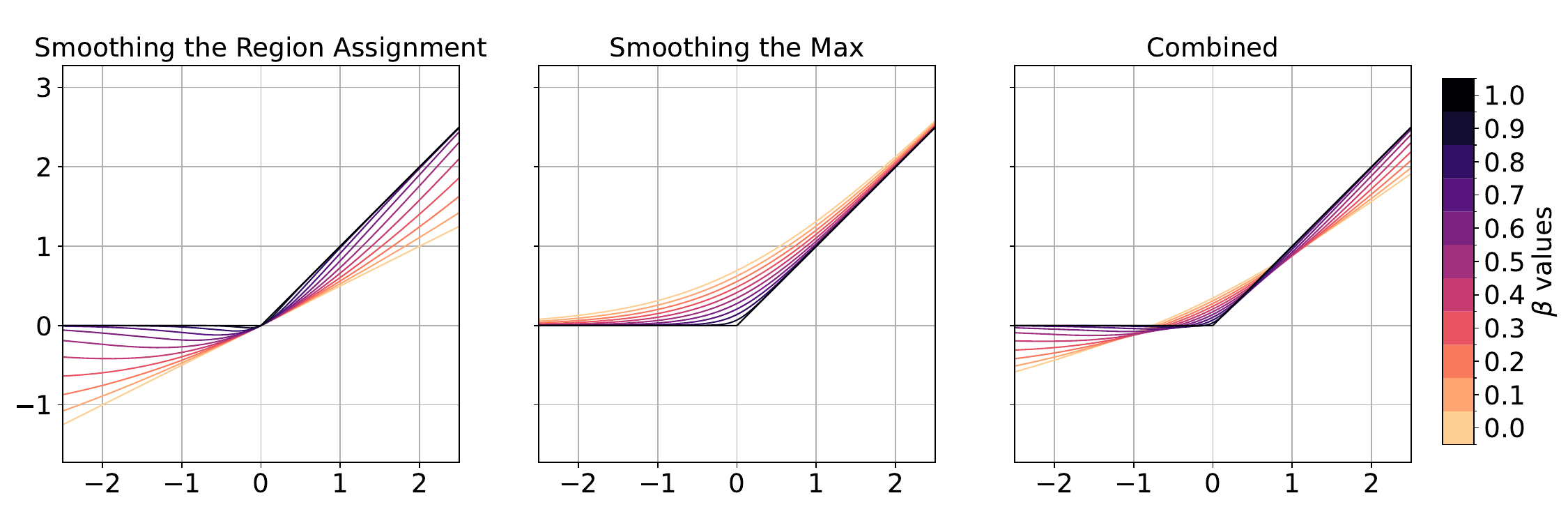}}
\caption{Visualization of nonlinearity smoothing through region assignment smoothing, max smoothing, and their combination. For a ReLU network, \textbf{the combined approach mitigates the opposing biases introduced by the individual methods.}}
\label{fig:examples}
\end{center}
\end{figure}

Building on the MASO interpretation, curvature tuning proposes to smoothen nonlinearities (e.g.\ ReLU) of a DN as a novel form of model steering that avoids retraining or finetuning the learned layers. By recalling \cref{sec:beta-vq}, when smoothing is performed by applying \cref{eq:svq} or \cref{eq:lse} to a DN layer (interpreted as a MASO), the layer's output is statistically biased by either a negative or a positive factor, respectively. In order to counter the bias without retraining, a convex combination of the two equations is used, as shown in \cref{fig:examples} for different values of $\beta$.

\section{Supplementary experimental details}\label{app:exp}
This section provides additional experimental setup details and results, organized to correspond with the subsections in \cref{sec:exp}.

All experiments were conducted using 8 RTX 3090 GPUs and one L40 GPU, with runs performed under random seeds 42, 43, and 44.

\subsection{Improving generalization on downstream datasets with S-CT (\cref{sec:exp-gen})}\label{app:exp-gen}
For each of the 12 downstream datasets, we split the data into training, validation, and test sets. If a dataset does not include a validation set, we hold out 20\% of the training data using stratified sampling. Otherwise, we use the original validation split provided.

All linear classifiers are trained for 20 epochs using the Adam optimizer with a learning rate of $10^{-3}$. We apply linear warm-up during the first epoch and decay the learning rate by a factor of 10 after epoch 10.

Additional results are provided as follows:
\begin{itemize}
    \item \cref{table:gen-all-complete}: mean accuracy (± std) over three runs of ImageNet-pretrained ResNet-18/50/152 and VGG-11 when transferred to 12 downstream datasets, comparing linear probing with and without S-CT.
    \item \cref{table:gen-ct-beta}: mean optimal $\beta$ values (± std) of S-CT across three runs.
    \item \cref{table:gen-abl-all-complete}: ablation experiments on the choice of $c=0.5$ for S-CT.
    \item \cref{fig:beta-acc}: example validation accuracy vs.\ $\beta$ curves over three runs for S-CT.
\end{itemize}

\begin{table}[t]
  \centering
  \caption{Mean accuracy (\%) ± standard deviation over three runs of ImageNet-pretrained ResNet-18/50/152 and VGG-11 when transferred to 12 downstream datasets. The second row under each method indicates the number of trainable parameters (excluding the linear classifier). \textbf{S-CT outperforms linear probing on the frozen backbone, and T-CT surpasses LoRA (rank 1).}}
  \label{table:gen-all-complete}

  \begin{subtable}{\linewidth}\centering\caption{ResNet-18}\label{table:gen-resnet-18-complete}
  \begin{tabular}{lcccc}
    \toprule
    Dataset & Frozen & S-CT & LoRA & T-CT \\
    & (0) & (1) & (35923) & (3968) \\
    \midrule
    Arabic Characters   & 81.91 ± 0.15 & 87.65 ± 0.18 & 93.37 ± 0.31 & \textbf{93.76 ± 0.22} \\
    Arabic Digits       & 97.93 ± 0.08 & 98.77 ± 0.01 & \textbf{99.08 ± 0.05} & 99.03 ± 0.01 \\
    Beans               & 87.76 ± 2.05 & 90.36 ± 1.19 & 93.23 ± 0.37 & \textbf{94.01 ± 0.37} \\
    CUB-200             & 62.84 ± 0.29 & 63.18 ± 0.28 & 54.83 ± 0.37 & \textbf{64.30 ± 0.16} \\
    DTD                 & 62.80 ± 0.42 & 62.66 ± 0.24 & 54.36 ± 0.31 & \textbf{63.62 ± 0.67} \\
    FashionMNIST        & 88.63 ± 0.13 & 88.70 ± 0.10 & \textbf{91.65 ± 0.12} & 91.07 ± 0.16 \\
    FGVC-Aircraft       & 36.80 ± 0.37 & 38.68 ± 0.05 & 29.19 ± 1.00 & \textbf{46.44 ± 0.49} \\
    Flowers102          & 80.86 ± 0.29 & 81.97 ± 0.26 & 67.53 ± 0.76 & \textbf{86.55 ± 0.21} \\
    Food101             & 61.41 ± 0.07 & 62.27 ± 0.25 & 64.40 ± 0.08 & \textbf{66.04 ± 0.17} \\
    DermaMNIST          & 74.83 ± 0.23 & 75.05 ± 0.60 & 74.21 ± 0.50 & \textbf{77.66 ± 0.29} \\
    OCTMNIST            & 65.03 ± 0.69 & 67.27 ± 0.23 & \textbf{74.27 ± 0.49} & 69.53 ± 1.11 \\
    PathMNIST           & 86.77 ± 0.04 & 87.51 ± 0.05 & \textbf{87.62 ± 0.12} & 87.17 ± 0.66 \\
    \midrule
    \textit{Average}             & 73.96 & 75.34 & 73.64 & \textbf{78.26} \\
    \bottomrule
  \end{tabular}
  \end{subtable}

  \begin{subtable}{\linewidth}\centering\caption{ResNet-50}\label{table:gen-resnet-50-complete}
  \begin{tabular}{lcccc}
    \toprule
    Dataset & Frozen & S-CT & LoRA & T-CT \\
    & (0) & (1) & (79443) & (45440) \\
    \midrule
    Arabic Characters   & 80.65 ± 0.07 & 83.66 ± 0.41 & 94.21 ± 0.28 & \textbf{95.67 ± 0.03} \\
    Arabic Digits       & 98.33 ± 0.02 & 98.37 ± 0.06 & 99.08 ± 0.00 & \textbf{99.16 ± 0.03} \\
    Beans               & 89.58 ± 0.74 & 91.93 ± 0.90 & 94.79 ± 0.74 & \textbf{95.57 ± 0.74} \\
    CUB-200             & 65.23 ± 0.43 & 64.62 ± 0.32 & 66.17 ± 0.51 & \textbf{71.03 ± 0.64} \\
    DTD                 & \textbf{67.34 ± 0.16} & 66.91 ± 0.14 & 64.70 ± 0.42 & 65.07 ± 0.37 \\
    FashionMNIST        & 90.05 ± 0.07 & 90.34 ± 0.23 & 92.19 ± 0.17 & \textbf{92.78 ± 0.06} \\
    FGVC-Aircraft       & 38.03 ± 0.32 & 41.16 ± 0.32 & 41.99 ± 0.03 & \textbf{55.70 ± 0.76} \\
    Flowers102          & 84.00 ± 0.06 & 83.84 ± 0.13 & 82.58 ± 0.47 & \textbf{87.62 ± 0.28} \\
    Food101             & 68.06 ± 0.11 & 68.02 ± 0.11 & 71.42 ± 0.14 & \textbf{73.60 ± 0.13} \\
    DermaMNIST          & 75.94 ± 0.12 & 75.89 ± 0.03 & 75.73 ± 0.72 & \textbf{78.02 ± 0.50} \\
    OCTMNIST            & 67.53 ± 0.21 & 68.00 ± 0.17 & \textbf{75.90 ± 0.33} & 74.13 ± 1.65 \\
    PathMNIST           & 90.08 ± 0.22 & \textbf{90.26 ± 0.20} & 85.43 ± 1.99 & 87.33 ± 0.74 \\
    \midrule
    \textit{Average}             & 76.24 & 76.92 & 78.68 & \textbf{81.31} \\
    \bottomrule
  \end{tabular}
  \end{subtable}

  \begin{subtable}{\linewidth}\centering\caption{ResNet-152}\label{table:gen-resnet-152-complete}
  \begin{tabular}{lcccc}
    \toprule
    Dataset & Frozen & S-CT & LoRA & T-CT \\
    & (0) & (1) & (243283) & (143744) \\
    \midrule
    Arabic Characters   & 79.86 ± 0.12 & 79.21 ± 0.55 & 95.96 ± 0.21 & \textbf{96.47 ± 0.39} \\
    Arabic Digits       & 98.07 ± 0.05 & 98.15 ± 0.10 & \textbf{99.15 ± 0.04} & 99.10 ± 0.05 \\
    Beans               & 87.50 ± 1.10 & 87.50 ± 0.78 & 93.75 ± 1.91 & \textbf{96.35 ± 1.33} \\
    CUB-200             & 67.68 ± 0.54 & 68.15 ± 0.62 & 70.59 ± 0.72 & \textbf{73.04 ± 0.19} \\
    DTD                 & 66.95 ± 0.03 & \textbf{66.97 ± 0.05} & 66.63 ± 0.07 & 63.39 ± 0.34 \\
    FashionMNIST        & 90.37 ± 0.11 & 90.44 ± 0.16 & 92.77 ± 0.04 & \textbf{93.39 ± 0.12} \\
    FGVC-Aircraft       & 38.74 ± 0.16 & 38.51 ± 0.14 & 48.84 ± 0.54 & \textbf{58.16 ± 0.31} \\
    Flowers102          & 82.98 ± 0.16 & 83.28 ± 0.25 & \textbf{84.40 ± 0.74} & 83.43 ± 1.01 \\
    Food101             & 71.11 ± 0.09 & 71.13 ± 0.08 & 74.66 ± 0.08 & \textbf{76.08 ± 0.15} \\
    DermaMNIST          & 75.68 ± 0.47 & 76.23 ± 0.14 & 76.91 ± 0.79 & \textbf{77.94 ± 0.60} \\
    OCTMNIST            & 69.27 ± 0.98 & 69.10 ± 1.47 & \textbf{76.43 ± 0.54} & 75.17 ± 2.10 \\
    PathMNIST           & \textbf{89.91 ± 0.12} & 89.82 ± 0.09 & 84.94 ± 1.27 & 83.60 ± 0.42 \\
    \midrule
    \textit{Average}             & 76.51 & 76.54 & 80.42 & \textbf{81.34} \\
    \bottomrule
  \end{tabular}
  \end{subtable}
\end{table}

\begin{table}[t]
  \ContinuedFloat
  \centering
  \captionsetup{labelformat=empty,list=off}
  \begin{subtable}{\linewidth}\centering\caption{VGG-11}\label{table:gen-vgg-11-complete}
  \begin{tabular}{lcccc}
    \toprule
    Dataset & Frozen & S-CT & LoRA & T-CT \\
    & (0) & (1) & (60315) & (21888) \\
    \midrule
    Arabic Characters   & 81.19 ± 0.33 & 83.04 ± 0.35 & 88.88 ± 0.37 & \textbf{93.04 ± 0.50} \\
    Arabic Digits       & 97.97 ± 0.03 & 98.19 ± 0.01 & 98.76 ± 0.05 & \textbf{99.01 ± 0.04} \\
    Beans               & 91.67 ± 0.97 & 90.89 ± 0.37 & \textbf{93.75 ± 1.69} & 92.45 ± 1.95 \\
    CUB-200             & 61.04 ± 0.29 & 60.97 ± 0.68 & 59.22 ± 0.14 & \textbf{63.13 ± 0.32} \\
    DTD                 & 65.07 ± 0.45 & 65.05 ± 0.16 & 63.46 ± 0.76 & \textbf{65.25 ± 0.56} \\
    FashionMNIST        & 89.44 ± 0.11 & 89.35 ± 0.19 & \textbf{90.52 ± 0.15} & 90.49 ± 0.16 \\
    FGVC-Aircraft       & 39.48 ± 0.09 & 41.34 ± 0.40 & 39.29 ± 1.02 & \textbf{47.79 ± 0.23} \\
    Flowers102          & 80.32 ± 0.21 & 80.44 ± 0.30 & 78.38 ± 0.59 & \textbf{84.31 ± 0.20} \\
    Food101             & 61.03 ± 0.22 & 60.94 ± 0.03 & 64.22 ± 0.43 & \textbf{66.43 ± 0.05} \\
    DermaMNIST          & 75.94 ± 0.29 & 76.24 ± 0.06 & 74.06 ± 0.22 & \textbf{77.97 ± 0.38} \\
    OCTMNIST            & 67.33 ± 0.92 & 68.13 ± 0.12 & \textbf{72.13 ± 1.18} & 70.77 ± 0.45 \\
    PathMNIST           & 86.85 ± 0.14 & 87.55 ± 0.03 & \textbf{89.23 ± 0.43} & 88.90 ± 0.53 \\
    \midrule
    \textit{Average}             & 74.78 & 75.18 & 75.99 & \textbf{78.30} \\
    \bottomrule
  \end{tabular}
  \end{subtable}
\end{table}

\begin{table}[htbp]
  \caption{Mean $\beta$ ± standard deviation of S-CT over three runs of ImageNet-pretrained ResNet-18/50/152 and VGG-11 when transferred to 12 downstream datasets. \textbf{The average optimal $\beta$ values are consistently high, ranging from 0.84 to 0.96 across models.}}
  \label{table:gen-ct-beta}
  \centering
  \begin{tabular}{lcccc}
    \toprule
    Dataset & ResNet-18 & ResNet-50 & ResNet-152 & VGG-11 \\
    \midrule
    Arabic Characters   & 0.77 ± 0.01 & 0.89 ± 0.01 & 0.96 ± 0.00 & 0.88 ± 0.02 \\
    Arabic Digits       & 0.75 ± 0.01 & 0.93 ± 0.04 & 0.95 ± 0.01 & 0.85 ± 0.01 \\
    Beans               & 0.76 ± 0.02 & 0.94 ± 0.01 & 0.98 ± 0.02 & 0.76 ± 0.00 \\
    CUB-200             & 0.91 ± 0.02 & 0.93 ± 0.01 & 0.94 ± 0.01 & 0.86 ± 0.02 \\
    DTD                 & 0.88 ± 0.02 & 0.98 ± 0.01 & 1.00 ± 0.00 & 0.93 ± 0.01 \\
    FashionMNIST        & 0.92 ± 0.01 & 0.95 ± 0.00 & 0.97 ± 0.02 & 0.98 ± 0.02 \\
    FGVC-Aircraft       & 0.82 ± 0.02 & 0.90 ± 0.00 & 0.98 ± 0.03 & 0.82 ± 0.04 \\
    Flowers102          & 0.84 ± 0.02 & 0.96 ± 0.01 & 0.95 ± 0.00 & 0.92 ± 0.02 \\
    Food101             & 0.87 ± 0.01 & 0.98 ± 0.00 & 0.99 ± 0.01 & 0.97 ± 0.02 \\
    DermaMNIST          & 0.94 ± 0.08 & 0.95 ± 0.00 & 0.95 ± 0.00 & 0.93 ± 0.02 \\
    OCTMNIST            & 0.80 ± 0.00 & 0.94 ± 0.01 & 0.99 ± 0.01 & 0.96 ± 0.00 \\
    PathMNIST           & 0.83 ± 0.00 & 0.98 ± 0.01 & 0.92 ± 0.00 & 0.91 ± 0.00 \\
    \midrule
    \textit{Average}             & 0.84 & 0.94 & 0.96 & 0.90 \\
    \bottomrule
  \end{tabular}
\end{table}

\begin{figure}[hbtp]
\centering
\begin{subfigure}{0.32\linewidth}
    \centering
    \includegraphics[width=\linewidth]{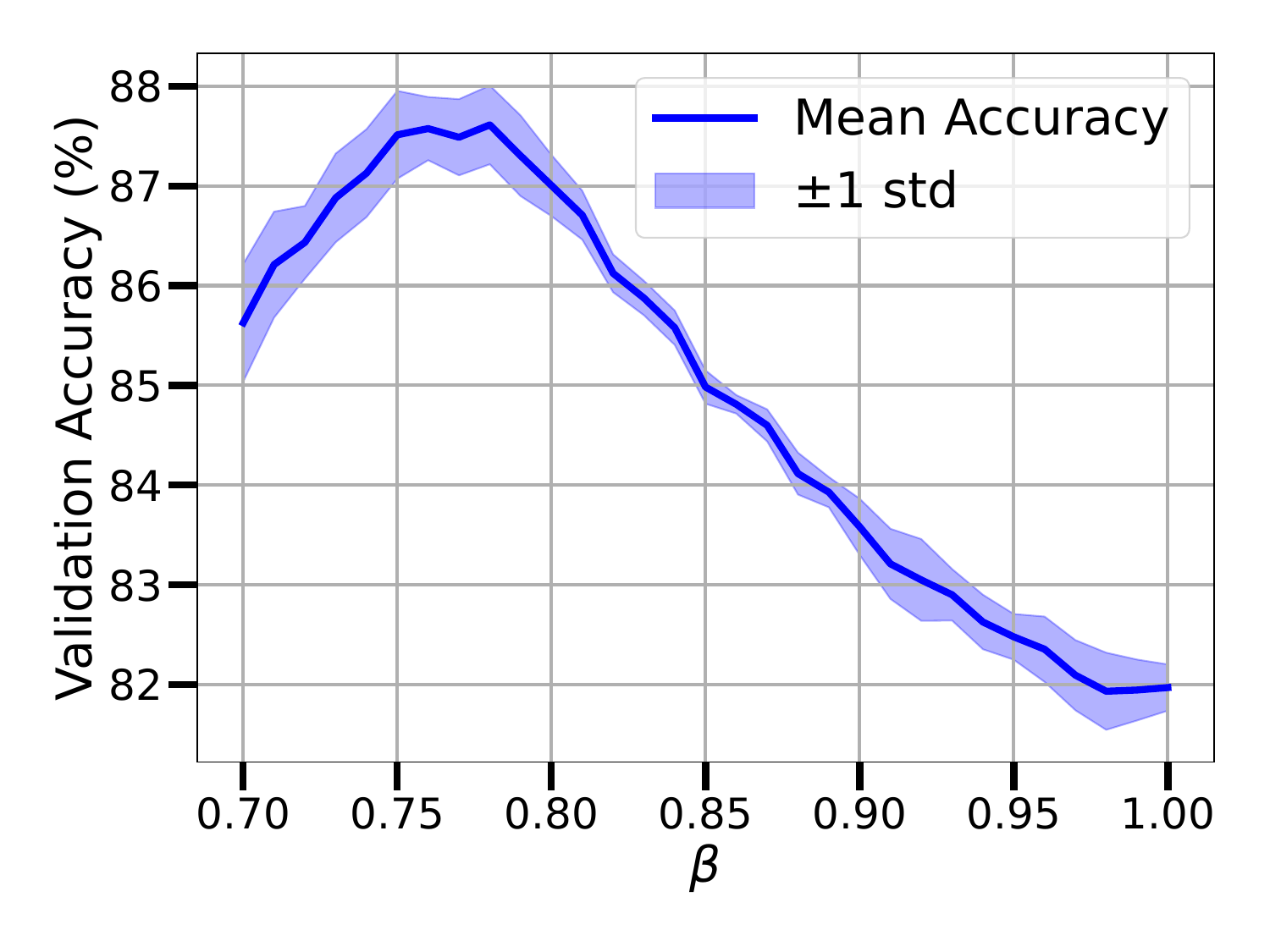}
    \caption{ResNet-18 on Arabic Characters}
    \label{fig:resnet18_arab_char_beta_acc}
\end{subfigure}
\begin{subfigure}{0.32\linewidth}
    \centering
    \includegraphics[width=\linewidth]{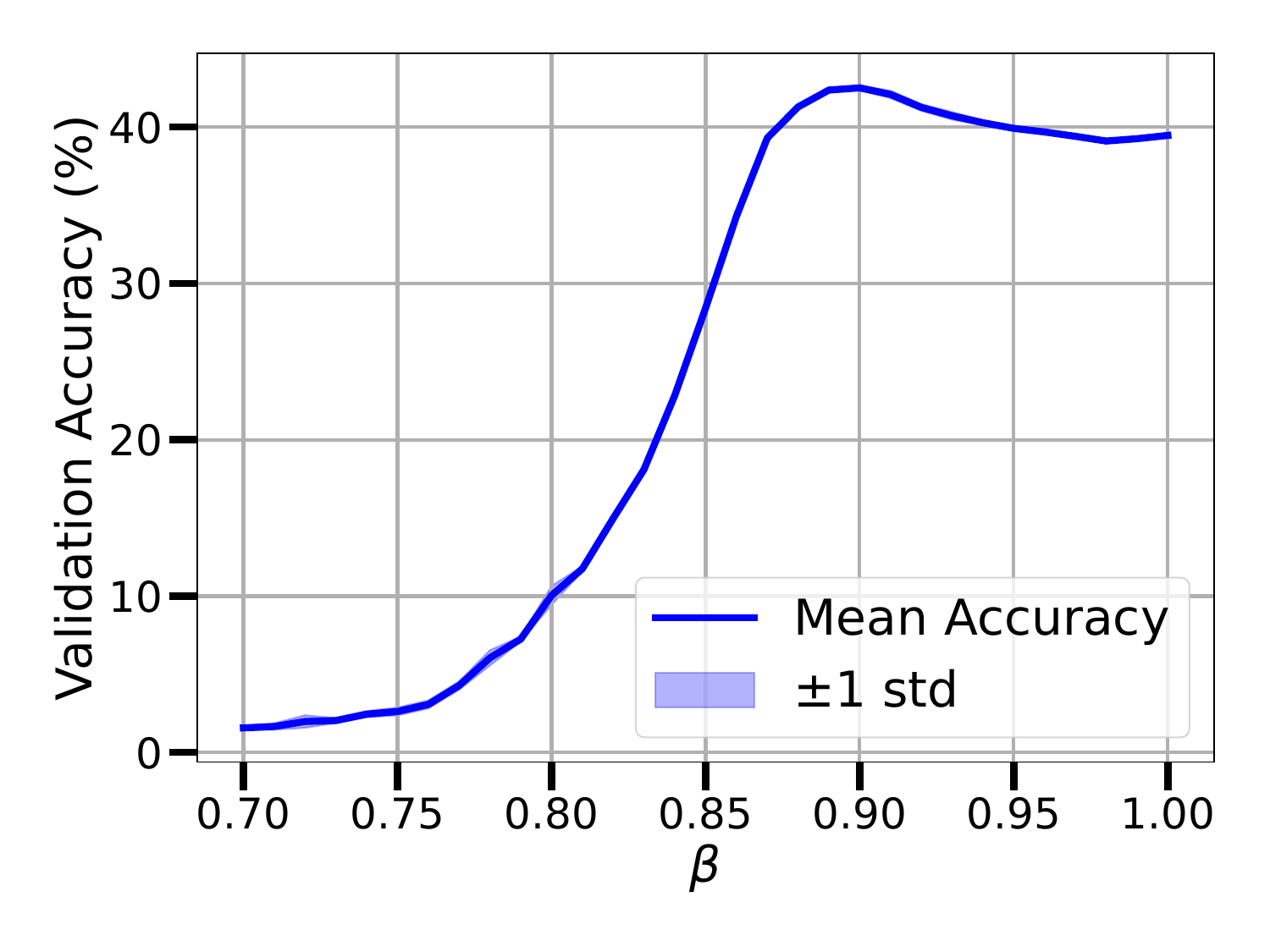}
    \caption{ResNet-50 on FGVC-Aircraft}
    \label{fig:resnet50_fgvc_beta_acc}
\end{subfigure}
\begin{subfigure}{0.32\linewidth}
    \centering
    \includegraphics[width=\linewidth]{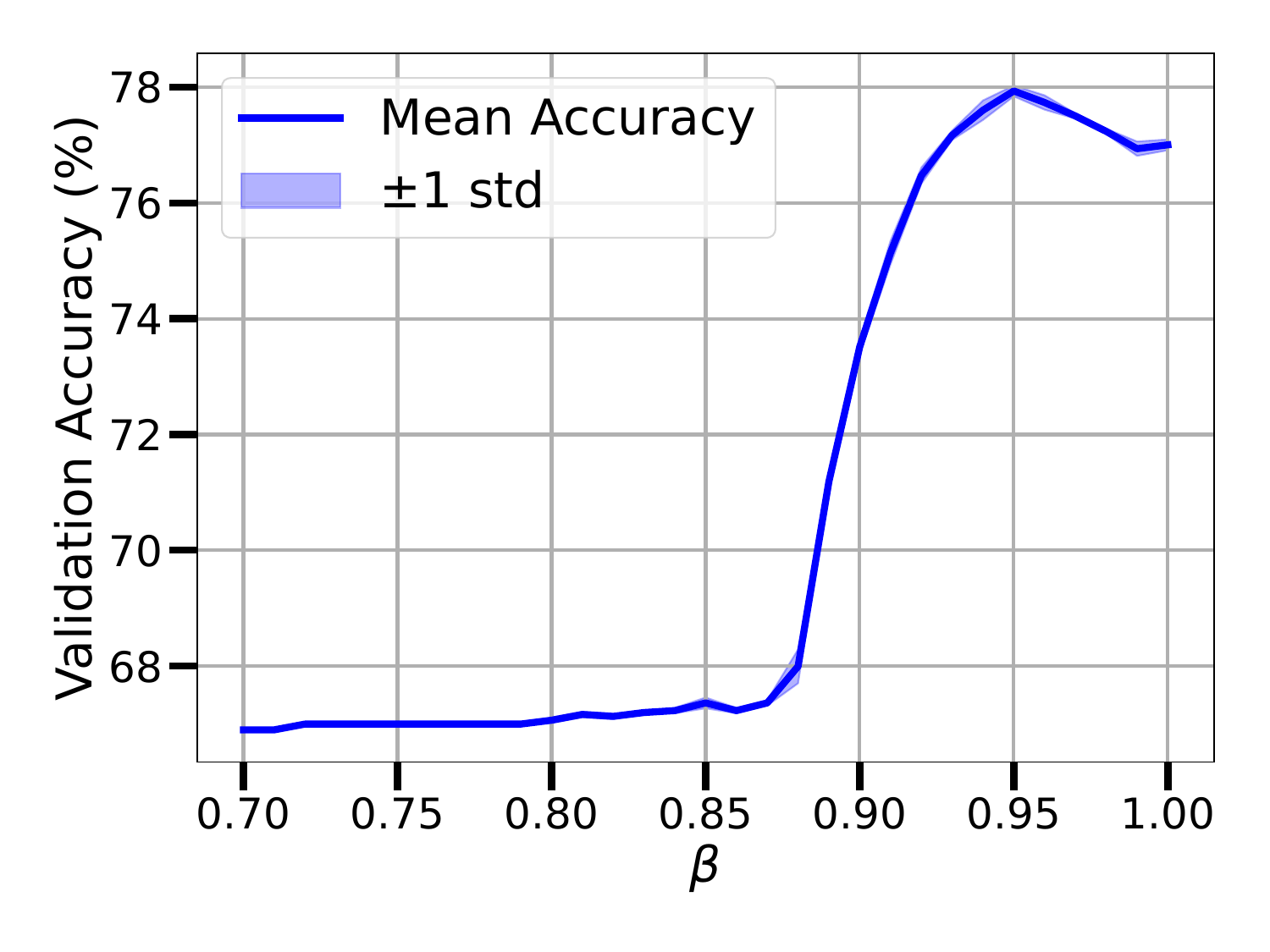}
    \caption{ResNet-152 on DermaMNIST}
    \label{fig:resnet152_derma_beta_acc}
\end{subfigure}
\caption{Validation accuracy (\%) of S-CT during the $\beta$ search, averaged over three runs. \textbf{The accuracy curve varies smoothly and typically peaks in the middle of the $\beta$ range.}}
\label{fig:beta-acc}
\end{figure}

\begin{table}[t]
  \centering
  \caption{Mean accuracy (\%) ± standard deviation over three runs of ImageNet-pretrained ResNet-18/50/152 and VGG-11 when transferred to 12 downstream datasets. The second row under each method indicates the number of trainable parameters (excluding the linear classifier). \textbf{By setting $c=0.5$, S-CT performs better than SiLU ($c=1$) and Softplus ($c=0$) on ResNet-18 and ResNet-50, but slightly worse on ResNet-152.}}
  \label{table:gen-abl-all-complete}
  \begin{subtable}{\linewidth}\centering\caption{ResNet-18}\label{table:gen-abl-resnet-18-complete}
  \begin{tabular}{lcccc}
    \toprule
    Dataset & Frozen & S-CT & SiLU & Softplus \\
            & (0)    & (1)         & (1)  & (1) \\
    \midrule
    Arabic Characters   & 81.91 ± 0.15 & \textbf{87.65 ± 0.18} & 81.91 ± 0.19 & 84.89 ± 0.12 \\
    Arabic Digits       & 97.93 ± 0.08 & \textbf{98.77 ± 0.01} & 97.95 ± 0.11 & 98.64 ± 0.06 \\
    Beans               & 87.76 ± 2.05 & \textbf{90.36 ± 1.19} & 88.80 ± 1.19 & 88.02 ± 1.19 \\
    CUB-200             & 62.84 ± 0.29 & \textbf{63.18 ± 0.28} & 62.90 ± 0.33 & 63.08 ± 0.11 \\
    DTD                 & 62.80 ± 0.42 & 62.66 ± 0.24 & 62.89 ± 0.33 & \textbf{63.24 ± 0.23} \\
    FashionMNIST        & 88.63 ± 0.13 & 88.70 ± 0.10 & 88.63 ± 0.16 & \textbf{88.93 ± 0.09} \\
    FGVC-Aircraft       & 36.80 ± 0.37 & \textbf{38.68 ± 0.05} & 36.29 ± 0.24 & 37.44 ± 0.68 \\
    Flowers102          & 80.86 ± 0.29 & 81.97 ± 0.26 & 80.86 ± 0.35 & \textbf{83.14 ± 0.03} \\
    Food101             & 61.41 ± 0.07 & 62.27 ± 0.25 & 61.37 ± 0.05 & \textbf{62.76 ± 0.06} \\
    DermaMNIST          & 74.83 ± 0.23 & 75.05 ± 0.60 & 74.63 ± 0.19 & \textbf{75.13 ± 0.22} \\
    OCTMNIST            & 65.03 ± 0.69 & \textbf{67.27 ± 0.23} & 65.10 ± 0.82 & 66.57 ± 0.75 \\
    PathMNIST           & 86.77 ± 0.04 & 87.51 ± 0.05 & 86.77 ± 0.05 & \textbf{87.84 ± 0.25} \\
    \midrule
    \textit{Average}    & 73.96 & \textbf{75.34} & 74.01 & 74.97 \\
    \bottomrule
  \end{tabular}
  \end{subtable}

  \begin{subtable}{\linewidth}\centering\caption{ResNet-50}\label{table:gen-abl-resnet-50-complete}
  \begin{tabular}{lcccc}
    \toprule
    Dataset & Frozen & S-CT & SiLU & Softplus \\
            & (0)    & (1)         & (1)  & (1) \\
    \midrule
    Arabic Characters   & 80.65 ± 0.07 & 83.66 ± 0.41 & \textbf{86.46 ± 0.03} & 81.10 ± 0.21 \\
    Arabic Digits       & 98.33 ± 0.02 & 98.37 ± 0.06 & \textbf{98.56 ± 0.11} & 98.41 ± 0.15 \\
    Beans               & 89.58 ± 0.74 & \textbf{91.93 ± 0.90} & 86.46 ± 1.19 & 91.15 ± 0.90 \\
    CUB-200             & 65.23 ± 0.43 & 64.62 ± 0.32 & 65.12 ± 0.53 & \textbf{65.61 ± 0.50} \\
    DTD                 & 67.34 ± 0.16 & 66.91 ± 0.14 & \textbf{67.27 ± 0.08} & 67.06 ± 0.62 \\
    FashionMNIST        & 90.05 ± 0.07 & 90.34 ± 0.23 & 89.96 ± 0.01 & \textbf{90.35 ± 0.11} \\
    FGVC-Aircraft       & 38.03 ± 0.32 & \textbf{41.16 ± 0.32} & 38.03 ± 0.39 & 40.09 ± 0.02 \\
    Flowers102          & 84.00 ± 0.06 & 83.84 ± 0.13 & \textbf{84.09 ± 0.11} & 83.94 ± 0.14 \\
    Food101             & 68.06 ± 0.11 & 68.02 ± 0.11 & \textbf{68.14 ± 0.06} & 67.90 ± 0.15 \\
    DermaMNIST          & 75.94 ± 0.12 & \textbf{75.89 ± 0.03} & 75.79 ± 0.12 & 75.79 ± 0.32 \\
    OCTMNIST            & 67.53 ± 0.21 & \textbf{68.00 ± 0.17} & 67.33 ± 0.15 & 67.00 ± 0.20 \\
    PathMNIST           & 90.08 ± 0.22 & \textbf{90.26 ± 0.20} & 89.95 ± 0.13 & 89.99 ± 0.14 \\
    \midrule
    \textit{Average}    & 76.24 & \textbf{76.92} & 76.43 & 76.53 \\
    \bottomrule
  \end{tabular}
  \end{subtable}

  \begin{subtable}{\linewidth}\centering\caption{ResNet-152}\label{table:gen-abl-resnet-152-complete}
  \begin{tabular}{lcccc}
    \toprule
    Dataset & Frozen & S-CT & SiLU & Softplus \\
            & (0)    & (1)         & (1)  & (1) \\
    \midrule
    Arabic Characters   & 79.86 ± 0.12 & 79.21 ± 0.55 & \textbf{80.24 ± 0.44} & 79.83 ± 0.27 \\
    Arabic Digits       & 98.07 ± 0.05 & 98.15 ± 0.10 & \textbf{98.20 ± 0.18} & 97.98 ± 0.01 \\
    Beans               & 87.50 ± 1.10 & 87.50 ± 0.78 & \textbf{88.80 ± 1.97} & \textbf{88.80 ± 2.39} \\
    CUB-200             & 67.68 ± 0.54 & 68.15 ± 0.62 & 67.68 ± 0.66 & \textbf{69.80 ± 0.32} \\
    DTD                 & 66.95 ± 0.03 & 66.97 ± 0.05 & 66.86 ± 0.23 & 66.95 ± 0.03 \\
    FashionMNIST        & 90.37 ± 0.11 & 90.44 ± 0.16 & 90.28 ± 0.26 & \textbf{90.60 ± 0.03} \\
    FGVC-Aircraft       & 38.74 ± 0.16 & 38.51 ± 0.14 & 38.74 ± 0.20 & \textbf{39.20 ± 0.38} \\
    Flowers102          & 82.98 ± 0.16 & 83.28 ± 0.25 & 82.97 ± 0.19 & \textbf{83.54 ± 0.13} \\
    Food101             & 71.11 ± 0.09 & 71.13 ± 0.08 & 71.11 ± 0.11 & \textbf{71.19 ± 0.08} \\
    DermaMNIST          & 75.68 ± 0.47 & \textbf{76.23 ± 0.14} & 74.76 ± 0.69 & 76.16 ± 0.10 \\
    OCTMNIST            & 69.27 ± 0.98 & 69.10 ± 1.47 & 69.47 ± 1.29 & \textbf{69.90 ± 0.50} \\
    PathMNIST           & 89.91 ± 0.12 & 89.82 ± 0.09 & 89.91 ± 0.15 & \textbf{90.75 ± 0.06} \\
    \midrule
    \textit{Average}    & 76.51 & 76.54 & 76.59 & \textbf{77.06} \\
    \bottomrule
  \end{tabular}
  \end{subtable}
\end{table}

\subsection{T-CT is comparable to LoRA with fewer parameters (\cref{sec:exp-train-ct})} \label{app:exp-train-ct}

Both T-CT and LoRA are trained for 20 epochs using the Adam optimizer. To ensure proper convergence, we use different learning rates: for T-CT, a learning rate of $10^{-1}$ is applied to the $(\beta, c)$ parameters and $10^{-3}$ to the linear classifier; for LoRA, a learning rate of $10^{-4}$ is used for both the adapter parameters and the classifier. As before, we apply linear warm-up during the first epoch and decay the learning rate by a factor of 10 after epoch 10.

Additional results are provided as follows:
\begin{itemize}
    \item \cref{table:gen-all-complete}: mean accuracy (± std) over three runs of ImageNet-pretrained ResNet-18/50/152 and VGG-11 when transferred to 12 downstream datasets, comparing LoRA and T-CT.
    \item \cref{table:gen-all-tuned-lora}: additional experiments on LoRA with rank $r \in \{1,2,4\}$ and scaling factors $\alpha \in \{r, 2r, 4r\}$.
    \item \cref{table:gen-train-ct-beta,table:gen-train-ct-coeff}: mean (± std) of the learned $\beta$ and $c$ values of T-CT across three runs.
    \item \cref{fig:beta-c-dist-common,fig:beta-c-dist-uncommon}: example distributions of $\beta$ and $c$ values in T-CT, illustrating commonly and uncommonly observed patterns.
\end{itemize}

\begin{table}[t]
  \centering
  \caption{Mean accuracy (\%) ± standard deviation over three runs of ImageNet-pretrained ResNet-18/50/152 when transferred to 12 downstream datasets. The second row under each method indicates the number of trainable parameters (excluding the linear classifier). For each dataset and LoRA rank $r$, we report the best test accuracy achieved among the candidate scaling factors $\alpha \in \{r, 2r, 4r\}$. \textbf{T-CT can still outperform LoRA under this more challenging setting.}}
  \label{table:gen-all-tuned-lora}
  \begin{subtable}{\linewidth}\centering\caption{ResNet-18}\label{table:gen-resnet-18-tuned-lora}
  \begin{tabular}{lcccc}
    \toprule
    Dataset & T-CT & LoRA ($r=1$) & LoRA ($r=2$) & LoRA ($r=4$) \\
    & (3968) & (35923) & (71846) & (143692) \\
    \midrule
    Arabic Characters   & 93.76 ± 0.22 & 94.23 ± 0.13 & 95.30 ± 0.18 & \textbf{96.26 ± 0.12} \\
    Arabic Digits       & 99.03 ± 0.01 & 99.12 ± 0.02 & 99.21 ± 0.05 & \textbf{99.23 ± 0.03} \\
    Beans               & 94.01 ± 0.37 & 94.01 ± 0.74 & \textbf{95.83 ± 0.37} & \textbf{95.83 ± 1.33} \\
    CUB-200             & \textbf{64.30 ± 0.16} & 54.83 ± 0.37 & 56.11 ± 0.26 & 57.47 ± 0.51 \\
    DTD                 & \textbf{63.62 ± 0.67} & 54.36 ± 0.31 & 56.17 ± 0.22 & 57.93 ± 0.43 \\
    FashionMNIST        & 91.07 ± 0.16 & 92.03 ± 0.11 & 92.83 ± 0.04 & \textbf{93.50 ± 0.05} \\
    FGVC-Aircraft       & \textbf{46.44 ± 0.49} & 29.50 ± 0.92 & 32.94 ± 0.40 & 39.13 ± 0.32 \\
    Flowers102          & \textbf{86.55 ± 0.21} & 67.53 ± 0.76 & 69.68 ± 0.89 & 73.30 ± 0.46 \\
    Food101             & 66.04 ± 0.17 & 64.40 ± 0.08 & 65.96 ± 0.32 & \textbf{66.97 ± 0.35} \\
    DermaMNIST          & \textbf{77.66 ± 0.29} & 75.54 ± 0.20 & 76.79 ± 0.77 & 76.72 ± 0.51 \\
    OCTMNIST            & 69.53 ± 1.11 & 74.83 ± 0.17 & 76.37 ± 0.66 & \textbf{76.47 ± 0.26} \\
    PathMNIST           & 87.17 ± 0.66 & 87.78 ± 0.18 & 88.08 ± 1.15 & \textbf{88.85 ± 0.48} \\
    \midrule
    \textit{Average}             & \textbf{78.26} & 74.01 & 75.44 & 76.80 \\
    \bottomrule
  \end{tabular}
  \end{subtable}

  \begin{subtable}{\linewidth}\centering\caption{ResNet-50}\label{table:gen-resnet-50-tuned-lora}
  \begin{tabular}{lcccc}
    \toprule
    Dataset & T-CT & LoRA ($r=1$) & LoRA ($r=2$) & LoRA ($r=4$) \\
    & (45440) & (79443) & (158886) & (317772) \\
    \midrule
    Arabic Characters   & 95.67 ± 0.03 & 94.38 ± 0.22 & 95.67 ± 0.34 & \textbf{96.30 ± 0.04} \\
    Arabic Digits       & 99.16 ± 0.03 & 99.09 ± 0.01 & \textbf{99.22 ± 0.09} & \textbf{99.22 ± 0.02} \\
    Beans               & 95.57 ± 0.74 & \textbf{96.35 ± 0.37} & 96.09 ± 1.69 & 95.31 ± 0.64 \\
    CUB-200             & \textbf{71.03 ± 0.64} & 66.17 ± 0.51 & 67.91 ± 0.53 & 68.93 ± 0.23 \\
    DTD                 & 65.07 ± 0.37 & 64.79 ± 0.30 & 64.91 ± 0.49 & \textbf{67.07 ± 0.31} \\
    FashionMNIST        & 92.78 ± 0.06 & 92.19 ± 0.17 & 92.90 ± 0.14 & \textbf{93.59 ± 0.13} \\
    FGVC-Aircraft       & \textbf{55.70 ± 0.76} & 42.12 ± 0.17 & 47.46 ± 0.19 & 52.64 ± 0.47 \\
    Flowers102          & \textbf{87.62 ± 0.28} & 82.58 ± 0.47 & 83.39 ± 0.35 & 84.63 ± 0.29 \\
    Food101             & 73.60 ± 0.13 & 71.42 ± 0.14 & 73.01 ± 0.24 & \textbf{74.89 ± 0.05} \\
    DermaMNIST          & \textbf{78.02 ± 0.50} & 76.21 ± 0.27 & 77.26 ± 0.36 & 77.39 ± 0.47 \\
    OCTMNIST            & 74.13 ± 1.65 & 76.23 ± 0.09 & 76.07 ± 1.19 & \textbf{77.83 ± 0.82} \\
    PathMNIST           & 87.33 ± 0.74 & 87.29 ± 0.67 & 86.04 ± 0.23 & \textbf{87.44 ± 0.16} \\
    \midrule
    \textit{Average}             & \textbf{81.31} & 79.07 & 79.99 & 81.27 \\
    \bottomrule
  \end{tabular}
  \end{subtable}

  \begin{subtable}{\linewidth}\centering\caption{ResNet-152}\label{table:gen-resnet-152-tuned-lora}
  \begin{tabular}{lcccc}
    \toprule
    Dataset & T-CT & LoRA ($r=1$) & LoRA ($r=2$) & LoRA ($r=4$) \\
    & (143744) & (243283) & (486566) & (973132) \\
    \midrule
    Arabic Characters   & 96.47 ± 0.39 & 96.00 ± 0.16 & 96.43 ± 0.02 & \textbf{96.87 ± 0.11} \\
    Arabic Digits       & 99.10 ± 0.05 & 99.16 ± 0.03 & 99.21 ± 0.03 & \textbf{99.25 ± 0.01} \\
    Beans               & 96.35 ± 1.33 & 94.53 ± 1.10 & \textbf{97.92 ± 0.37} & 97.14 ± 0.74 \\
    CUB-200             & \textbf{73.04 ± 0.19} & 70.75 ± 0.59 & 70.94 ± 0.15 & 71.72 ± 0.43 \\
    DTD                 & 63.39 ± 0.34 & 66.63 ± 0.07 & 67.66 ± 0.50 & \textbf{68.28 ± 0.51} \\
    FashionMNIST        & 93.39 ± 0.12 & 92.77 ± 0.04 & 93.49 ± 0.04 & \textbf{93.98 ± 0.14} \\
    FGVC-Aircraft       & 58.16 ± 0.31 & 49.06 ± 0.26 & 55.82 ± 1.04 & \textbf{59.98 ± 0.26} \\
    Flowers102          & 83.43 ± 1.01 & 84.51 ± 0.58 & 84.83 ± 0.17 & \textbf{86.24 ± 0.04} \\
    Food101             & 76.08 ± 0.15 & 74.66 ± 0.08 & 76.00 ± 0.16 & \textbf{76.86 ± 0.10} \\
    DermaMNIST          & \textbf{77.94 ± 0.60} & 77.02 ± 0.69 & 77.31 ± 0.75 & 77.46 ± 0.16 \\
    OCTMNIST            & 75.17 ± 2.10 & 77.90 ± 0.36 & 78.23 ± 1.32 & \textbf{78.63 ± 0.21} \\
    PathMNIST           & 83.60 ± 0.42 & 86.75 ± 0.86 & \textbf{88.33 ± 0.33} & 86.81 ± 1.95 \\
    \midrule
    \textit{Average}             & 81.34 & 80.81 & 82.18 & \textbf{82.77} \\
    \bottomrule
  \end{tabular}
  \end{subtable}
\end{table}

\begin{table}[htbp]
  \caption{Distribution of $\beta$ values in T-CT, computed over all $\beta$ parameters across all three runs of ImageNet-pretrained ResNet-18/50/152 and VGG-11 when transferred to 12 downstream datasets. \textbf{The mean and standard deviation of $\beta$ are similar across models (means between 0.69–0.79, stds between 0.31–0.37), suggesting consistent tuning behavior at the model level, while the relatively large standard deviations indicate substantial variation of $\beta$ within each network.}}
  \label{table:gen-train-ct-beta}
  \centering
  \begin{tabular}{lcccc}
    \toprule
    Dataset & ResNet-18 & ResNet-50 & ResNet-152 & VGG-11 \\
    \midrule
    Arabic Characters   & 0.72 ± 0.34 & 0.65 ± 0.41 & 0.68 ± 0.39 & 0.73 ± 0.39 \\
    Arabic Digits       & 0.70 ± 0.43 & 0.62 ± 0.48 & 0.62 ± 0.47 & 0.73 ± 0.43 \\
    Beans               & 0.72 ± 0.26 & 0.76 ± 0.23 & 0.77 ± 0.19 & 0.79 ± 0.20 \\
    CUB-200             & 0.81 ± 0.17 & 0.76 ± 0.29 & 0.79 ± 0.29 & 0.75 ± 0.31 \\
    DTD                 & 0.78 ± 0.19 & 0.77 ± 0.25 & 0.79 ± 0.24 & 0.80 ± 0.24 \\
    FashionMNIST       & 0.72 ± 0.41 & 0.65 ± 0.46 & 0.63 ± 0.46 & 0.81 ± 0.37 \\
    FGVC-Aircraft       & 0.75 ± 0.23 & 0.70 ± 0.33 & 0.74 ± 0.32 & 0.74 ± 0.31 \\
    Flowers102          & 0.75 ± 0.16 & 0.75 ± 0.21 & 0.79 ± 0.17 & 0.74 ± 0.22 \\
    Food101             & 0.80 ± 0.30 & 0.71 ± 0.43 & 0.76 ± 0.40 & 0.88 ± 0.27 \\
    DermaMNIST          & 0.74 ± 0.34 & 0.70 ± 0.39 & 0.70 ± 0.37 & 0.81 ± 0.33 \\
    OCTMNIST            & 0.67 ± 0.45 & 0.62 ± 0.48 & 0.63 ± 0.47 & 0.83 ± 0.35 \\
    PathMNIST           & 0.69 ± 0.43 & 0.65 ± 0.47 & 0.61 ± 0.48 & 0.82 ± 0.37 \\
    \midrule
    \textit{Average}             & 0.74 ± 0.31 & 0.69 ± 0.37 & 0.71 ± 0.35 & 0.79 ± 0.32 \\
    \bottomrule
  \end{tabular}
\end{table}

\begin{table}[htbp]
  \caption{Distribution of $c$ values in T-CT, computed over all $c$ parameters across all three runs of ImageNet-pretrained ResNet-18/50/152 and VGG-11 when transferred to 12 downstream datasets. \textbf{The mean and standard deviation of $c$ are similar across models (means between 0.57–0.61, stds between 0.32–0.38), suggesting consistent tuning behavior at the model level, while the relatively large standard deviations indicate substantial variation of $c$ within each network.}}
  \label{table:gen-train-ct-coeff}
  \centering
  \begin{tabular}{lcccc}
    \toprule
    Dataset & ResNet-18 & ResNet-50 & ResNet-152 & VGG-11 \\
    \midrule
    Arabic Characters   & 0.63 ± 0.39 & 0.61 ± 0.39 & 0.57 ± 0.37 & 0.65 ± 0.33 \\
    Arabic Digits       & 0.59 ± 0.43 & 0.57 ± 0.42 & 0.55 ± 0.41 & 0.62 ± 0.38 \\
    Beans               & 0.61 ± 0.29 & 0.54 ± 0.25 & 0.53 ± 0.23 & 0.55 ± 0.21 \\
    CUB-200             & 0.60 ± 0.37 & 0.63 ± 0.37 & 0.60 ± 0.34 & 0.67 ± 0.31 \\
    DTD                 & 0.59 ± 0.31 & 0.60 ± 0.32 & 0.57 ± 0.30 & 0.59 ± 0.27 \\
    FashionMNIST       & 0.55 ± 0.44 & 0.60 ± 0.42 & 0.56 ± 0.42 & 0.61 ± 0.38 \\
    FGVC-Aircraft       & 0.61 ± 0.36 & 0.63 ± 0.37 & 0.58 ± 0.35 & 0.66 ± 0.31 \\
    Flowers102          & 0.58 ± 0.26 & 0.54 ± 0.26 & 0.54 ± 0.23 & 0.60 ± 0.22 \\
    Food101             & 0.46 ± 0.47 & 0.63 ± 0.44 & 0.60 ± 0.43 & 0.58 ± 0.40 \\
    DermaMNIST          & 0.58 ± 0.38 & 0.59 ± 0.37 & 0.57 ± 0.36 & 0.56 ± 0.30 \\
    OCTMNIST            & 0.55 ± 0.45 & 0.60 ± 0.42 & 0.57 ± 0.42 & 0.62 ± 0.39 \\
    PathMNIST           & 0.51 ± 0.45 & 0.58 ± 0.43 & 0.57 ± 0.42 & 0.58 ± 0.40 \\
    \midrule
    \textit{Average}             & 0.57 ± 0.38 & 0.59 ± 0.37 & 0.57 ± 0.36 & 0.61 ± 0.32 \\
    \bottomrule
  \end{tabular}
\end{table}

\begin{figure}[hbtp]
\centering
\begin{subfigure}{0.32\linewidth}
    \centering
    \includegraphics[width=\linewidth]{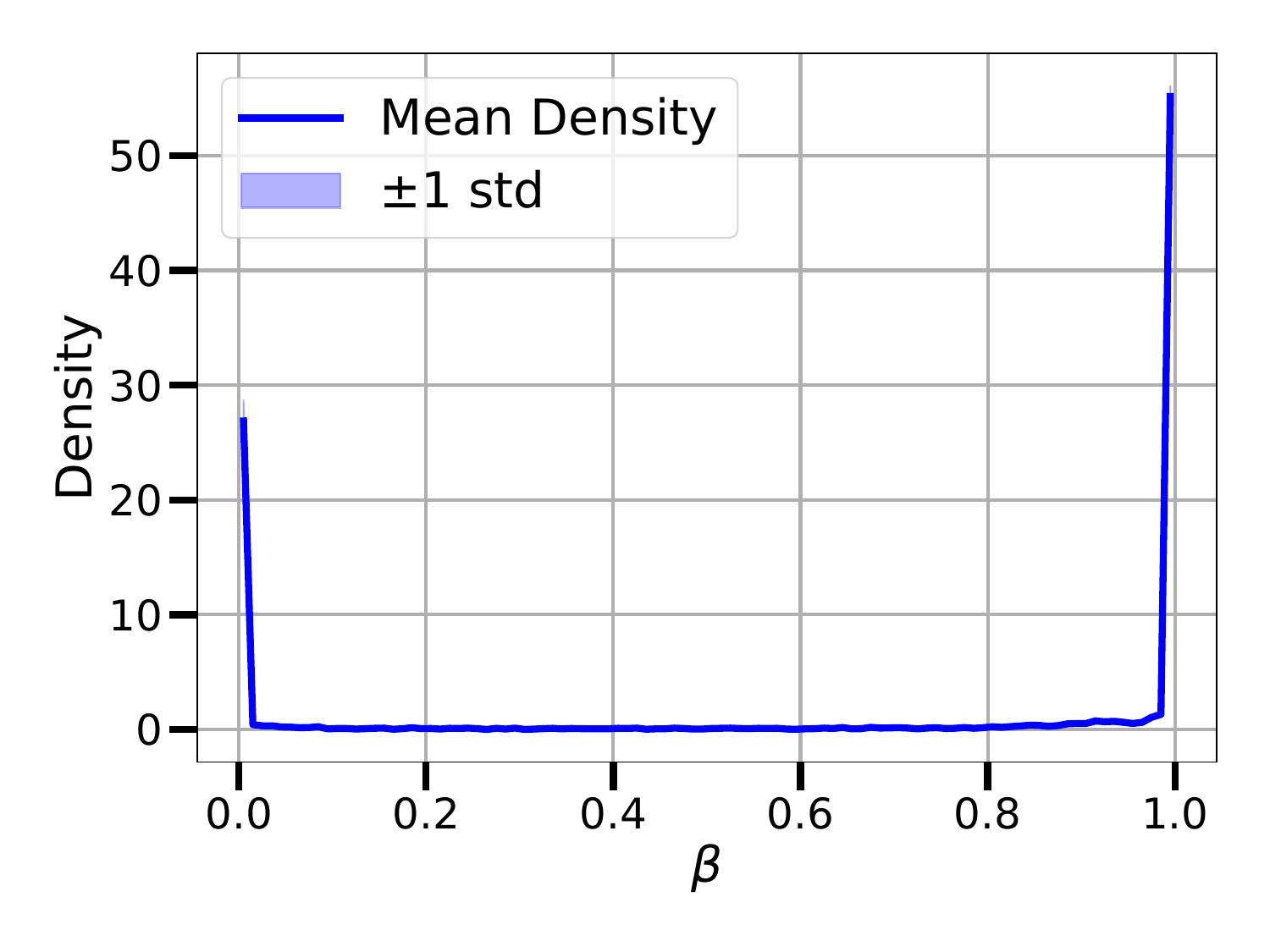}
    \caption{ResNet-18 $\beta$}
    \label{fig:resnet18_beta_common}
\end{subfigure}
\begin{subfigure}{0.32\linewidth}
    \centering
    \includegraphics[width=\linewidth]{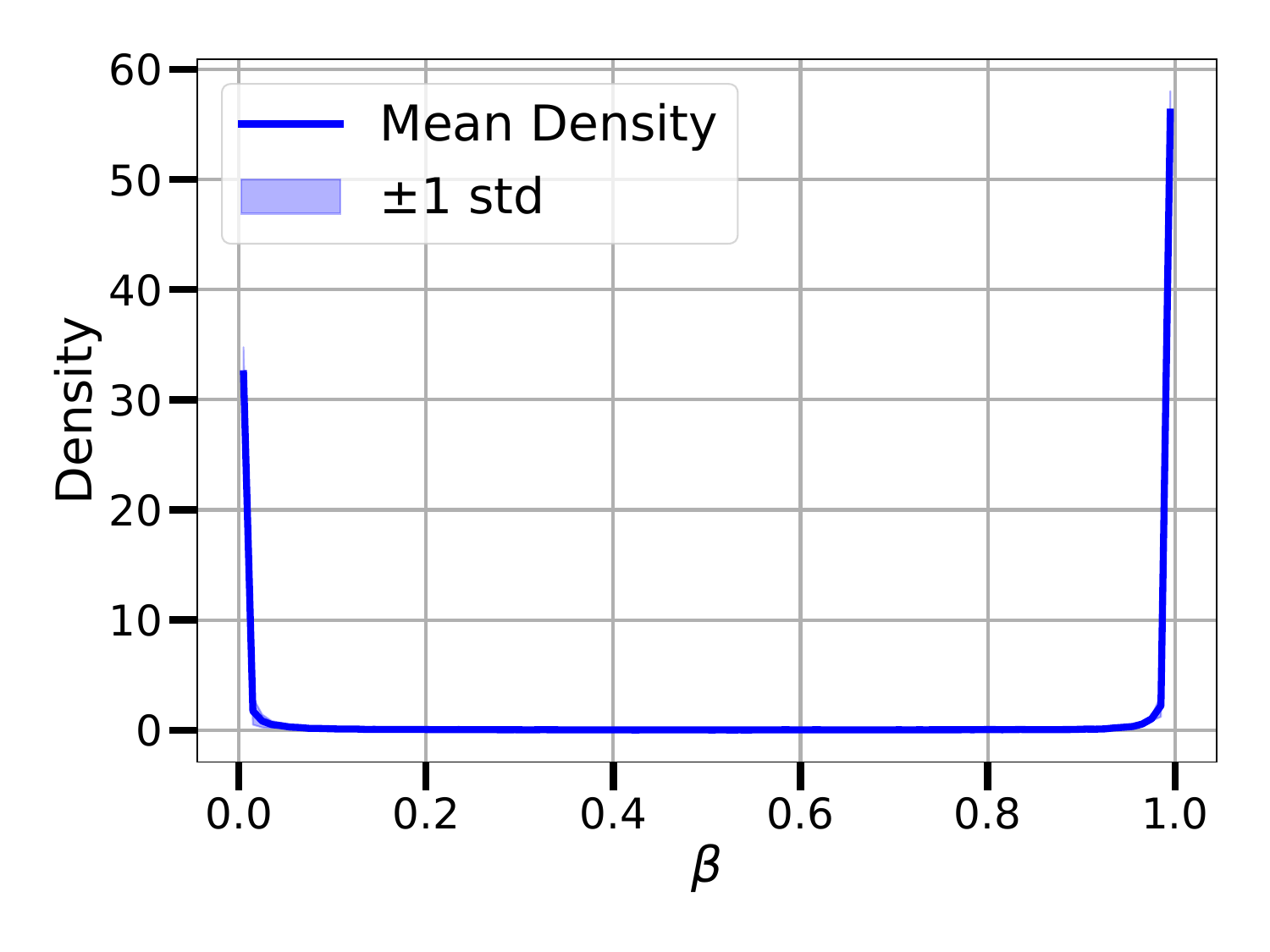}
    \caption{ResNet-50 $\beta$}
    \label{fig:resnet50_beta_common}
\end{subfigure}
\begin{subfigure}{0.32\linewidth}
    \centering
    \includegraphics[width=\linewidth]{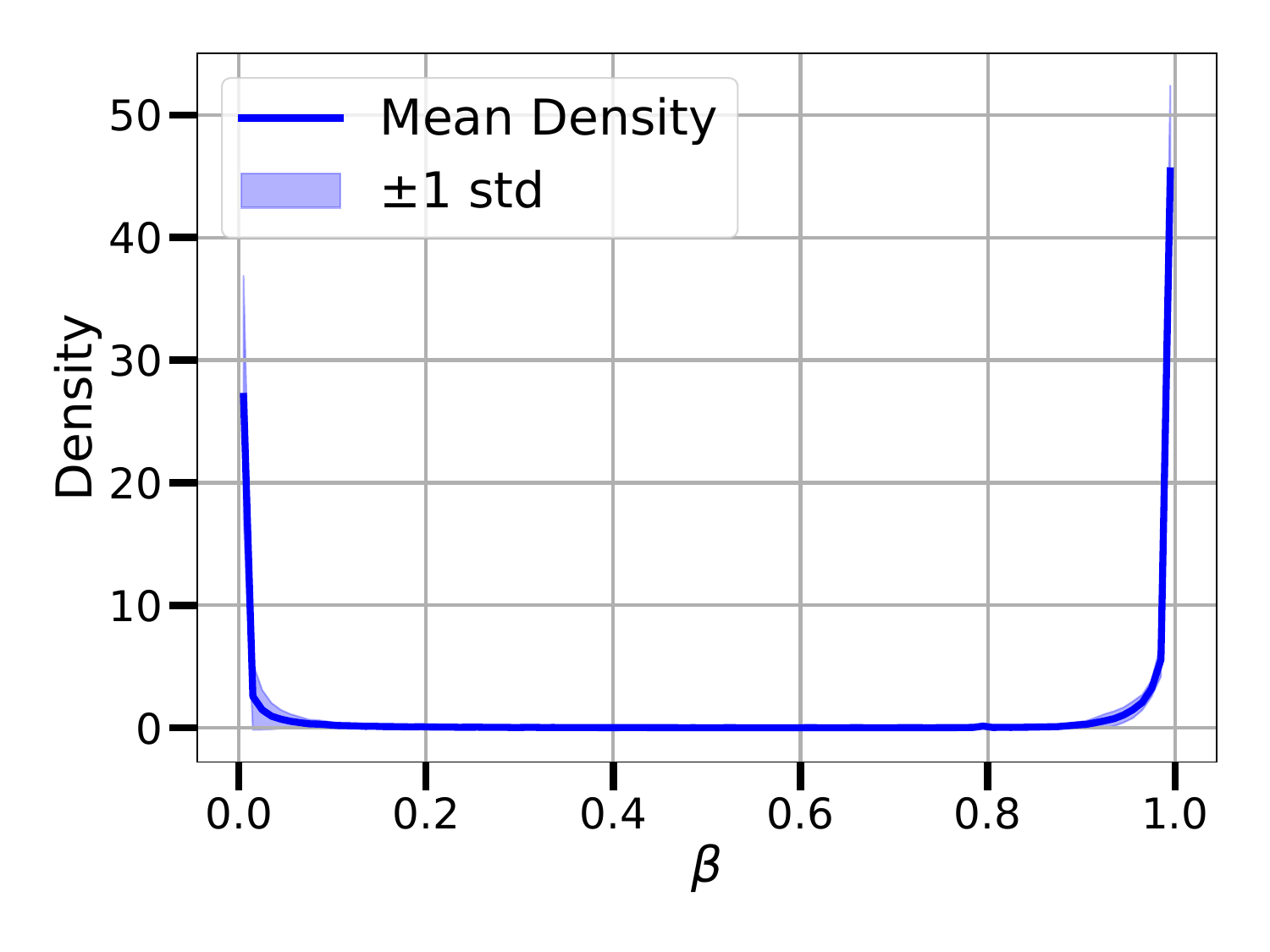}
    \caption{ResNet-152 $\beta$}
    \label{fig:resnet152_beta_common}
\end{subfigure}

\begin{subfigure}{0.32\linewidth}
    \centering
    \includegraphics[width=\linewidth]{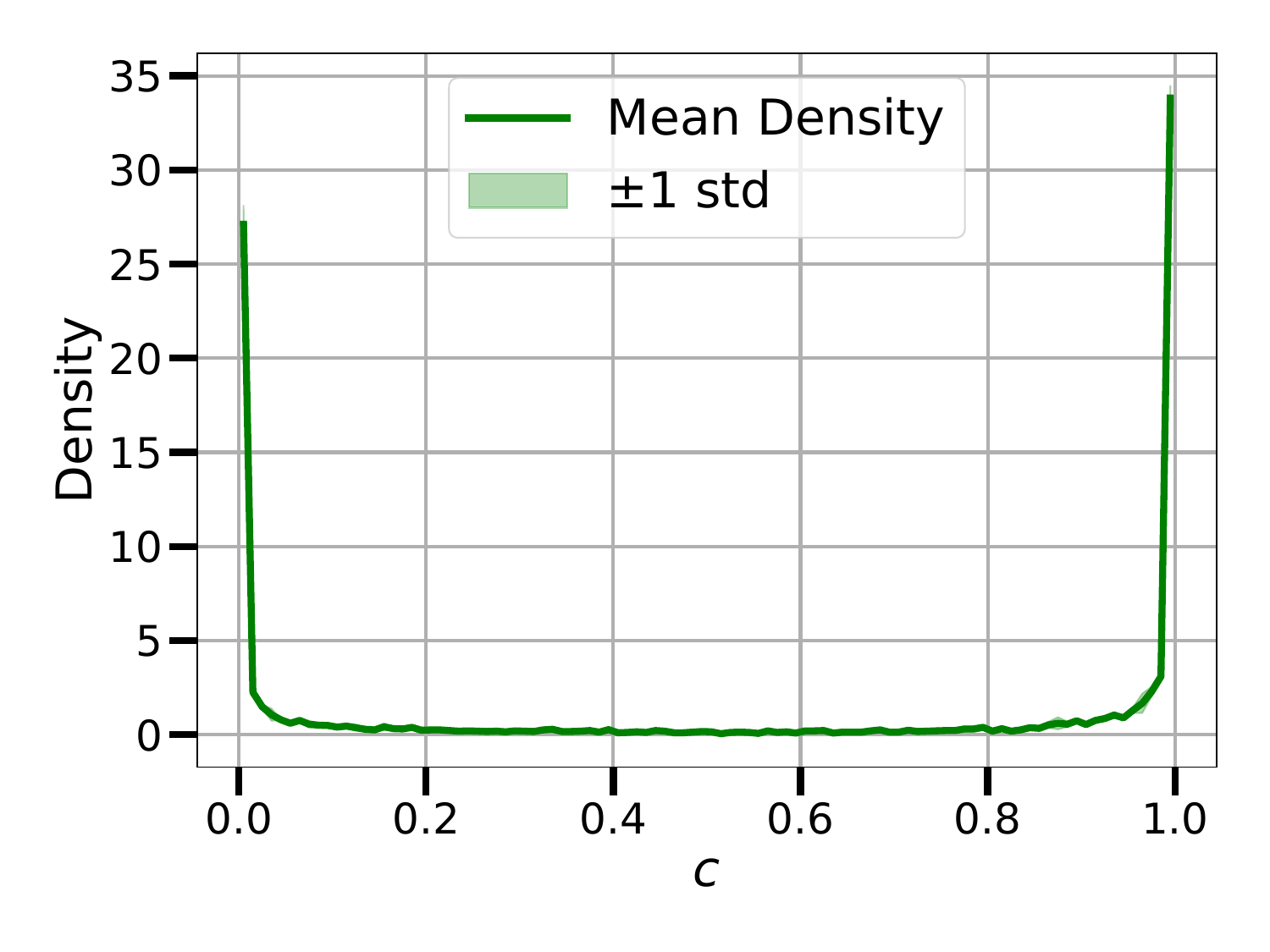}
    \caption{ResNet-18 $c$}
    \label{fig:resnet18_c_common}
\end{subfigure}
\begin{subfigure}{0.32\linewidth}
    \centering
    \includegraphics[width=\linewidth]{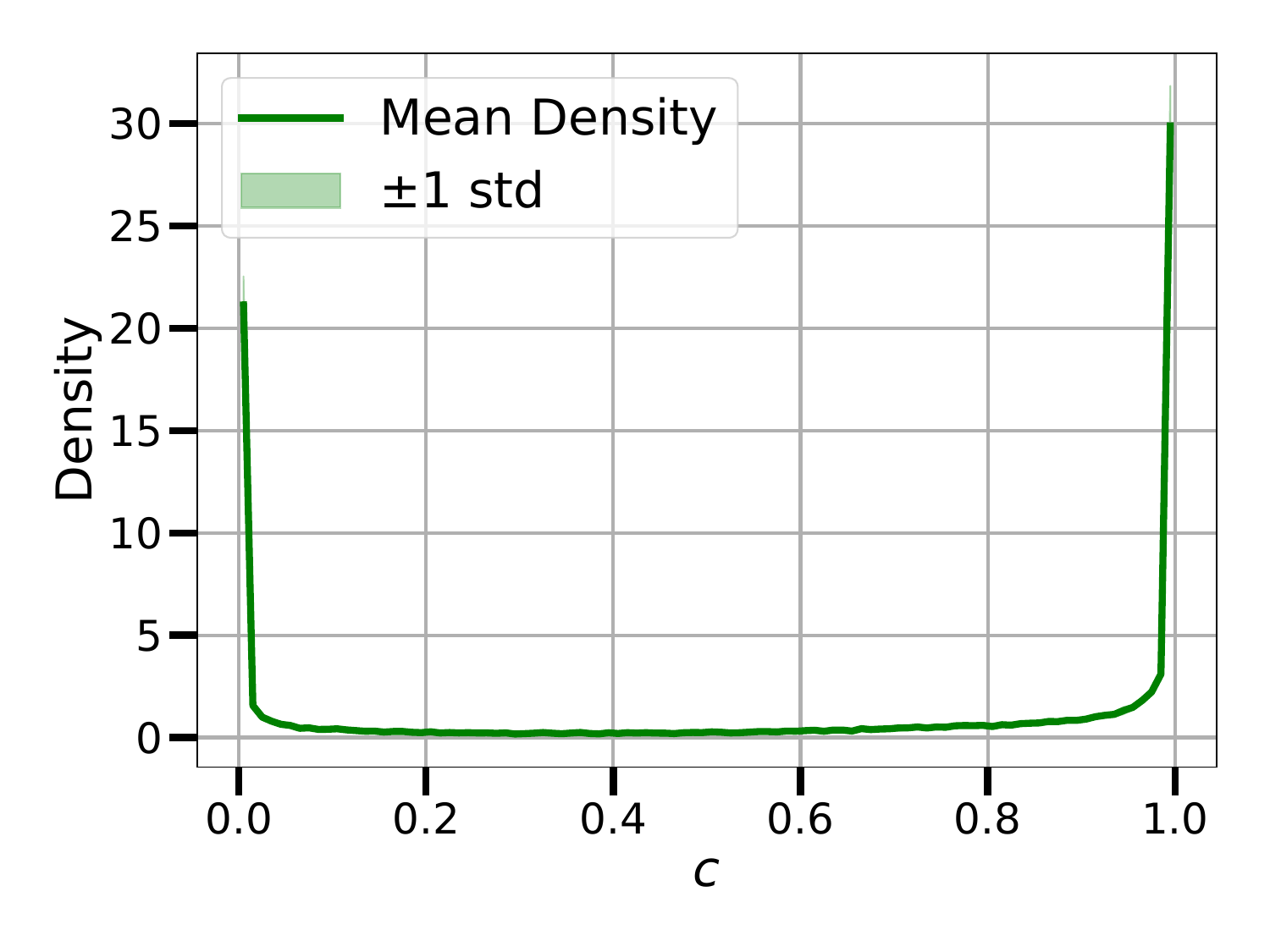}
    \caption{ResNet-50 $c$}
    \label{fig:resnet50_c_common}
\end{subfigure}
\begin{subfigure}{0.32\linewidth}
    \centering
    \includegraphics[width=\linewidth]{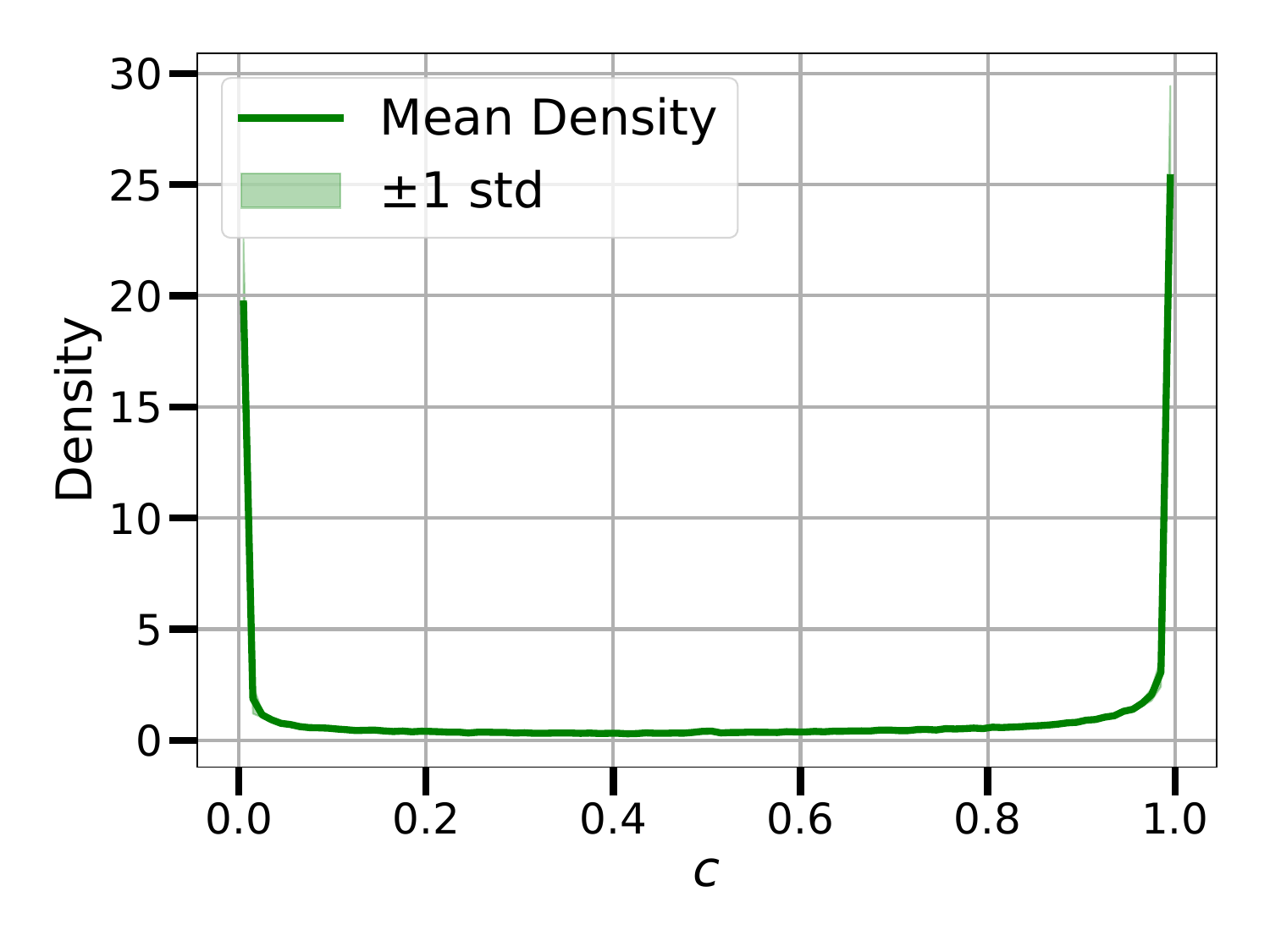}
    \caption{ResNet-152 $c$}
    \label{fig:resnet152_c_common}
\end{subfigure}
\caption{Common distributions of $\beta$ (top) and $c$ (bottom) in T-CT across ResNet-18/50/152, averaged over three runs (OCTMNIST shown as a representative dataset). \textbf{Both $\beta$ and $c$ consistently exhibit sharp U-shaped distributions that appear similar across all models.}}
\label{fig:beta-c-dist-common}
\end{figure}

\begin{figure}[hbtp]
\centering
\begin{subfigure}{0.32\linewidth}
    \centering
    \includegraphics[width=\linewidth]{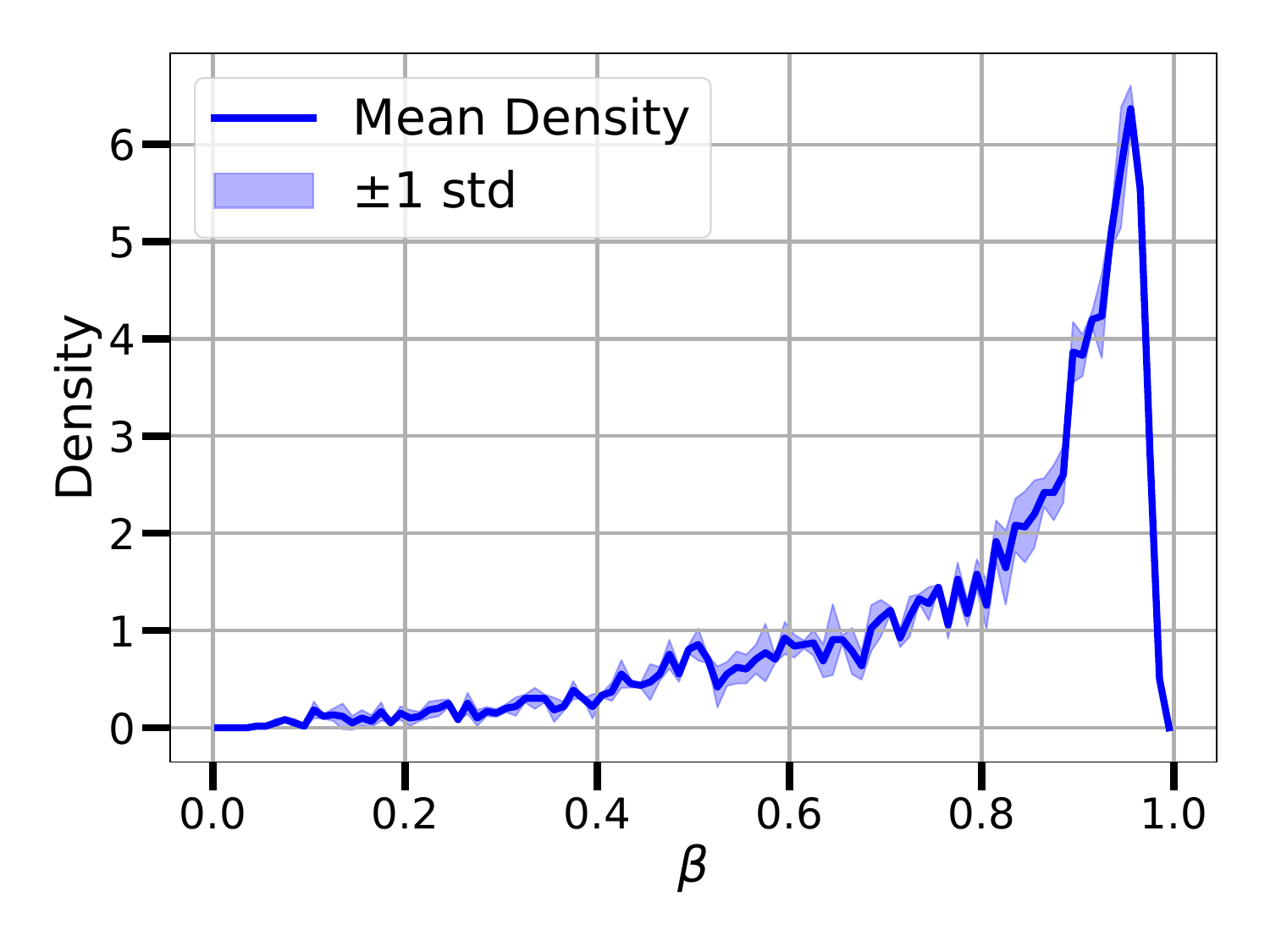}
    \caption{ResNet-18 $\beta$}
    \label{fig:resnet18_beta_uncommon}
\end{subfigure}
\begin{subfigure}{0.32\linewidth}
    \centering
    \includegraphics[width=\linewidth]{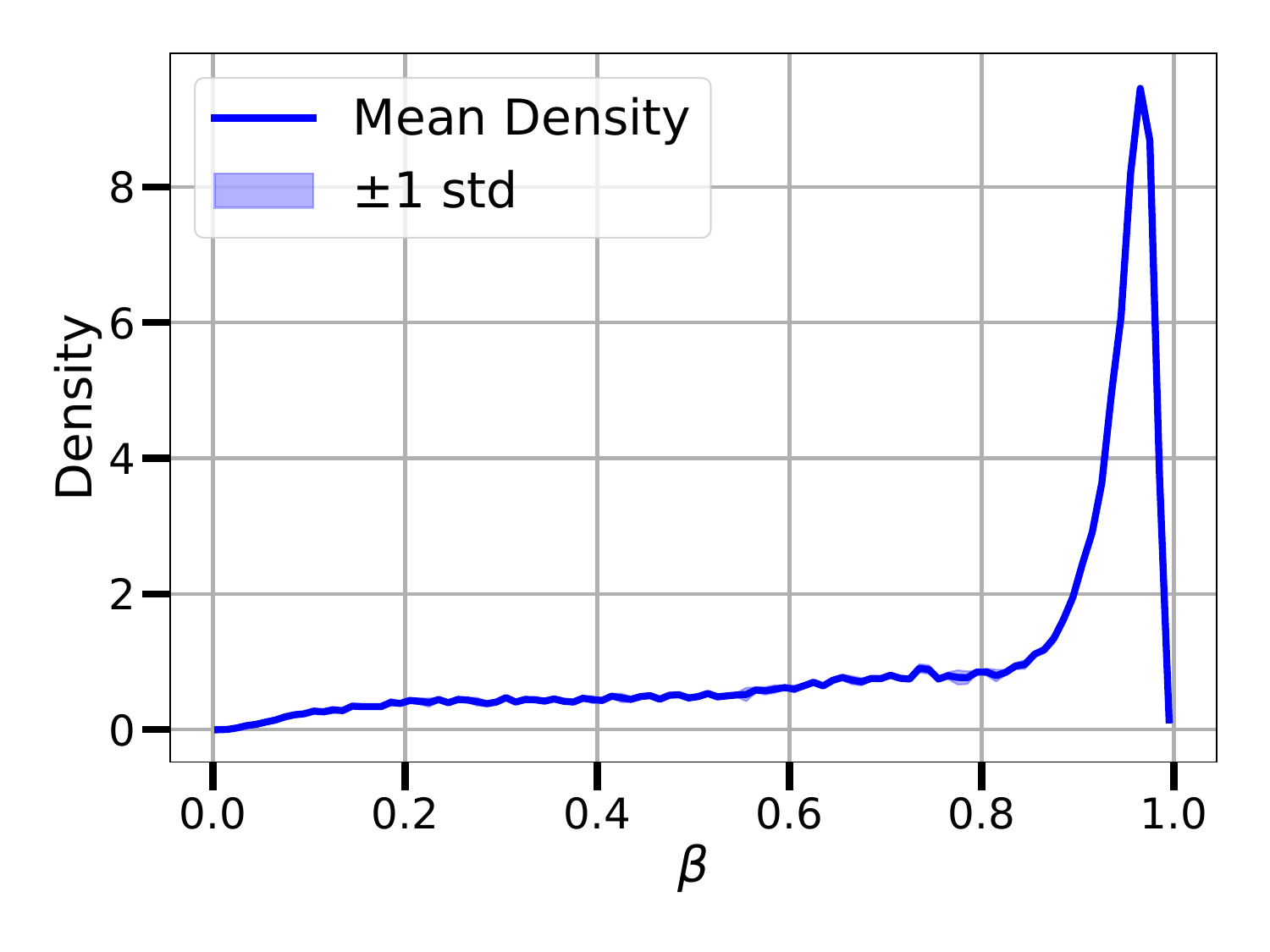}
    \caption{ResNet-50 $\beta$}
    \label{fig:resnet50_beta_uncommon}
\end{subfigure}
\begin{subfigure}{0.32\linewidth}
    \centering
    \includegraphics[width=\linewidth]{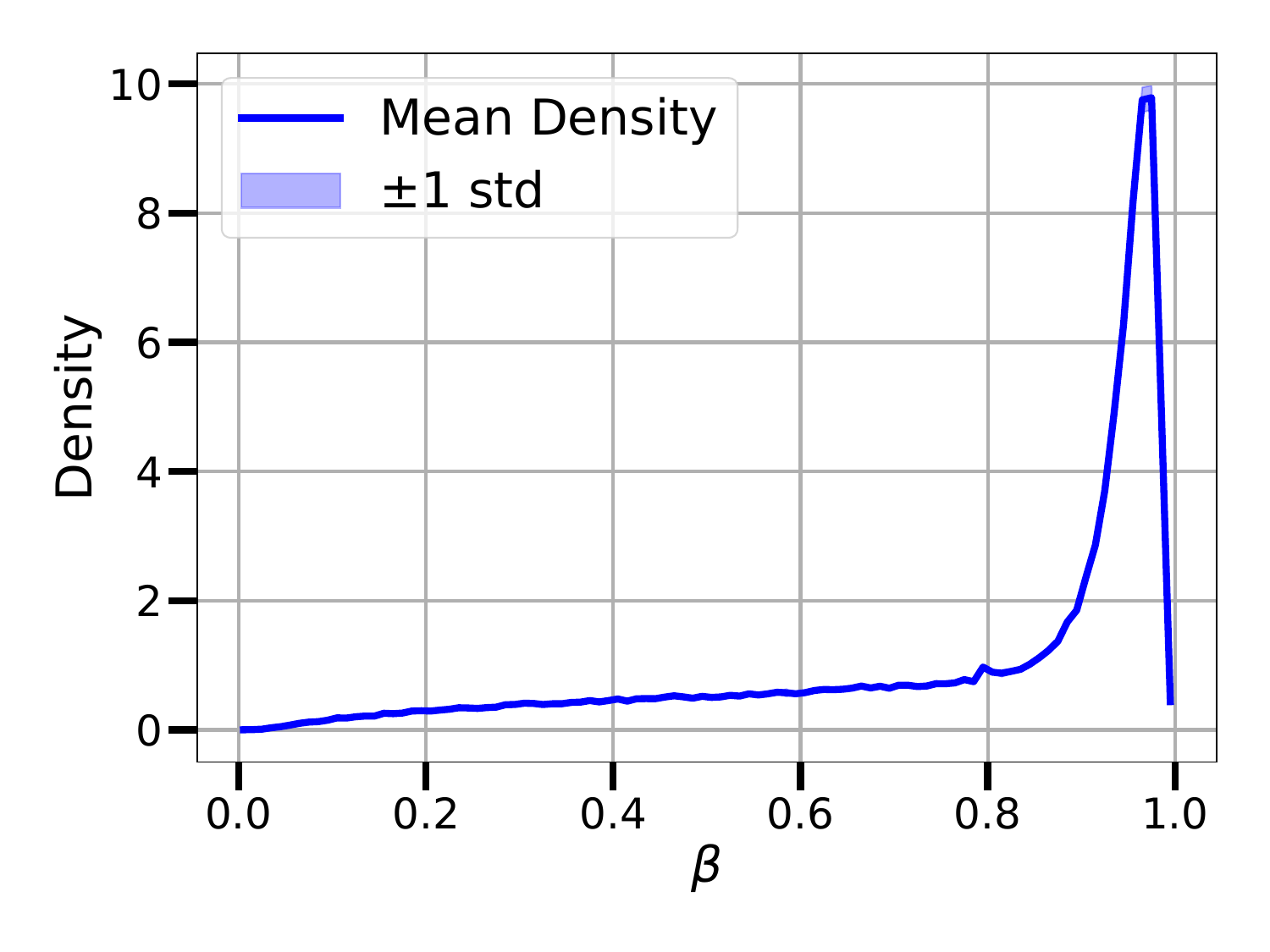}
    \caption{ResNet-152 $\beta$}
    \label{fig:resnet152_beta_uncommon}
\end{subfigure}

\begin{subfigure}{0.32\linewidth}
    \centering
    \includegraphics[width=\linewidth]{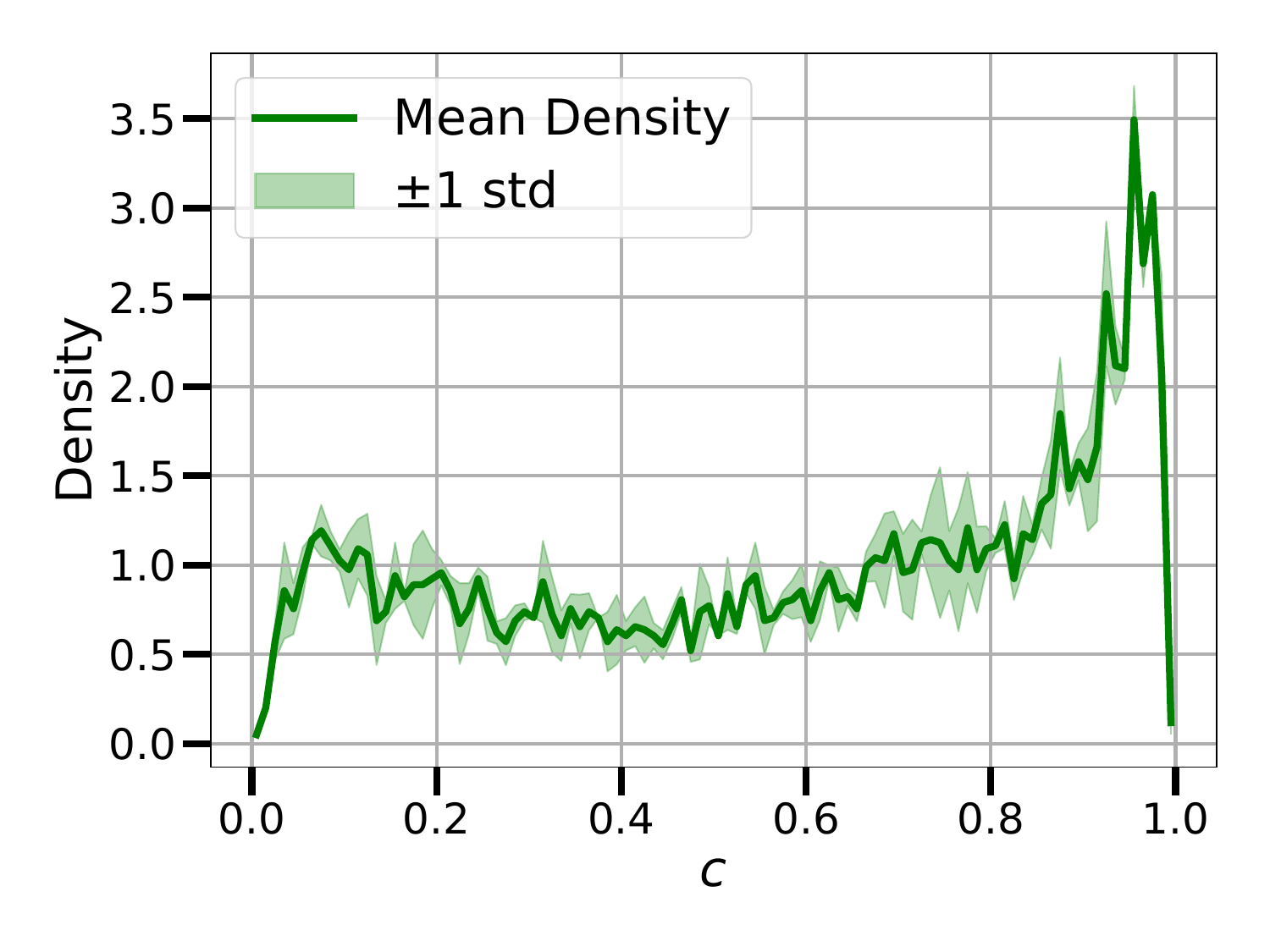}
    \caption{ResNet-18 $c$}
    \label{fig:resnet18_c_uncommon}
\end{subfigure}
\begin{subfigure}{0.32\linewidth}
    \centering
    \includegraphics[width=\linewidth]{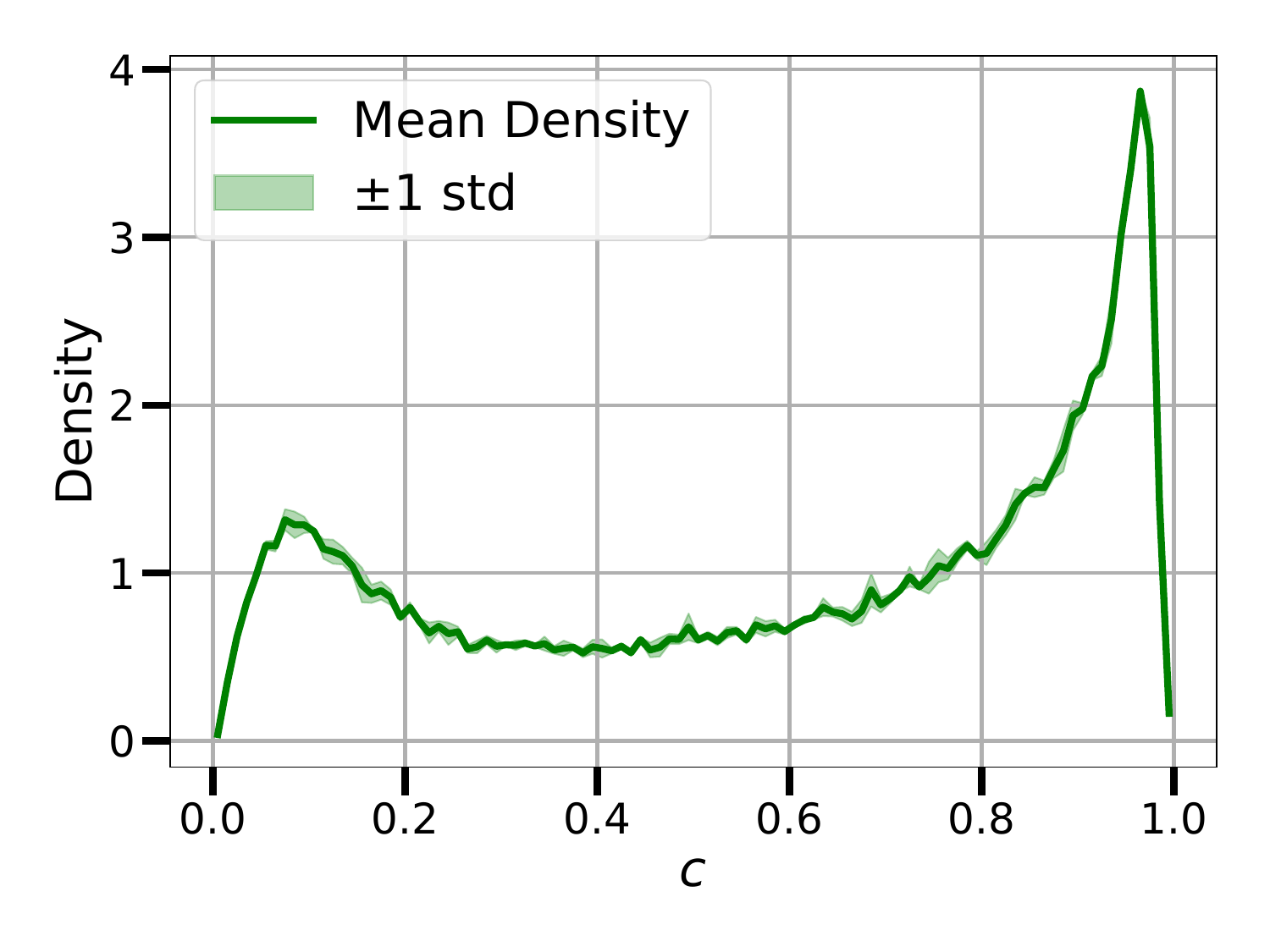}
    \caption{ResNet-50 $c$}
    \label{fig:resnet50_c_uncommon}
\end{subfigure}
\begin{subfigure}{0.32\linewidth}
    \centering
    \includegraphics[width=\linewidth]{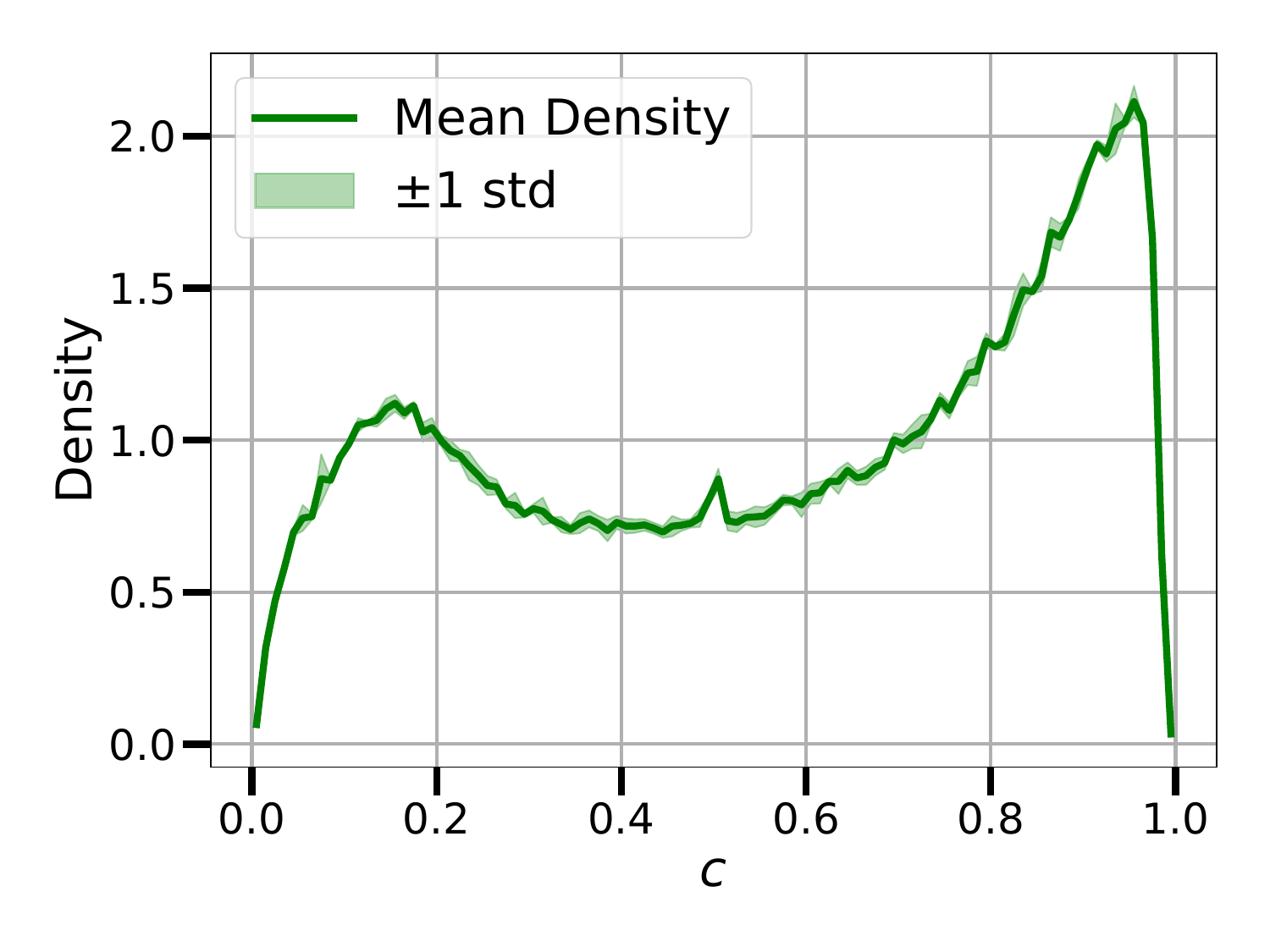}
    \caption{ResNet-152 $c$}
    \label{fig:resnet152_c_uncommon}
\end{subfigure}
\caption{Uncommon distributions of $\beta$ (top) and $c$ (bottom) in T-CT across ResNet-18/50/152, averaged over three runs (DTD shown as an example dataset). \textbf{While the overall shape is dataset-specific, the distributions of both $\beta$ and $c$ remain consistent across models.}}
\label{fig:beta-c-dist-uncommon}
\end{figure}

\subsection{Improving model robustness through CT's implicit bias (\cref{sec:exp-rob})} \label{app:exp-rob}
Due to computational constraints, we evaluate each benchmark using 1{,}000 samples. For adversarial evaluations, we follow the official RobustBench settings: $\varepsilon_2 = 0.5$ for $\ell_2$ attacks and $\varepsilon_{\infty} = \frac{8}{255}$ for $\ell_{\infty}$ attacks.

Additional results are provided as follows:
\begin{itemize}
    \item \cref{table:rob-ct-complete}: mean robust accuracy (± std) over three runs of ImageNet-pretrained ResNet-18/50/152 under $\ell_\infty$/$\ell_2$ attacks and corruptions on CIFAR-10/100 and ImageNet.
    \item \cref{table:rob-train-ct-complete}: mean robust accuracy (± std) over three runs of ImageNet-pretrained ResNet-18/50/152 transferred to CIFAR-10/100 under $\ell_\infty$, $\ell_2$ attacks, and corruptions.
\end{itemize}

\begin{table}[htbp]
  \caption{Mean robust accuracy (\%) ± standard deviation over three runs of ImageNet-pretrained ResNet-18/50/152 under $\ell_2$/$\ell_\infty$ attacks and corruptions on CIFAR-10/100 and ImageNet. \textbf{S-CT yields substantial improvements under $\ell_\infty$ attacks, with the selected $\beta$ values close to 1.}}
  \label{table:rob-ct-complete}
  \centering
  \setlength{\tabcolsep}{6pt}
  \begin{tabular}{l l l c c c}
    \toprule
    Robustness & Model & Dataset & Frozen & S-CT & $\beta$ \\
    \midrule
    \multirow{12}{*}{$\ell_2$}
      & \multirow{4}{*}{ResNet-18}
          & CIFAR-10  & 53.67 ± 0.32 & 53.67 ± 0.32 & 1.00 ± 0.00 \\
      & & CIFAR-100 & 24.30 ± 0.10 & \textbf{25.50 ± 0.00} & 0.92 ± 0.00 \\
      & & ImageNet  & 23.37 ± 0.06 & 23.37 ± 0.06 & 1.00 ± 0.00 \\
      & & \textit{Average} & 33.78 & \textbf{34.18} & 0.97 \\
      \cmidrule(l){2-6}
      & \multirow{4}{*}{ResNet-50}
          & CIFAR-10  & 55.10 ± 0.10 & \textbf{56.53 ± 0.21} & 0.97 ± 0.00 \\
      & & CIFAR-100 & 23.83 ± 0.06 & \textbf{25.80 ± 0.20} & 0.96 ± 0.00 \\
      & & ImageNet  & 31.90 ± 0.00 & 31.90 ± 0.00 & 1.00 ± 0.00 \\
      & & \textit{Average} & 36.94 & \textbf{38.08} & 0.98 \\
      \cmidrule(l){2-6}
      & \multirow{4}{*}{ResNet-152}
          & CIFAR-10  & 56.27 ± 0.23 & 56.27 ± 0.23 & 1.00 ± 0.00 \\
      & & CIFAR-100 & 27.90 ± 0.10 & \textbf{28.23 ± 0.12} & 0.98 ± 0.00 \\
      & & ImageNet  & 42.50 ± 0.00 & 42.50 ± 0.00 & 1.00 ± 0.00 \\
      & & \textit{Average} & 42.22 & \textbf{42.33} & 0.99 \\
    \midrule
    \multirow{12}{*}{$\ell_\infty$}
      & \multirow{4}{*}{ResNet-18}
          & CIFAR-10  & 11.17 ± 0.06 & \textbf{14.93 ± 0.06} & 0.90 ± 0.00 \\
      & & CIFAR-100 & 4.47 ± 0.06  & \textbf{6.90 ± 0.00}  & 0.92 ± 0.00 \\
      & & ImageNet  & 0.00 ± 0.00  & \textbf{7.00 ± 0.10}  & 0.89 ± 0.00 \\
      & & \textit{Average} & 5.21 & \textbf{9.61} & 0.90 \\
      \cmidrule(l){2-6}
      & \multirow{4}{*}{ResNet-50}
          & CIFAR-10  & 10.10 ± 0.17 & \textbf{14.83 ± 0.06} & 0.95 ± 0.00 \\
      & & CIFAR-100 & 4.43 ± 0.06  & \textbf{7.90 ± 0.00}  & 0.93 ± 0.00 \\
      & & ImageNet  & 0.30 ± 0.00  & \textbf{9.30 ± 0.17}  & 0.93 ± 0.00 \\
      & & \textit{Average} & 4.94 & \textbf{9.76} & 0.94 \\
      \cmidrule(l){2-6}
      & \multirow{4}{*}{ResNet-152}
          & CIFAR-10  & 11.47 ± 0.06 & \textbf{15.00 ± 0.20} & 0.99 ± 0.00 \\
      & & CIFAR-100 & 5.40 ± 0.00  & \textbf{7.70 ± 0.17}  & 0.99 ± 0.00 \\
      & & ImageNet  & 0.30 ± 0.00  & \textbf{13.53 ± 0.06} & 0.97 ± 0.01 \\
      & & \textit{Average} & 5.72 & \textbf{12.08} & 0.98 \\
    \midrule
    \multirow{12}{*}{Corruptions}
      & \multirow{4}{*}{ResNet-18}
          & CIFAR-10  & 77.73 ± 0.00 & 77.73 ± 0.00 & 1.00 ± 0.00 \\
      & & CIFAR-100 & 51.81 ± 0.00 & \textbf{51.95 ± 0.00} & 0.94 ± 0.00 \\
      & & ImageNet  & 33.11 ± 0.00 & \textbf{33.32 ± 0.00} & 0.92 ± 0.00 \\
      & & \textit{Average} & 54.22 & \textbf{54.33} & 0.95 \\
      \cmidrule(l){2-6}
      & \multirow{4}{*}{ResNet-50}
          & CIFAR-10  & 77.26 ± 0.00 & 77.26 ± 0.00 & 1.00 ± 0.00 \\
      & & CIFAR-100 & 53.91 ± 0.00 & \textbf{53.93 ± 0.00} & 0.98 ± 0.00 \\
      & & ImageNet  & 39.64 ± 0.00 & 39.64 ± 0.00 & 1.00 ± 0.00 \\
      & & \textit{Average} & 56.94 & 56.94 & 0.99 \\
      \cmidrule(l){2-6}
      & \multirow{4}{*}{ResNet-152}
          & CIFAR-10  & 78.82 ± 0.00 & \textbf{78.83 ± 0.00} & 0.99 ± 0.00 \\
      & & CIFAR-100 & 56.12 ± 0.00 & 56.12 ± 0.00 & 1.00 ± 0.00 \\
      & & ImageNet  & 45.47 ± 0.00 & 45.47 ± 0.00 & 0.99 ± 0.00 \\
      & & \textit{Average} & 60.14 & 60.14 & 0.99 \\
    \bottomrule
  \end{tabular}
\end{table}

\begin{table}[htbp]
  \caption{Mean robust accuracy (\%) ± standard deviation over three runs of ImageNet-pretrained ResNet-18/50/152 transferred to CIFAR-10/100 under $\ell_2$, $\ell_\infty$ attacks, and corruptions. \textbf{T-CT improves $\ell_\infty$ robustness significantly compared to linear probing and LoRA.}}
  \label{table:rob-train-ct-complete}
  \centering
  \begin{tabular}{llcccc}
    \toprule
    Robustness & Model & Dataset & Frozen & LoRA & T-CT \\
    \midrule
    \multirow{9}{*}{$\ell_2$}
      & \multirow{3}{*}{ResNet18}
        & CIFAR10   & 8.47 ± 0.26 & 5.93 ± 1.65 & \textbf{8.93 ± 0.37} \\
      & & CIFAR100  & \textbf{1.57 ± 0.21} & 0.77 ± 0.33 & 1.10 ± 0.45 \\
      & & \textit{Average}   & \textbf{5.02} & 3.35 & 5.01 \\
      \cmidrule(lr){2-6}
      & \multirow{3}{*}{ResNet50}
        & CIFAR10   & 6.23 ± 0.34 & 4.57 ± 1.32 & \textbf{6.83 ± 1.48} \\
      & & CIFAR100  & \textbf{0.70 ± 0.08} & 0.37 ± 0.26 & 0.47 ± 0.31 \\
      & & \textit{Average}   & 3.47 & 2.47 & \textbf{3.65} \\
      \cmidrule(lr){2-6}
      & \multirow{3}{*}{ResNet152}
        & CIFAR10   & \textbf{8.03 ± 0.52} & 4.63 ± 2.01 & 8.00 ± 1.22 \\
      & & CIFAR100  & \textbf{0.90 ± 0.08} & 0.47 ± 0.26 & 0.50 ± 0.08 \\
      & & \textit{Average}   & \textbf{4.46} & 2.55 & 4.25 \\
    \midrule
    \multirow{9}{*}{$\ell_\infty$}
      & \multirow{3}{*}{ResNet18}
        & CIFAR10   & 0.30 ± 0.00 & 0.70 ± 0.71 & \textbf{1.57 ± 0.74} \\
      & & CIFAR100  & 0.03 ± 0.05 & 0.07 ± 0.05 & \textbf{0.17 ± 0.12} \\
      & & \textit{Average}   & 0.16 & 0.38 & \textbf{0.87} \\
      \cmidrule(lr){2-6}
      & \multirow{3}{*}{ResNet50}
        & CIFAR10   & 0.20 ± 0.08 & 0.33 ± 0.29 & \textbf{2.43 ± 1.54} \\
      & & CIFAR100  & 0.00 ± 0.00 & 0.03 ± 0.05 & \textbf{0.07 ± 0.09} \\
      & & \textit{Average}   & 0.10 & 0.18 & \textbf{1.25} \\
      \cmidrule(lr){2-6}
      & \multirow{3}{*}{ResNet152}
        & CIFAR10   & 0.43 ± 0.09 & 0.20 ± 0.14 & \textbf{5.10 ± 2.97} \\
      & & CIFAR100  & \textbf{0.17 ± 0.05} & 0.00 ± 0.00 & 0.00 ± 0.00 \\
      & & \textit{Average}   & 0.30 & 0.10 & \textbf{2.55} \\
    \midrule
    \multirow{9}{*}{Corruptions}
      & \multirow{3}{*}{ResNet18}
        & CIFAR10   & \textbf{21.34 ± 0.29} & 13.59 ± 0.30 & 16.83 ± 2.36 \\
      & & CIFAR100  & \textbf{5.10 ± 0.15} & 2.96 ± 1.05 & 4.62 ± 0.68 \\
      & & \textit{Average}   & \textbf{13.22} & 8.28 & 10.72 \\
      \cmidrule(lr){2-6}
      & \multirow{3}{*}{ResNet50}
        & CIFAR10   & \textbf{16.23 ± 0.21} & 11.69 ± 0.90 & 12.68 ± 2.06 \\
      & & CIFAR100  & \textbf{3.47 ± 0.09} & 2.04 ± 0.36 & 1.61 ± 0.13 \\
      & & \textit{Average}   & \textbf{9.85} & 6.86 & 7.14 \\
      \cmidrule(lr){2-6}
      & \multirow{3}{*}{ResNet152}
        & CIFAR10   & \textbf{13.82 ± 0.49} & 11.33 ± 1.22 & 9.83 ± 2.07 \\
      & & CIFAR100  & 2.07 ± 0.12 & \textbf{2.13 ± 0.22} & 1.72 ± 0.51 \\
      & & \textit{Average}   & \textbf{7.94} & 6.73 & 5.78 \\
    \bottomrule
  \end{tabular}
\end{table}

\subsection{CT shows promise on transformers (\cref{sec:exp-transformer})} \label{app:exp-transformer}
In this experiment, LoRA is applied to all QKV projection layers. For all three methods---linear probing, LoRA, and T-CT---we perform a grid search over the learning rate and select the model achieving the best validation accuracy for testing. The learning rate is selected from ${10^{-2}, 10^{-3}, 10^{-4}}$ for linear probing, from ${10^{-3}, 10^{-4}, 10^{-5}}$ for LoRA, and for T-CT, we fix the learning rate of the linear classifier to $10^{-3}$ while searching over ${10^{-1}, 10^{-2}, 10^{-3}, 10^{-4}}$ for the $(\beta, c)$ parameters. All other training configurations follow those described in \cref{app:exp-gen,app:exp-train-ct}.

Additional results are provided as follows:
\begin{itemize}
    \item \cref{table:gen-swin-complete}: mean accuracy (± std) over three runs of ImageNet-pretrained Swin-T/S when transferred to 12 downstream datasets.
\end{itemize}

\begin{table}[t]
  \centering
  \caption{Mean accuracy (\%) ± standard deviation over three runs of ImageNet-pretrained Swin-T/S when transferred to 12 downstream datasets. The second row under each method indicates the number of trainable parameters (excluding the linear classifier). \textbf{T-CT improves over linear probing but underperforms LoRA.}}
  \label{table:gen-swin-complete}
  
  \begin{subtable}{\linewidth}\centering\caption{Swin-T}\label{table:gen-swin-t-complete}
  \begin{tabular}{lccc}
    \toprule
    Dataset & Frozen & LoRA & T-CT \\
    & (0) & (74832) & (532) \\
    \midrule
    Arabic Characters   & 83.48 ± 0.15 & \textbf{93.24 ± 0.13} & 85.02 ± 0.30 \\
    Arabic Digits       & 98.14 ± 0.07 & \textbf{99.19 ± 0.01} & 98.47 ± 0.04 \\
    Beans               & 88.28 ± 1.10 & \textbf{94.01 ± 0.37} & 89.06 ± 1.10 \\
    CUB-200             & 73.42 ± 0.17 & \textbf{78.73 ± 0.28} & 74.33 ± 0.14 \\
    DTD                 & 70.66 ± 0.13 & 70.99 ± 0.61 & \textbf{71.45 ± 0.31} \\
    FashionMNIST        & 89.89 ± 0.04 & \textbf{93.15 ± 0.13} & 90.23 ± 0.03 \\
    FGVC-Aircraft       & 48.06 ± 0.32 & \textbf{48.29 ± 0.46} & 47.58 ± 0.99 \\
    Flowers102          & 86.66 ± 0.17 & \textbf{90.22 ± 0.34} & 85.35 ± 0.20 \\
    Food101             & 77.05 ± 0.03 & \textbf{83.69 ± 0.11} & 78.90 ± 0.11 \\
    DermaMNIST          & 75.83 ± 0.27 & \textbf{76.71 ± 0.43} & 75.86 ± 0.11 \\
    OCTMNIST            & 69.97 ± 0.62 & \textbf{76.30 ± 1.66} & 67.97 ± 1.01 \\
    PathMNIST           & 89.14 ± 0.23 & \textbf{92.26 ± 0.12} & 91.73 ± 0.19 \\
    \midrule
    \textit{Average}    & 77.69 & \textbf{82.23} & 78.53 \\
    \bottomrule
  \end{tabular}
  \end{subtable}

  \vspace{1em}

  \begin{subtable}{\linewidth}\centering\caption{Swin-S}\label{table:gen-swin-s-complete}
  \begin{tabular}{lccc}
    \toprule
    Dataset & Frozen & LoRA & T-CT \\
    & (0) & (148560) & (868) \\
    \midrule
    Arabic Characters   & 83.83 ± 0.05 & \textbf{94.38 ± 0.34} & 86.65 ± 0.50 \\
    Arabic Digits       & 98.28 ± 0.03 & \textbf{99.19 ± 0.05} & 98.39 ± 0.04 \\
    Beans               & 90.89 ± 1.95 & \textbf{95.05 ± 1.47} & 91.41 ± 0.64 \\
    CUB-200             & 72.66 ± 0.56 & \textbf{79.45 ± 0.52} & 73.40 ± 0.10 \\
    DTD                 & 69.77 ± 0.44 & 71.56 ± 0.66 & \textbf{72.43 ± 0.13} \\
    FashionMNIST        & 89.75 ± 0.03 & \textbf{93.52 ± 0.05} & 89.85 ± 0.08 \\
    FGVC-Aircraft       & 44.36 ± 0.21 & \textbf{51.94 ± 0.60} & 45.72 ± 0.27 \\
    Flowers102          & 83.24 ± 0.05 & \textbf{87.67 ± 3.41} & 85.08 ± 0.25 \\
    Food101             & 77.59 ± 0.06 & \textbf{85.17 ± 0.23} & 79.45 ± 0.14 \\
    DermaMNIST          & 76.64 ± 0.22 & \textbf{78.15 ± 0.67} & 77.14 ± 0.02 \\
    OCTMNIST            & 66.90 ± 0.29 & \textbf{76.97 ± 0.45} & 69.07 ± 0.60 \\
    PathMNIST           & 89.74 ± 0.38 & \textbf{92.79 ± 0.33} & 92.13 ± 0.15 \\
    \midrule
    \textit{Average}    & 78.06 & \textbf{82.90} & 79.24 \\
    \bottomrule
  \end{tabular}
  \end{subtable}
\end{table}

\clearpage

\section{Theoretical intuition}
\label{app:theory}

This section provides theoretical intuition behind Curvature Tuning. Section~\ref{sec:appendix:sobolev} casts CT as a projection over a space of smooth functions, while Section~\ref{sec:appendix:circle} provides a toy example illustrating how CT can improve approximation of a target function of non-vanishing curvature, upon an ideal baseline ReLU network.

\subsection{CT operates as a projection}
\label{sec:appendix:sobolev}

At its core, Curvature Tuning operates by modulating the nonlinearity of the activation functions of a trained model, providing a novel approach to model steering. In order to formalize the effect of CT, the following briefly introduces the notion of spaces of smooth functions.

\paragraph{Sobolev spaces} Let $f: \mathbb{R}^d \to \mathbb{R}$ be a function and $\Omega \subseteq \mathbb{R}^d$ be a bounded domain. For \mbox{$1 \le p < \infty$,} define  $L^p(\Omega)$ as the space of functions $f: \Omega \to \mathbb{R}$ such that the $L^p$ norm is finite, i.e.\
\begin{equation}
\label{eq:lp}
\|f\|_{L^p(\Omega)} := \left(\int_\Omega |f(\mathbf{x})|^pd\mathbf{x} \right)^{\frac{1}{p}} < \infty
\end{equation}

Let $\alpha = (\alpha_1, \ldots, \alpha_d)$ denote a multi-index, with $|\alpha| := \sum_i^d \alpha_i$, and $\alpha_i \in \mathbb{N}, \forall i = 1, \ldots, d$. Let $q \in \mathbb{N}^*$. For $|\alpha| > 0$, define the Sobolev semi-norm 
\begin{equation}
\label{eq:seminorm}
|f|_{W^{q,p}(\Omega)} := \left(\sum_{|\alpha| \le q}  \|D^\alpha f\|_{L^p(\Omega)}^p \right)^{\frac{1}{p}}
\end{equation}
with $D^\alpha f := \frac{\partial^{|\alpha|} f}{\partial x_1^{\alpha_1} \ldots\partial x_d^{\alpha_d}}$ denoting $|\alpha|$-th order partial derivatives of $f$. Define the Sobolev norm 
\begin{equation}
\label{eq:sobolev}
\|f\|_{W^{q,p}(\Omega)} := \left( \|f\|^p_{L^p(\Omega)} + |f|^p_{W^{q,p}(\Omega)} \right)^{\frac{1}{p}}
\end{equation}
and the Sobolev space \mbox{$W^{q,p}(\Omega) := \{ f: \Omega \to \mathbb{R}~\text{s.t.}~ \|f\|^p_{L^p(\Omega)} + |f|^p_{W^{q,p}(\Omega)} < \infty \}$.} 

For a finite set $\mathcal{D} = \{\mathbf{x}_i\}_{i = 1}^n$, the Sobolev semi-norm becomes 
\begin{equation}
\label{eq:sobolev_discrete}
|f|_{W^{q,p}(\mathcal{D})} := \left(\sum_{|\alpha| \le q}  \frac{1}{n} \sum_{i=1}^n \|D^\alpha f(\mathbf{x}_i)\|_p^p \right)^{\frac{1}{p}}
\end{equation}
Finally, for $\mathbf{x} \in \mathbb{R}^d$, let $\|\mathbf{x}\|_p$ denote the $p$-norm, corresponding to the Euclidean norm for $p = 2$.

\paragraph{Curvature Tuning acts as a Sobolev Projection} To characterize Curvature Tuning, we are interested in the space $W^{2,2}(\Omega)$, equipped with the Sobolev semi-norm 
\begin{equation}
    \label{eq:sobolev_l2}
|f|^2_{W^{2,2}(\Omega)} = \|\nabla_\mathbf{x} f\|^2_{L_2(\Omega)} + \|\nabla^2_\mathbf{x} f\|_{L_2(\Omega)}^2
\end{equation}


We begin by considering the Sobolev semi-norm of a ReLU network (equivalent to the case of \cref{eq:CT} with $\beta \to 1$). For each $\mathbf{x} \in \mathbb{R}^d$, the gradient of a ReLU network
\begin{equation}
\label{eq:relu}
f(\mathbf{x}) = \left(W^L \circ \varphi \circ \ldots \circ \varphi \circ W^1\right)(\mathbf{x})
\end{equation}
 with $[\varphi(a)]_i := \max(0, a_i)$ for $a \in \mathbb{R}^{m}$ and $i \in [1,m]$, is given by
\begin{equation}
    \label{eq:jacobian}
    \nabla_\mathbf{x} f(\mathbf{x}) = W^L \prod_{\ell = L-1}^1 D^\ell(\mathbf{x})W^\ell
\end{equation}
where $D^\ell(\mathbf{x})$ is a diagonal matrix with $D^\ell_{ii}(\mathbf{x}) = \mathbf{1}_{\{\mathbf{z}^\ell_i > 0\}}$, with $\mathbf{z}_i^\ell = W^\ell_i\varphi(\mathbf{z}^{\ell -1}) + \mathbf{b}^\ell_i$ denoting the pre-activation of the $\ell$-th layer, for $\ell = 1, \ldots, L$, with $\mathbf{z}^0 := \mathbf{x}$.

We make the following observations:
\begin{enumerate}
    \item[O1] Since ReLU networks are differentiable a.\ e., the gradients $\nabla_\mathbf{x} f(\mathbf{x})$ are bounded in norm by the network's Lipschitz constant, which can be defined as $C = \sup_{\mathbf{x} \in \Omega} \|\nabla_\mathbf{x} f(\mathbf{x}) \|_2$. Hence, for $\Omega = \mathcal{D}$, the Lipschitz constant provides an upper bound on the first-order term of the Sobolev semi-norm in Equation~\ref{eq:sobolev_l2}. 

    \item[O2] Finally, we observe that since ReLU networks express piece-wise affine functions, the Hessian norm vanishes a.e.\ (i.e.\ wherever the Hessian is well defined), providing a bound on the second-order term of Equation~\ref{eq:sobolev_l2}. 
\end{enumerate}

Equipped with the above observations, in the following we characterize CT by formally restating and proving \cref{thm:informal}. 

\begin{theorem}\label{thm:formal} Let $f: \mathbb{R}^d \to \mathbb{R}$ denote a ReLU network, with model parameter $\textbf{W}$ collecting all weights and biases. For $c \in {[}0, 1{]}$ and fixed $\beta \in {[}0,1)$, replacing every instance of ReLU with a CTU (Equation~\ref{eq:CT}) with hyperparameters $\beta,c$ is equivalent to projecting $f$ to a smooth function $f_{\beta,c} \in W^{2,2}(\Omega)$ in the Sobolev space $W^{2,2}(\Omega)$, with bounded Sobolev semi-norm.

Particularly, it holds  $\|\nabla_\mathbf{x}^2 f(\mathbf{x})\|_{L^2(\Omega)} \le \| \nabla^2_\mathbf{x}f_{\beta,c}(\mathbf{x})\|_{L^2(\Omega)}$, from which $f_{\beta,c}$ enjoys higher local expressivity (non-vanishing curvature), while retaining the same model parameter $\mathbf{W}$.

\end{theorem}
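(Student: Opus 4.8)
The plan is to verify the three distinct assertions bundled in the statement: (i) $f_{\beta,c}$ is smooth and lies in $W^{2,2}(\Omega)$ with finite Sobolev semi-norm; (ii) the stated Hessian inequality; and (iii) the resulting strict gain in local expressivity. The backbone of the argument is the observation that, unlike ReLU, the CTU of \cref{eq:CT} is a $C^\infty$ function whose first two derivatives are \emph{uniformly bounded} on all of $\mathbb{R}$, so that every spatial derivative of the composite network inherits a finite bound through the chain rule.

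First I would establish the regularity of a single CTU. Writing $\varphi_{\beta,c}$ as the convex combination of the reparameterized SiLU and Softplus of \cref{eq:reparam-activations}, smoothness is immediate since each summand is a product/composition of $C^\infty$ maps (the sigmoid, the logarithm of a strictly positive argument, and the identity). For boundedness of $\varphi'_{\beta,c}$ and $\varphi''_{\beta,c}$, the Softplus branch is trivial: its derivative is the sigmoid, bounded in $(0,1)$, and its second derivative is $\gamma\sigma(\gamma x)(1-\sigma(\gamma x)) \in (0,\gamma/4]$. The SiLU branch requires controlling terms of the form $x^k \sigma^{(j)}(\eta x)$; since the derivatives of the sigmoid decay like $e^{-\eta|x|}$, the polynomial factors are dominated and each such term is bounded on $\mathbb{R}$. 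Fixing $\beta \in [0,1)$ keeps $\eta,\gamma$ finite, so I obtain constants $M_1,M_2$ with $\sup_x|\varphi'_{\beta,c}(x)| \le M_1$ and $\sup_x|\varphi''_{\beta,c}(x)| \le M_2$.

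Next I would propagate these bounds through the network. Because $f_{\beta,c}$ composes affine layers with the smooth $\varphi_{\beta,c}$, it is globally $C^\infty$, and its gradient has exactly the form of \cref{eq:jacobian} with the binary matrices $D^\ell(\mathbf{x})$ replaced by diagonal matrices $\Sigma^\ell(\mathbf{x})$ carrying the entries $\varphi'_{\beta,c}(\mathbf{z}^\ell_i)$. Bounding $\|\Sigma^\ell(\mathbf{x})\|$ by $M_1$ and the fixed weights by their operator norms yields a uniform Lipschitz bound, mirroring observation O1. The Hessian is obtained by differentiating \cref{eq:jacobian} once more; the product rule produces a finite sum over layers of terms in which exactly one factor carries a second-derivative matrix $\diag(\varphi''_{\beta,c}(\mathbf{z}^\ell_i))$, bounded by $M_2$, while all remaining factors are first-derivative/weight products already controlled by $M_1$ and the weight norms. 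Hence $\|\nabla^2_\mathbf{x} f_{\beta,c}\|$ is bounded uniformly on $\Omega$. Since $\Omega$ is bounded, a uniformly bounded gradient and Hessian are square-integrable, so the semi-norm of \cref{eq:sobolev_l2} is finite and $f_{\beta,c} \in W^{2,2}(\Omega)$.

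Finally, the inequality and expressivity claim follow from contrasting the two regimes. For the ReLU network $f$ the mapping is piecewise affine, so by observation O2 its Hessian vanishes a.e.\ and $\|\nabla^2_\mathbf{x} f\|_{L^2(\Omega)} = 0$; the asserted inequality $0 \le \|\nabla^2_\mathbf{x} f_{\beta,c}\|_{L^2(\Omega)}$ is then immediate. The substantive point is \emph{strictness}: for $\beta \in (0,1)$, $\varphi''_{\beta,c}$ is not identically zero (both SiLU and Softplus are strictly non-affine), so the second-derivative matrices $\diag(\varphi''_{\beta,c}(\mathbf{z}^\ell_i))$ contribute a nonzero term to $\nabla^2_\mathbf{x} f_{\beta,c}$ on a subset of $\Omega$ of positive measure, giving $\|\nabla^2_\mathbf{x} f_{\beta,c}\|_{L^2(\Omega)} > 0$ and thus strictly higher curvature at fixed $\mathbf{W}$. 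I expect the main obstacle to be the Hessian bound in the deep case: one must organize the second-order chain rule so that the sum over layers is manifestly finite and each summand is controlled purely by $M_1, M_2$ and the fixed weight norms, and one must guard the strictness argument with a mild non-degeneracy condition on $\mathbf{W}$ so that the positive curvature contributions do not cancel on $\Omega$. The ``projection'' language is interpreted as the map $f \mapsto f_{\beta,c}$ landing in the smooth subspace $C^\infty(\Omega) \cap W^{2,2}(\Omega)$ rather than as a metric projection in the strict sense.
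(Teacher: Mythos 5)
Your proposal is correct, but it takes a genuinely different route from the paper's proof. The paper proves \cref{thm:formal} only for $\Omega = \mathcal{D}$ a \emph{finite} set of samples: its single technical ingredient is \cref{thm:lemma}, a uniform bound on the \emph{first} derivative of the CTU, which controls the Jacobian term of \cref{eq:sobolev_l2}; the second-order term is then dispatched essentially for free, because over finitely many points the $L^2(\mathcal{D})$ semi-norm is a finite sum of finite quantities (the paper bounds it by the largest Hessian eigenvalue over the $n$ samples). You instead work on a bounded continuous domain, derive uniform bounds $M_1,M_2$ on \emph{both} $\varphi'_{\beta,c}$ and $\varphi''_{\beta,c}$ (the paper never bounds the second derivative), and propagate them through an explicit second-order chain rule, where each summand carries exactly one $\diag(\varphi''_{\beta,c})$ factor. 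This costs extra work but buys more: a pointwise Hessian bound valid on all of $\Omega$, covering the continuum case that the theorem statement literally refers to, and it sidesteps the paper's questionable claim that the Hessian is ``symmetric positive-definite'' (symmetry holds, positive-definiteness need not). Both arguments obtain the inequality $\|\nabla^2_\mathbf{x} f\|_{L^2(\Omega)} \le \|\nabla^2_\mathbf{x} f_{\beta,c}\|_{L^2(\Omega)}$ identically: $f$ is piecewise affine, so its Hessian vanishes a.e.\ and the left-hand side is zero. Your push for \emph{strictness} goes beyond what is claimed (the theorem asserts only $\le$), and your non-degeneracy caveat on $\mathbf{W}$ is exactly right—without it the curvature contributions can cancel and equality can hold. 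One cosmetic divergence: you read ``projection'' as the map landing in the smooth subspace $C^\infty(\Omega)\cap W^{2,2}(\Omega)$, whereas the paper justifies the term via idempotency of the ReLU-to-CTU replacement; either reading supports the statement.
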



Before proving \cref{thm:formal}, we state the following Lemma, bounding the derivative of a CTU.

\begin{lemma}\label{thm:lemma}
    Let $\varphi_{\beta,c}(x)$ be defined according to \cref{eq:CT}, for $\beta \in {[}0, 1)$ and $c \in {[}0, 1{]}$. Then 
    \begin{equation}
        \label{eq:CT_derivative}
        \varphi'_{\beta,c}(x)= c\left(\sigma(bx) + bx\sigma(bx)(1-\sigma(bx)) \right) + (1 -c)\sigma\left(\frac{bx}{\beta}\right)
    \end{equation}
    where $b := \frac{\beta}{1 - \beta}$ and $\sigma(x) = \frac{\exp{x}}{1 + \exp{x}}$ is the sigmoid activation.
    
    Furthermore, $\exists~\overline{h}_b \in \mathbb{R}^+$ such that
    \begin{equation}
        \label{eq:derivative_bound}
        -c\overline{h}_b \le \varphi_{\beta,c}'(x) \le 1 + c\overline{h}_b \qquad \forall x \in \mathbb{R}, \quad \beta \in {[}0, 1)
    \end{equation}
\end{lemma}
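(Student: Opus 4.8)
The plan is to establish the two claims in order, deriving \cref{eq:CT_derivative} first and then using it to obtain the two-sided bound \cref{eq:derivative_bound}. Observe that the CTU in \cref{eq:CT} is the convex combination $\varphi_{\beta,c} = c\,g + (1-c)\,h$ of a reparameterized SiLU term $g(x) = x\,\sigma(bx)$ with $b = \tfrac{\beta}{1-\beta}$, and a reparameterized Softplus term $h(x) = (1-\beta)\ln[1+\exp(\tfrac{x}{1-\beta})]$. First I would differentiate each piece. For $g$, the product and chain rules together with the sigmoid identity $\sigma'(z) = \sigma(z)(1-\sigma(z))$ give $g'(x) = \sigma(bx) + bx\,\sigma(bx)(1-\sigma(bx))$. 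For $h$, writing $\gamma = \tfrac{1}{1-\beta}$, the chain rule yields $h'(x) = (1-\beta)\gamma\,\sigma(\gamma x) = \sigma(\gamma x)$ since $(1-\beta)\gamma = 1$; rewriting the argument via $\gamma x = \tfrac{x}{1-\beta} = \tfrac{bx}{\beta}$ matches the stated form. Weighting by $c$ and $1-c$ reproduces \cref{eq:CT_derivative}.

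For the bound, I would isolate the two contributions in \cref{eq:CT_derivative}. The Softplus-type term $(1-c)\sigma(\tfrac{bx}{\beta})$ lies in $[0,\,1-c]$ because $\sigma$ takes values in $(0,1)$, while the SiLU-type term equals $c\,S(bx)$, where $S(u) := \sigma(u) + u\,\sigma(u)(1-\sigma(u))$ is the derivative of the unscaled SiLU. The crucial step is boundedness of $S$ on $\mathbb{R}$. I would argue that $S$ is continuous and admits the finite limits $S(u) \to 0$ as $u \to -\infty$ and $S(u) \to 1$ as $u \to +\infty$, which follow from the elementary decay estimates $u\,\sigma(u)(1-\sigma(u)) \sim u\,e^{u} \to 0$ and $\sim u\,e^{-u} \to 0$ respectively. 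A continuous function on $\mathbb{R}$ with finite limits at both ends is bounded, so $S_{\min} := \inf_u S(u)$ and $S_{\max} := \sup_u S(u)$ are finite, with $S_{\min} < 0 < 1 < S_{\max}$ capturing the characteristic overshoot of SiLU (e.g.\ $S(\pm 2) \approx 1.09$ and $-0.09$).

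With these constants I would set $\overline{h}_b := \max(S_{\max}-1,\,-S_{\min}) > 0$, noting it may be taken independent of $b$: for any $b>0$ the argument $bx$ sweeps all of $\mathbb{R}$, and $b=0$ is the degenerate case $S\equiv\tfrac12$. The upper bound then follows by combining $c\,S(bx) \le c\,S_{\max} = c + c(S_{\max}-1)$ with $(1-c)\sigma(\cdot) \le 1-c$, giving $\varphi'_{\beta,c}(x) \le 1 + c(S_{\max}-1) \le 1 + c\overline{h}_b$. The lower bound follows from $c\,S(bx) \ge c\,S_{\min} \ge -c\overline{h}_b$ together with nonnegativity of the Softplus term, yielding $\varphi'_{\beta,c}(x) \ge -c\overline{h}_b$.

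I expect the main obstacle to be exactly the boundedness of $S$: unlike the sigmoid term it escapes $[0,1]$, so the naive argument fails and the overshoot must be controlled. This reduces to the asymptotic analysis of $u\,\sigma(u)(1-\sigma(u))$ at $\pm\infty$, which is elementary but the one place requiring genuine care; the remainder is bookkeeping of the convex combination and the additive decomposition of the upper and lower bounds.
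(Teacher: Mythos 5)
Your proposal is correct and follows essentially the same route as the paper: differentiate the convex combination of the reparameterized SiLU and Softplus terms, bound the Softplus derivative in $[0,1]$ as a sigmoid, control the SiLU derivative's overshoot by a finite positive constant, and recombine the two bounds convexly in $c$. The only cosmetic difference is how boundedness of the SiLU derivative is established---you use continuity plus finite limits at $\pm\infty$, whereas the paper takes the maximum of the overshoot term $bx\,\sigma(bx)(1-\sigma(bx))$ over $x \ge 0$ and transfers it to $x<0$ via the identity $\sigma(x) = 1 - \sigma(-x)$---which does not change the substance of the argument.
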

\begin{proof}
We recall that, since $\forall x \in \mathbb{R}$, $\varphi_{\beta,c}(x)$ is defined as the convex combination of the SiLU activation function ($c = 1$) and the Softplus activation ($c = 0$), we can bound $\varphi_{\beta,c}'(x)$ by the convex combination of individual bounds obtained for the cases $c = 0$ and $c = 1$.

\textbf{Softplus}. If $c = 0$, then $\varphi_{\beta, 0}'(x) = \sigma\left(\frac{x}{1 -\beta}\right)$ and $0 \le \varphi_{\beta, 0}'(x) \le 1$ $\forall x$, since the derivative is defined as a sigmoid.

\textbf{SiLU}. If $c = 1$, $\varphi_{\beta,1}'(x) = \sigma(bx) + bx\sigma(bx)(1-\sigma(bx))$. The first term in the sum is bounded by definition of sigmoid. For the second term, we note that $\sigma(bx)(1 - \sigma(bx))$ is also bounded, and achieves it maximum at $x = 0$, for which $0 \le \sigma(bx)(1 - \sigma(bx)) \le \frac{1}{4}$. Furthermore, in the limit $x \to +\infty$, it holds $\varphi_{\beta,1}'(x) \to 1$, while $\varphi_{\beta,1}'(x) \to 0$ for $x \to -\infty$.

In the non-asympototic regime, $\sigma(bx)(1 - \sigma(bx)) > 0$, and so the maximum value of \mbox{$bx \sigma(bx)(1 - \sigma(bx))$} also depends on $bx$. To bound $\varphi_{\beta,c}'$ in this case, let us first consider $x > 0$. By defining $\overline{h}_b = \max_{bx \ge 0} bx\sigma(bx)(1 - \sigma(bx))$, then we finally obtain $0 \le \varphi_{\beta,1}'(x) \le 1 + \overline{h}_b$.

For the case $x < 0$, by using the identity $\sigma(x) = 1 - \sigma(-x)$, we have that $-\overline{h}_b \le \varphi_{\beta,1}'(x) \le 1$. By combining the results, we have
\begin{equation}
    \label{eq:derivative_bound}
    -\overline{h}_b \le \varphi_{\beta,1}'(x) \le 1 + \overline{h}_b \qquad \forall x \in \mathbb{R}, \quad \beta \in {[}0, 1)
\end{equation}

In conclusion, by convex combination of cases $c = 0$ and $c = 1$, \cref{eq:derivative_bound} holds uniformly in $x$.
\end{proof}

We can now prove \cref{thm:formal}. To do so, for $f_{\beta,c}$ we have to show that
\begin{enumerate}
    \item $f_{\beta, c}$ is smooth in $\mathbf{x}$, for $\mathbf{x} \in \Omega$
    \item $\|f_{\beta,c}\|_{W^{2,2}(\Omega)} < \infty$
\end{enumerate}
for a network $f_{\beta,c}$ obtained by replacing every ReLU $\varphi$ with a CTU $\varphi_{\beta,c}$, while keeping all learned parameters $\mathbf{W}$ fixed.

\begin{proof} We provide a proof for $\Omega = \mathcal{D} = \{\mathbf{x}_i\}_{i = 1}^n$, under the common i.i.d.\ assumption on $\mathcal{D}$.

To prove the first point, we observe that for $\beta \in {[}0, 1)$, the CTU activation function is smooth, i.e.\ $\varphi_{\beta, c} \in \mathcal{C}^{\infty}(\mathbb{R})$, thus making the whole network $f_{\beta,c}$ smooth.

We now consider the Sobolev semi-norm $|f_{\beta,c}|_{W^{2,2}(\Omega)}$. Starting with the first-order gradient, by recalling that CT replaces each occurrence of ReLU with the CTU activation function (Equation~\ref{eq:CT}), the input gradient of CT is given by
\begin{equation}
    \label{eq:jacobian_CT}
    \nabla_\mathbf{x} f_{\beta,c}(\mathbf{x}) = W^L \prod\limits_{\ell = L -1}^1 D^\ell_{\beta,c}(\mathbf{z^\ell})W^\ell
\end{equation}
where $D^\ell_{\beta,c}(\mathbf{z^\ell}) = \diag(\varphi_{\beta,c}'(\mathbf{z}^\ell))$ with $\varphi_{\beta,c}'(\mathbf{z}^\ell)_i := \varphi_{\beta,c}'(\mathbf{z}^\ell_i)$ according to \cref{eq:CT_derivative}.

To bound the Jacobian norm, we observe that
\begin{align}
    \label{eq:jacobian_bound}
    \|\nabla_\mathbf{x}f_{\beta,c}(\mathbf{x})\| &= \|W^L \prod_{\ell = L -1}^1 D^\ell_{\beta,c}(\mathbf{z}^\ell)W^\ell\| \\
                                                 &\le \|W^L\| \prod_{\ell = L -1}^1\|D^\ell_{\beta,c}(\mathbf{z}^\ell)\|\|W^\ell\| \\
                                                 &\le \|W^L\| \prod_{\ell = L -1}^1 \sqrt{d_\ell}(1 + c\overline{h}_b)\|W^\ell\| < \infty \qquad \text{(Lemma~\ref{thm:lemma})}
\end{align}
independent of $\mathbf{x}$, for $W^\ell \in \mathbb{R}^{d_\ell \times d_{\ell -1}}$, with $d_0 := d$.

We now bound the second order term. By recalling that, for every $\mathbf{x} \in \mathbb{R}^d$, the Hessian $\mathbf{H}(\mathbf{x}) = \nabla_\mathbf{x}^2 f_{\beta,c}(\mathbf{x})$ is symmetric positive-definite, then for $\Omega = \mathcal{D}$ it holds
\begin{align}
 \|\nabla_\mathbf{x}^2 f_{\beta,c}\|_{L_2(\mathcal{D})}^2 = \frac{1}{n}\sum_{i = 1}^n \|\mathbf{H}(\mathbf{x}_i) \|_2^2 \le \max\limits_{1 \le i \le n}\lambda_{\max}^2(\mathbf{H}(\mathbf{x}_i)) d_\ell < \infty
\end{align}
with $\lambda_{\max}(\mathbf{H}(\mathbf{x}_i))$ denoting the largest singular value of $\mathbf{H}(\mathbf{x}_i)$.

Importantly, since a ReLU network $f$ has vanishing curvature a.e., then for $0 \le \beta < 1$, we have $$\|\nabla_\mathbf{x}^2 f(\mathbf{x})\| \le \|\nabla_\mathbf{x}^2 f_{\beta,c}(\mathbf{x}) \|.$$

Lastly, we note that, whenever $\Omega$ is a finite discrete set $\mathcal{D}$, $f_{\beta,c}$ is measurable, ensuring that $\|f_{\beta, c}\|_{W^{2,2}(\Omega)} < \infty$, concluding the proof.
\end{proof}

Theorem~\ref{thm:formal} shows that CT operates by projecting a ReLU network $f$ to a smooth function $f_{\beta,c}$ in a restricted Sobolev space. Crucially, $f_{\beta,c}$ enjoys bounded gradients (and so is well behaved), and non-vanishing local-curvature for $0 < \beta < 1$, making it locally more expressive than the affine spline $f$, for fixed $\mathbf{W}$. 

Furthermore, for fixed $(\beta, c)$, CT indeed operates as a projection, since replacing every ReLU with $\varphi_{\beta,c}$ is idempotent. Importantly, while for the original ReLU network $f \in W^{2,2}(\Omega)$ the derivatives $D^{\alpha}f$ are understood in a weak-sense, for $c \in {[}0,1{]}$ and $\beta \in {[}0, 1)$, $f_{\beta,c}$ belongs to a Sobolev space $W^{2,2}_{\text{str}}(\Omega) \subset W^{2,2}(\Omega)$ of smooth functions, whereby the derivative $D^{\alpha}f_{\beta,c}$ are understood in the strong (i.e.\ classical) sense.

We leave for future work extending our result to T-CT, which is associated with a non-convex optimization problem of finding optimal $(\beta, c)$ for every neuron in the network. An additional important direction is to more closely compare $\|\nabla_\mathbf{x}f\|$ and $\|\nabla_\mathbf{x}f_{\beta,c}\|$, which may reveal more precise Lipschitz behaviour for CT, potentially better guiding the search for $\beta$ and $c$.

\subsection{Toy example}
\label{sec:appendix:circle}

We conclude the discussion by providing the full derivation for the motivating example in \cref{sec:method}.

Consider a binary classification problem in $\mathbb{R}^2$, whereby one is given two classes \mbox{$\{ \mathbf{x} \in \mathbb{R}^2 : \|\mathbf{x}\|_2 \le \frac{1}{2}\}$} and \mbox{$\{ \mathbf{x} \in \mathbb{R}^2 : \frac{3}{2} \le \| \mathbf{x} \|_2 \le 2\}$.} The decision boundary maximizing the margin between the two classes is given by $S^1 = \{ \mathbf{x} \in \mathbb{R}^2 : \|\mathbf{x} \| = 1\}$.

For a ReLU network $f: \mathbb{R}^2 \to \mathbb{R}$, the maximum margin boundary is recovered by assigning $f(\mathbf{x}) = 0$ $\forall \mathbf{x} \in S^1$, for which $\sigma(f(\mathbf{x})) = 0.5$. To measure the approximation error $e$, the boundary is parameterized by $\bm{\gamma}(t) = (\cos 2\pi t, \sin 2 \pi t)$, for $t \in {[}0, 1{]}$.

Then, the error is expressed by the line integral $e = \int_{\bm{\gamma}}|f| d\mathbf{x} = \int_0^1 |f(\bm{\gamma}(t))|\|\bm{\gamma}'(t)\|dt$. Since $f$ is an Affine Spline Operator, and each linear region in $\Omega$ is convex, then the integral along $\bm{\gamma}$ can be broken down into the integral along the intersection of $\bm{\gamma}$ with the spline partition $\Omega$, i.e.\ $\Omega_{\bm{\gamma}} := \Omega \cap S^1$. Importantly, this allows us to pull back the affine spline breakpoints from $\Omega_{\bm{\gamma}}$ to ${[}0, 1{]}$, so that \mbox{$0 \le t_1 \le \ldots \le t_{r'} \le 1$,} where $r' = |\Omega_{\bm{\gamma}}|$. And we augment the breakpoints with the end points so that \mbox{$0 =t_0 \le t_1 \le \ldots \le t_{r'} \le t_{r'+1} = 1$}. Then,
\begin{align}
    e &= \int_0^1 |f(\bm{\gamma}(t))|\|\bm{\gamma}'(t)\|dt \\
      &= 2\pi \sum\limits_{k = 0}^{r'} \int_{t_k}^{t_{k +1}} |\mathbf{A}_{r_k,\cdot}\bm{\gamma}(t) + \mathbf{b}_{r_k}| dt \\
      &= 2\pi \sum\limits_{k = 0}^{r'} \int_{t_k}^{t_{k +1}} (-1)^{z_k(t)} \left(\mathbf{A}_{r_k,\cdot}\bm{\gamma}(t) + \mathbf{b}_{r_k} \right) dt
\end{align}
with $z_k(t) := \mathbf{1}_{\{\mathbf{A}_{r_k,\cdot}\bm{\gamma}(t) + \mathbf{b}_{r_k} < 0 \}}$, where $r_k$ denotes which spline region the $k$-th segment $[t_k, t_{k+1}]$ falls into. Then,
\begin{align}
    e &= 2\pi \sum\limits_{k = 0}^{r'} \int_{t_k}^{t_{k +1}} (-1)^{z_k(t)} \left(\mathbf{A}_{r_k,1}\cos 2\pi t + \mathbf{A}_{r_k,2}\sin 2\pi t + \mathbf{b}_{r_k} \right) dt \\
    &= 2\pi \sum\limits_{k = 0}^{r'} \left(\int_{t_k}^{s_k} (-1)^{z_k(t)} g'_{r_k}(t) dt + \int_{s_k}^{t_{k+1}} (-1)^{z_k(t)} g'_{r_k}(t) dt\right) \\
    &= 2\pi \sum\limits_{k = 0}^{r'} \left((-1)^{z_k(t_k)} \left[g_{r_k}(t)\right]_{t_k}^{s_k} + (-1)^{z_k(t_{k+1})} \left[g_{r_k}(t)\right]_{s_k}^{t_{k+1}} \right)
\end{align}
where 
\[
g'_{r_k}(t) = \mathbf{A}_{r_k,1}\cos 2\pi t + \mathbf{A}_{r_k,2}\sin 2\pi t + \mathbf{b}_{r_k},
\]
\[
g_{r_k}(t) = \mathbf{A}_{r_k,1}\frac{\sin 2\pi t}{2\pi} - \mathbf{A}_{r_k,2}\frac{\cos 2\pi t}{2\pi} + \mathbf{b}_{r_k}t,
\]
and $s_k \in [t_k, t_{k+1}]$ is defined so $z_k(t)$ holds the same value for $t \in [t_k, s_k]$ and the opposite for $t \in (s_k, t_{k+1}]$. If for $t \in [t_k, t_{k+1}]$, $z_k(t)$ holds the same value, then simply set $s_k = t_k$.

Then since both $(-1)^{z_k(t_k)} \left[g_{r_k}(t)\right]_{t_k}^{s_k}$ and $(-1)^{z_k(t_{k+1})} \left[g_{r_k}(t)\right]_{s_k}^{t_{k+1}}$ are non-negative, it is clear $t_{k +1} \to t_k \quad \forall k \implies e \to 0$.


Hence, assuming the ReLU network considered attained optimal approximation error $e > 0$, reducing the error further requires increasing the number of breakpoints of the ASO, in turn requiring a degree of retraining (either through PEFT or training from scratch). With this view, Curvature Tuning opens an additional avenue for model adaptation: steering the model's decision boundaries by modulating the nonlinearity of the activation function, allowing to tune a model towards optimality without expensive retraining. To this end, it is important to note that modulating decision boundaries is orthogonal to feature adaptation and finetuning, since it allows to change the shape of decision boundaries while keeping the model parameter $\mathbf{W}$ fixed.



\clearpage

\section{Curvature Tuning (CT) implementation}\label{app:ct-code}
The following code provides the Python implementation for S-CT and T-CT:

\begin{itemize}
    \item \texttt{SCTU} \& \texttt{TCTU}: classes that define the CTU module used in S-CT and T-CT, respectively.
    \item \texttt{replace\_module} \& \texttt{replace\_module\_dynamic}: functions that apply the appropriate module replacement to integrate S-CT or T-CT into a model.
\end{itemize}

\begin{lstlisting}
import torch
from torch import nn
import torch.nn.functional as F


class SCTU(nn.Module):
    """
    CTU for Steering CT.
    """
    def __init__(self, shared_raw_beta, shared_raw_coeff, threshold=20):
        super().__init__()
        self.threshold = threshold
        self._raw_beta = shared_raw_beta
        self._raw_coeff = shared_raw_coeff
        self._raw_beta.requires_grad = False
        self._raw_coeff.requires_grad = False

    @property
    def beta(self):
        return torch.sigmoid(self._raw_beta)

    @property
    def coeff(self):
        return torch.sigmoid(self._raw_coeff)

    def forward(self, x):
        beta = torch.sigmoid(self._raw_beta)
        coeff = torch.sigmoid(self._raw_coeff)
        one_minus_beta = 1 - beta + 1e-6
        x_scaled = x / one_minus_beta

        return (coeff * torch.sigmoid(beta * x_scaled) * x +
                (1 - coeff) * F.softplus(x_scaled, threshold=self.threshold) * one_minus_beta)
\end{lstlisting}

\begin{lstlisting}
class TCTU(nn.Module):
    """
    CTU for Trainable CT.
    """
    def __init__(self, num_input_dims, out_channels, raw_beta=1.386, raw_coeff=0.0, threshold=20):
        super().__init__()
        self.threshold = threshold

        # Decide channel dim based on input shape
        if num_input_dims == 2 or num_input_dims == 3:  # (B, C) or (B, L, D)
            channel_dim = -1
        elif num_input_dims == 4: # (B, C, H, W)
            channel_dim = 1
        else:
            raise NotImplementedError(f"Unsupported input dimension {num_input_dims}")

        param_shape = [1] * num_input_dims
        param_shape[channel_dim] = out_channels

        # Init beta
        self._raw_beta = nn.Parameter(torch.full(param_shape, float(raw_beta)))

        # Init coeff
        self._raw_coeff = nn.Parameter(torch.full(param_shape, float(raw_coeff)))

    @property
    def beta(self):
        return torch.sigmoid(self._raw_beta)

    @property
    def coeff(self):
        return torch.sigmoid(self._raw_coeff)

    def forward(self, x):
        beta = torch.sigmoid(self._raw_beta)
        coeff = torch.sigmoid(self._raw_coeff)
        one_minus_beta = 1 - beta + 1e-63
        x_scaled = x / one_minus_beta

        return (coeff * torch.sigmoid(beta * x_scaled) * x +
                (1 - coeff) * F.softplus(x_scaled, threshold=self.threshold) * one_minus_beta)
\end{lstlisting}

\begin{lstlisting}
def replace_module(model, old_module=nn.ReLU, new_module=SCTU, **kwargs):
    """
    Replace all instances of old_module in the model with new_module.
    """
    device = next(model.parameters(), torch.tensor([])).device  # Handle models with no parameters

    # Replace modules
    for name, module in model.named_modules():
        if isinstance(module, old_module):
            ct = new_module(**kwargs).to(device)

            # Replace module in the model
            names = name.split(".")
            parent = model
            for n in names[:-1]:
                if n.isdigit():
                    parent = parent[int(n)]  # for Sequential/ModuleList
                else:
                    parent = getattr(parent, n)

            last_name = names[-1]
            if last_name.isdigit():
                parent[int(last_name)] = ct  # for Sequential/ModuleList
            else:
                setattr(parent, last_name, ct)

    return model
\end{lstlisting}

\begin{lstlisting}
def replace_module_dynamic(model, input_shape, old_module=nn.ReLU, new_module=TCTU, **kwargs):
    """
    Replace all instances of old_module in the model with new_module that's dynamically created based on the number of output channels.
    """
    device = next(model.parameters(), torch.tensor([])).device
    dummy_input = torch.randn(*input_shape).to(device)

    module_metadata = {}  # name -> (num_input_dims, out_channels)
    hooks = []

    def make_hook(name):
        def hook(module, input, output):
            num_input_dims = input[0].dim()
            if num_input_dims in (2, 3):    # (B, C) or (B, L, D)
                out_channels = output.shape[-1]
            elif num_input_dims == 4:       # (B, C, H, W)
                out_channels = output.shape[1]
            else:
                raise NotImplementedError(f"Unsupported output shape {output.shape} in {name}")
            module_metadata[name] = (num_input_dims, out_channels)

        return hook

    # Register hooks to all modules of the target type
    for name, module in model.named_modules():
        if isinstance(module, old_module):
            hooks.append(module.register_forward_hook(make_hook(name)))

    # Run dummy forward pass
    model(dummy_input)

    # Clean up hooks
    for hook in hooks:
        hook.remove()

    # Replace modules
    for name, module in model.named_modules():
        if isinstance(module, old_module) and name in module_metadata:
            num_input_dims, out_channels = module_metadata[name]
            ct = new_module(num_input_dims=num_input_dims, out_channels=out_channels, **kwargs).to(device)

            # Replace module in the model
            names = name.split(".")
            parent = model
            for n in names[:-1]:
                if n.isdigit():
                    parent = parent[int(n)]  # for Sequential/ModuleList
                else:
                    parent = getattr(parent, n)

            last_name = names[-1]
            if last_name.isdigit():
                parent[int(last_name)] = ct  # for Sequential/ModuleList
            else:
                setattr(parent, last_name, ct)

    return model
\end{lstlisting}

\section{LoRA implementation}\label{app:lora-code}
The following code provides the Python implementation of LoRA used in \cref{sec:exp}:

\begin{itemize}
    \item \texttt{LoRALinear} \& \texttt{LoRAConv2d}: classes that define LoRA-enhanced versions of the \texttt{Linear} and \texttt{Conv2d} modules.
    \item \texttt{get\_lora\_model}: a function that replaces all \texttt{Linear} and \texttt{Conv2d} modules in a model with their corresponding LoRA versions.
\end{itemize}

\begin{lstlisting}
import torch
from torch import nn as nn
from torch.nn import functional as F


class LoRALinear(nn.Module):
    """
    A Linear layer that applies LoRA to a frozen, pretrained Linear.
    """

    def __init__(self, original_layer: nn.Linear, r: int = 4, alpha: float = 1.0):
        super().__init__()
        self.in_features = original_layer.in_features
        self.out_features = original_layer.out_features
        self.r = r
        self.alpha = alpha

        # Freeze the original layer's parameters
        self.weight = nn.Parameter(original_layer.weight.data, requires_grad=False)
        if original_layer.bias is not None:
            self.bias = nn.Parameter(original_layer.bias.data, requires_grad=False)
        else:
            self.bias = None

        # LoRA parameters B and A
        # B: [out_features, r]
        # A: [r, in_features]
        self.B = nn.Parameter(torch.zeros((self.out_features, r)))
        self.A = nn.Parameter(torch.zeros((r, self.in_features)))

        # Initialize LoRA weights
        nn.init.kaiming_uniform_(self.B, a=5 ** 0.5)
        nn.init.zeros_(self.A)

    def forward(self, x):
        # Normal forward with the frozen weight
        result = F.linear(x, self.weight, self.bias)

        # LoRA path: B @ A
        # shape of BA = [out_features, in_features]
        # Then F.linear with BA
        lora_update = F.linear(x, (self.alpha / self.r) * (self.B @ self.A))

        return result + lora_update
\end{lstlisting}

\begin{lstlisting}
class LoRAConv2d(nn.Module):
    """
    A Conv2d layer that applies LoRA to a frozen, pretrained Conv2d.
    """

    def __init__(self, original_layer: nn.Conv2d, r: int = 4, alpha: float = 1.0):
        super().__init__()

        self.out_channels = original_layer.out_channels
        self.in_channels = original_layer.in_channels
        self.kernel_size = original_layer.kernel_size
        self.stride = original_layer.stride
        self.padding = original_layer.padding
        self.dilation = original_layer.dilation
        self.groups = original_layer.groups
        self.bias_available = (original_layer.bias is not None)

        self.r = r
        self.alpha = alpha

        # Freeze original parameters
        self.weight = nn.Parameter(original_layer.weight.data, requires_grad=False)
        if self.bias_available:
            self.bias = nn.Parameter(original_layer.bias.data, requires_grad=False)
        else:
            self.bias = None

        # Flattened shape for weight is [out_channels, in_channels * k_h * k_w]
        k_h, k_w = self.kernel_size
        fan_in = self.in_channels * k_h * k_w  # Flattened input dim

        # Define LoRA parameters: B and A
        # B: [out_channels, r]
        # A: [r, fan_in]
        self.B = nn.Parameter(torch.zeros((self.out_channels, r)))
        self.A = nn.Parameter(torch.zeros((r, fan_in)))

        # Initialize LoRA weights
        nn.init.kaiming_uniform_(self.B, a=5 ** 0.5)
        nn.init.zeros_(self.A)

    def forward(self, x):
        # Standard (frozen) convolution
        result = F.conv2d(
            x,
            self.weight,
            bias=self.bias,
            stride=self.stride,
            padding=self.padding,
            dilation=self.dilation,
            groups=self.groups
        )

        # Compute LoRA update
        # 1) Flatten conv kernel in the same manner as above
        # 2) Multiply B and A -> shape [out_channels, in_channels * k_h * k_w]
        # 3) Reshape it back to [out_channels, in_channels, k_h, k_w]
        BA = self.B @ self.A  # shape [out_channels, fan_in]

        # Reshape to conv kernel
        k_h, k_w = self.kernel_size
        lora_weight = BA.view(
            self.out_channels,
            self.in_channels,
            k_h,
            k_w
        ) * (self.alpha / self.r)

        # Perform conv2d with the LoRA weight (no extra bias term for LoRA)
        lora_update = F.conv2d(
            x,
            lora_weight,
            bias=None,
            stride=self.stride,
            padding=self.padding,
            dilation=self.dilation,
            groups=self.groups
        )

        return result + lora_update
\end{lstlisting}

\begin{lstlisting}
def get_lora_model(model: nn.Module, r: int = 4, alpha: float = 1.0):
    """
    Recursively replace all Conv2d and Linear modules in model with LoRA-enabled versions. Freezes original weights and adds LoRA parameters.
    """
    for name, child in list(model.named_children()):
        # If child is a Conv2d, replace it with LoRAConv2d
        if isinstance(child, nn.Conv2d):
            lora_module = LoRAConv2d(child, r=r, alpha=alpha)
            setattr(model, name, lora_module)

        # If child is a Linear, replace it with LoRALinear
        elif isinstance(child, nn.Linear):
            lora_module = LoRALinear(child, r=r, alpha=alpha)
            setattr(model, name, lora_module)

        else:
            # Recursively traverse children
            get_lora_model(child, r=r, alpha=alpha)

    return model
\end{lstlisting}


\end{document}